\DeclareRobustCommand\onedot{\futurelet\@let@token\@onedot}
\def\@onedot{\ifx\@let@token.\else.\null\fi\xspace}
\DeclareMathOperator*{\argmin}{arg\,min}
\long\def\ignore#1{}
\def\calE{{\cal E}}
\def\calP{{\cal P}}
\def\calU{{\cal U}}
\def\calX{{\cal X}}
\newcommand{\eqdef}{{\stackrel{\mbox{\tiny \tt ~def~}}{=}}}
\newcommand{\norm}[2][]{\|{#2}\|_{{#1}}}
\newcommand{\scal}[2]{\left\langle #1,#2 \right\rangle}
\newcommand{\setalglineno}[1]{%
  \setcounter{ALC@line}{\numexpr#1-1}}
\newtheorem{theorem}{Theorem}
\newtheorem{lemma}{Lemma}
\newtheorem{proposition}{Proposition}
\newtheorem{corollary}{Corollary}
\begin{document}

%%%%%%%%% TITLE
\title{Total variation on a tree}

\date{}

\author{Vladimir Kolmogorov\thanks{Vladimir Kolmogorov, Michal
    Rolinek: Institute of Science and Technology Austria (IST
    Austria), 3400 Klosterneuburg, Austria. E-mail:
    \texttt{michalrolinek@gmail.com}, \texttt{vnk@ist.ac.at}. They are
    supported by the European Research Council under the European
    Unions Seventh Framework Programme (FP7/2007-2013)/ERC grant
    agreement no 616160.}, and Thomas Pock\thanks{Thomas Pock:
    Institute for Computer Graphics and Vision, Graz University of
    Technology, 8010 Graz, Austria. Digital Safety \& Security
    Department, AIT Austrian Institute of Technology GmbH, 1220
    Vienna, Austria. E-mail: \texttt{pock@icg.tugraz.at}. Thomas Pock
    is supported by the European Research Council under the Horizon
    2020 program, ERC starting grant agreement 640156.}, and Michal
  Rolinek\footnotemark[1]}

\maketitle
%\thispagestyle{empty}

%%%%%%%%% ABSTRACT
\begin{abstract}
  We consider the problem of minimizing the continuous valued total
  variation subject to different unary terms on trees and propose fast
  direct algorithms based on dynamic programming to solve these
  problems. We treat both the convex and the non-convex case and
  derive worst case complexities that are equal or better than
  existing methods.  We show applications to total variation based 2D
  image processing and computer vision problems based on a Lagrangian
  decomposition approach.  The resulting algorithms are very
  efficient, offer a high degree of parallelism and come along with
  memory requirements which are only in the order of the number of
  image pixels.
\end{abstract}

%%%%%%%%% BODY TEXT

%%%%%%%%%%%%%%%%%%%%%%%%%%%%%%%%%%%%%%%%%%%%%%%%%%%%%%%%%%%%%%%%%%%%%%%%%%%%%%%%%%%%%%%%%%
%%%%%%%%%%%%%%%%%%%%%%%%%%%%%%%%%%%%%%%%%%%%%%%%%%%%%%%%%%%%%%%%%%%%%%%%%%%%%%%%%%%%%%%%%%
%%%%%%%%%%%%%%%%%%%%%%%%%%%%%%%%%%%%%%%%%%%%%%%%%%%%%%%%%%%%%%%%%%%%%%%%%%%%%%%%%%%%%%%%%%
%%%%%%%%%%%%%%%%%%%%%%%%%%%%%%%%%%%%%%%%%%%%%%%%%%%%%%%%%%%%%%%%%%%%%%%%%%%%%%%%%%%%%%%%%%
%%%%%%%%%%%%%%%%%%%%%%%%%%%%%%%%%%%%%%%%%%%%%%%%%%%%%%%%%%%%%%%%%%%%%%%%%%%%%%%%%%%%%%%%%%
%%%%%%%%%%%%%%%%%%%%%%%%%%%%%%%%%%%%%%%%%%%%%%%%%%%%%%%%%%%%%%%%%%%%%%%%%%%%%%%%%%%%%%%%%%
%%%%%%%%%%%%%%%%%%%%%%%%%%%%%%%%%%%%%%%%%%%%%%%%%%%%%%%%%%%%%%%%%%%%%%%%%%%%%%%%%%%%%%%%%%
%%%%%%%%%%%%%%%%%%%%%%%%%%%%%%%%%%%%%%%%%%%%%%%%%%%%%%%%%%%%%%%%%%%%%%%%%%%%%%%%%%%%%%%%%%
%%%%%%%%%%%%%%%%%%%%%%%%%%%%%%%%%%%%%%%%%%%%%%%%%%%%%%%%%%%%%%%%%%%%%%%%%%%%%%%%%%%%%%%%%%

\section{Introduction}\label{sec:intro}
Consider the following problem:
\begin{equation}
\min_{x\in \mathbb R^n} f(x),\qquad f(x)=\sum_{i\in V} f_i(x_i) + \sum_{(i,j)\in E} f_{ij}(x_j-x_i) 
\label{eq:main}
\end{equation}
where $(V,E)$ is a (directed) tree with $n=|V|$ nodes, unary terms $f_i:\mathbb R\rightarrow\mathbb R$
are continuous functions, and pairwise terms are given by
\begin{equation}
f_{ij}(z)=\min\{w_{ij}\cdot |z|,C_{ij}\}
\label{eq:fij}
\end{equation}
with $w_{ij}\ge 0$. This is known as a ``truncated TV regularizer'';
if $C_{ij}=+\infty$ then it is called a ``TV regularizer''.
To simplify the presentation, we make the following assumptions:
\begin{itemize}
\item {\em Function $f$ is bounded from below and attains a minimum at some point $x\in\mathbb R^n$.}
\item {\em All terms $f_i$, $f_{ij}$ are continuous piecewise-linear or piecewise-quadratic functions with a finite number of breakpoints.}
%\item {\em The set $\arg\min \{f(x)\:|\:x\in\mathbb R^n\}$ is a non-empty bounded subset of $\mathbb R^n$.}
\end{itemize}

%Assume that $V=[n]:=\{1,\ldots,n\}$ and $E=
%Our main focus will be on the case when $(V,E)$ is a chain; the complexities
%stated below are for chain graphs. Many of the algorithms that we describe, however, can extended to trees at the
%cost of an extra ``$\log n$'' factor in the complexity; this is discussed in the respective sections.
%
%
%Our main focus will be on the case when graph $(V,E)$ is a chain:
%$V=[n]$ and $E=\{(i,i+1)\:|\:i\in[n-1]\}$, with $n$ being the root;
%the complexities stated below are for this case.
%
We will consider the following cases.

\paragraph{Non-convex case} 
Here we will assume that unary terms $f_i$ are piecewise-linear functions with $O(1)$ breakpoints
(not necessarily convex).
We will present a dynamic programming algorithm for minimizing function~\eqref{eq:main} that
works by passing {\em messages}, which are piecewise-linear functions $\mathbb R\rightarrow\mathbb R$. 
If $C_{ij}=+\infty$ for all $(i,j)\in E$ then we will prove that
the number of breakpoints in each message is at most $O(n)$, leading to complexity $O(n^2)$.
In the truncated TV case we do not have a polynomial bound.
Our tests, however, indicate that the algorithm is efficient in practice.
%the 
%assuming that the number of breakpoints of functions $f_i$ and $f_{ij}$ is bounded by a constant.
%Our tests indicate, however, that its empirical running time is better;
%we argue that it scales as $O(nR)$ where $R$ is the ``radius of interaction''
%of function $f$.

\paragraph{Convex case} Next, we will consider the case when all unary and pairwise terms $f_i, f_{ij}$
are convex functions (which means that $C_{ij}=+\infty$ for all $(i,j)\in E$). We will describe three algorithms: 
(i)~$O(n)$ algorithm for quadratic unaries on a chain. %, which is a slight modification of the algorithm by Johnson~\cite{Johnson2013}
(ii)~$O(n\log n)$ algorithm for piecewise-linear or piecewise-quadratic unaries on a tree.
(iii)~$O(n\log\log n)$ algorithm for piecewise-linear unaries on a chain.
In the last two cases we assume that the number of breakpoints in each term $f_i$ is $O(1)$.

\subsection{Related work} 
\paragraph{Non-convex case} 
In this case we show how to compute efficiently
{\em distance transforms} (or {\em min-convolutions}) for continuous piecewise-linear functions.
To our knowledge, the previous algorithmic work considered only distance transforms for discretized functions~\cite{Felzenszwalb:TCOMP12}.

\paragraph{Convex case on general graphs} Hochbaum showed~\cite{Hochbaum:ACM01} that problem~\eqref{eq:main}
on general graphs can be solved in polynomial time for several choices of convex unary functions.
The method works by reducing problem~\eqref{eq:main} to a sequence of optimization problems
with {\em binary} variables whose unary terms depend linearly on a parameter $\lambda$.
This reduction
has also appeared later in~\cite{Zalesky:JAM02,Darbon:JMIV:I,Chambolle:EMMCVPR05}.

Specializing Hochbaum's method to trees yields the following complexities:
(i) $O(n^2)$ for problems with quadratic unaries, assuming that the values of $\lambda$ are chosen as in~\cite{ES:76};
(ii) $O(n\log n)$ for piecewise-linear unaries with $O(1)$ breakpoints, assuming that the values of $\lambda$
are computed by a linear-time median algorithm (as discussed in Sec.~\ref{sec:Convex:PiecewiseLinear} for chains).
Instead of using a linear-time median algorithm, it is also possible to sort all breakpoints in $O(n\log n)$ time in a preprocessing step.

\paragraph{Convex case on chains} 
The convex case on a chain (or its continuous-domain version) has been
addressed in~\cite{MammenGeer:97, DaviesKovac:01, Hinterberger:03,
  Steidl:04, Grasmair:07, DeumbgenKovac:09, Condat:13, Johnson2013,
  BarberoSra}.  In particular, it has been shown that the problem with
quadratic unaries $f_i(z)=\frac{1}{2}(z-c_i)^2$ can be solved in
$O(n)$ time by the {\em taut string
  algorithm}~\cite{DaviesKovac:01,DeumbgenKovac:09} and by the method
of Johnson~\cite{Johnson2013}. Condat~\cite{Condat:13} presented an
$O(n^2)$ algorithm, which however empirically outperformed the method
in~\cite{DaviesKovac:01,DeumbgenKovac:09} according to the tests
in~\cite{Condat:13}. In~\cite{BarberoSra}, the authors proposed an
elegant derivation of the method of Condat~\cite{Condat:13} starting
from the tau string algorithm~\cite{DaviesKovac:01}, which in turn
also allows to use weighted total variation. Our $O(n)$ method for
this case can be viewed as a generalization to weighted total
variation and an alternative implementation of Johnson's algorithm
that requires less memory.

For the problem with piecewise-linear unaries $f_i(z)=|z-c_i|$ the
best known complexity was $O(n\log n)$, which is achieved either by Hochbaum's method (as discussed earlier),
or by the method in~\cite{DeumbgenKovac:09}. We improve this to $O(n\log\log n)$.

%In particular,~\cite{DaviesKovac:01,DeumbgenKovac:09} showed  that the {\em taut string algorithm}
%can solve the problem with quadratic unaries $f_i(z)=\frac{1}{2}(z-c_i)^2$ in  $O(n)$ time, and
%the problem with piecewise-linear unaries $f_i(z)=|z-c_i|$ in $O(n\log n)$ time (which we improve to $O(n\log\log n)$).
%Algorithms for the former case have also been given by Condat~\cite{Condat:13} and by Johnson~\cite{Johnson2013}
%with complexities $O(n^2)$ and $O(n)$ respectively. Our $O(n)$ algorithm is actually a minor modification of the Johnson's method:
%we show that the memory requirement can be slightly reduced.

We generally follow the derivation in~\cite{Johnson2013}, which is quite different from the one
in~\cite{DaviesKovac:01,DeumbgenKovac:09,Condat:13}.
We extend this derivation to non-smooth functions and to general trees.

\subsection{Applications}
In Sec.~\ref{sec:experiments} we show applications to continuous
valued total variation based 2D image processing and computer
vision. For this we adopt a Lagrangian approach to decompose the 2D
problems into a set of 1D problems. In the convex case, we solve the
resulting saddle point problems using accelerated primal-dual
algorithms, outperforming the state-of-the-art by about one order of
magnitude. In the non-convex case we solve a non-convex saddle-point
problem by again applying a primal-dual algorithm, which however has
no theoretical guarantee to converge. The resulting algorithms are
efficient, easy to parallelize and require memory only in the order of
the image pixels.

%%%%%%%%%%%%%%%%%%%%%%%%%%%%%%%%%%%%%%%%%%%%%%%%%%%%%%%%%%%%%%%%%%%%%%%%%%%%%%%%%%%%%%%%%%
%%%%%%%%%%%%%%%%%%%%%%%%%%%%%%%%%%%%%%%%%%%%%%%%%%%%%%%%%%%%%%%%%%%%%%%%%%%%%%%%%%%%%%%%%%
%%%%%%%%%%%%%%%%%%%%%%%%%%%%%%%%%%%%%%%%%%%%%%%%%%%%%%%%%%%%%%%%%%%%%%%%%%%%%%%%%%%%%%%%%%
%%%%%%%%%%%%%%%%%%%%%%%%%%%%%%%%%%%%%%%%%%%%%%%%%%%%%%%%%%%%%%%%%%%%%%%%%%%%%%%%%%%%%%%%%%
%%%%%%%%%%%%%%%%%%%%%%%%%%%%%%%%%%%%%%%%%%%%%%%%%%%%%%%%%%%%%%%%%%%%%%%%%%%%%%%%%%%%%%%%%%
%%%%%%%%%%%%%%%%%%%%%%%%%%%%%%%%%%%%%%%%%%%%%%%%%%%%%%%%%%%%%%%%%%%%%%%%%%%%%%%%%%%%%%%%%%
%%%%%%%%%%%%%%%%%%%%%%%%%%%%%%%%%%%%%%%%%%%%%%%%%%%%%%%%%%%%%%%%%%%%%%%%%%%%%%%%%%%%%%%%%%
%%%%%%%%%%%%%%%%%%%%%%%%%%%%%%%%%%%%%%%%%%%%%%%%%%%%%%%%%%%%%%%%%%%%%%%%%%%%%%%%%%%%%%%%%%
%%%%%%%%%%%%%%%%%%%%%%%%%%%%%%%%%%%%%%%%%%%%%%%%%%%%%%%%%%%%%%%%%%%%%%%%%%%%%%%%%%%%%%%%%%

\section{Preliminaries}\label{sec:background}
%Most algorithms will actually be formulated for a more general case
%when graph $(V,E)$ is a directed tree rather than a chain.
We assume that all edges in the tree $(V,E)$ are oriented toward the root $r\in V$.
Thus, every node $i\in V-\{r\}$ has exactly one parent edge $(i,j)\in E$.
%For a node $i\in V$ let $(V_i,E_i)$ be the subtree of $(V,E)$ rooted at $i$; one has e.g.\ $(V_r,E_r)=(V,E)$.
When specializing to a chain, we assume that $V=[n]:=\{1,\ldots,n\}$ and $E=\{(i,i+1)\:|\:i\in[n-1]\}$, with $n$ being the root.
%we then have $V_i=[i]$ for $i\in[n]$.

%Let us also assume that all edges in the tree $(V,E)$ are oriented towards the root $r\in V$.
%Thus, every node $i\in V-\{r\}$ has exactly one parent edge $(i,j)\in E$.
%For a node $i\in V$ let $(V_i,E_i)$ be the subtree of $(V,E)$ rooted at $i$; one has e.g.\ $(V_r,E_r)=(V,E)$.

\paragraph{Min-convolution}
For functions $h,g:\mathbb R\rightarrow\mathbb R$
we define their {\em min-convolution} $h\otimes g$ via
\begin{equation}
(h\otimes g)(y)=\min_x [h(x)+g(y-x)]\qquad \forall y
\label{eq:MinConvolutionDef}
\end{equation}
With such operation we can associate a mapping $\pi$ that returns $\pi(y)\in\argmin_x [h(x)+g(y-x)]$
for any given $y$; we say that such mapping $\pi$ {\em corresponds} to the min-convolution operation above.\footnote{In this paper
we will apply operation~\eqref{eq:MinConvolutionDef} only in cases in which
it is defined, i.e.\ the minimum exists and is attained at some point $x\in \mathbb R$ (for each $y\in\mathbb R$).
In particular, both functions $h$ and $g$ will be piecewise-linear or piecewise-quadratic.
}

Operation~\eqref{eq:MinConvolutionDef} is known under several other names, e.g.\ the  {\em maximum transform}~\cite{Bellman:62},
the {\em slope transform}~\cite{Maragos:95}, and the {\em distance transform}~\cite{Felzenszwalb:TCOMP12}.
Note that if $g=f_{ij}$  and function $h$ is defined on a grid
then there exist efficient algorithms for computing $h\otimes g$~\cite{Felzenszwalb:TCOMP12}.
%Note that~\cite{Felzenszwalb:TCOMP12} gives efficient algorithms for computing $h\otimes g$ when $g=f_{ij}$ is a truncated TV regularizer function $h$ is defined on a grid.
For the non-convex case we will need to extend these algorithms to piecewise-linear functions $h:\mathbb R\rightarrow\mathbb R$.

\paragraph{Dynamic Programming}
It is well-known that function~\eqref{eq:main} on a tree can be minimized using a dynamic
programming (DP) procedure (see e.g.~\cite{DP:survey}). If the tree is a chain, then it is equivalent to the {\em Viterbi algorithm}.
%Let us review a dynamic programming (DP) procedure for minimizing~\eqref{eq:main}.
Let us review this procedure.

DP works with {\em messages}
$M_{ij}:\mathbb R\rightarrow\mathbb R$ for $(i,j)\in E$ and $\widehat M_i:\mathbb R\rightarrow\mathbb R$ for $i\in V$.
%, which are functions $\mathbb R\rightarrow\mathbb R$.
These messages are computed in the forward pass by going through edges $(i,j)\in E$
in the order starting from leaves toward the root and setting
\begin{subequations}\label{eq:Message:update}
\begin{eqnarray}
\widehat M_{i}(x_i)&=&f_i(x_i)+\sum_{(k,i)\in E} M_{ki}(x_i) \label{eq:Message:update:a} \\
M_{ij}(x_j)&=&\min_{x_i} \left[\widehat M_i(x_i) + f_{ij}(x_j-x_i) \label{eq:Message:update:b}\right]
\end{eqnarray}
\end{subequations}
for all $x_i$ and $x_j$. (Due to the chosen order of updates, the right-hand side is always defined).
Note that \eqref{eq:Message:update:b} is a min-convolution operation: $M_{ij}=\widehat M_i\otimes f_{ij}$.
While computing it, we also need to determine a corresponding mapping $\pi_{ij}$ (it will be used in the backward pass).

After computing all messages we first find $x_r$ that minimizes $\widehat M_r(x_r)$,
and then go through edges $(i,j)\in E$ in the backward order and set $x_i=\pi_{ij}(x_j)$.
For completeness, let us show the correctness of this procedure.
\begin{proposition}
The procedure above returns a minimizer of $f(\cdot)$.
\end{proposition}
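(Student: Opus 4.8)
The plan is to prove correctness by a standard induction on the tree structure, establishing that each message encodes the optimal cost of a subtree as a function of the value at its "interface" node. Concretely, for each node $i\in V$ let $V_i\subseteq V$ denote the set of nodes in the subtree rooted at $i$ (including $i$), and let $E_i$ be the edges with both endpoints in $V_i$. I would first state and prove the invariant
\begin{equation}
\widehat M_i(x_i) = \min\Bigl\{\,\textstyle\sum_{k\in V_i} f_k(x_k) + \sum_{(k,l)\in E_i} f_{kl}(x_l-x_k)\ \Big|\ x\in\mathbb R^{V_i},\ \text{coordinate }i\text{ fixed to }x_i\,\Bigr\}
\end{equation}
and similarly that $M_{ij}(x_j)$ equals the same quantity but with the subtree $V_i$ augmented by the edge $(i,j)$ and with the value at $j$ fixed to $x_j$ (while $x_i$ is free).

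First I would do the base case: if $i$ is a leaf, then $V_i=\{i\}$, $E_i=\emptyset$, and \eqref{eq:Message:update:a} gives $\widehat M_i(x_i)=f_i(x_i)$, which matches the invariant trivially. For the inductive step on $\widehat M_i$, I would assume the claim holds for $M_{ki}$ for every child $k$ of $i$; since the subtrees $V_k$ over the children $k$, together with $\{i\}$, partition $V_i$, and the edge sets $E_k\cup\{(k,i)\}$ partition $E_i$, the minimization over $x\in\mathbb R^{V_i}$ with $x_i$ fixed decomposes as a sum of independent minimizations over the child subtrees, which is exactly what the right-hand side of \eqref{eq:Message:update:a} computes. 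For the step on $M_{ij}$, I would invoke that \eqref{eq:Message:update:b} is the min-convolution $\widehat M_i\otimes f_{ij}$: minimizing over $x_i$ of $\widehat M_i(x_i)+f_{ij}(x_j-x_i)$ adds exactly the term $f_{ij}(x_j-x_i)$ and frees the variable $x_i$, matching the augmented-subtree invariant.

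With the invariant in hand, applying it at the root gives $\widehat M_r(x_r)=\min\{f(x)\mid x_r\text{ fixed}\}$, so $\min_{x_r}\widehat M_r(x_r)=\min_x f(x)$ and the value returned is optimal; it remains to check that the backward pass actually reconstructs a minimizer and not merely the optimal value. For this I would argue by induction going from the root outward along edges $(i,j)\in E$ in the backward order: maintaining the hypothesis that the partial assignment produced so far, restricted to $V\setminus V_i$ together with $x_j$, is consistent with some global minimizer of $f$, one uses that $x_i=\pi_{ij}(x_j)\in\argmin_{x_i}[\widehat M_i(x_i)+f_{ij}(x_j-x_i)]$ together with the fact (from the $\widehat M_i$ invariant and its proof) that fixing $x_i$ to this value, the remaining subtree variables can be chosen to attain $\widehat M_i(x_i)$, so the extended partial assignment is still consistent with a global minimizer; since $\argmin$ of the min-convolution decomposes over children, the induction proceeds down to the leaves and yields a complete assignment $x$ with $f(x)=\min f$.

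The only genuine subtlety — and the step I would be most careful about — is the bookkeeping that the subtrees $V_k$ of the children of $i$ are pairwise disjoint and that $\{i\}$ together with these sets partitions $V_i$ (and correspondingly for the edge sets), which is where the tree structure and the orientation of edges toward the root are actually used; everything else is routine once the invariant is stated precisely. I would also note in passing that all minima invoked exist because $f$ is assumed bounded below and attaining its minimum, and because $h,g$ in each min-convolution are piecewise-linear or piecewise-quadratic so that \eqref{eq:MinConvolutionDef} is well defined, as stated in the preliminaries; this justifies manipulating the messages as honest real-valued functions throughout.
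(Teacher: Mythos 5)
Your proposal is correct and follows essentially the same route as the paper: you establish the same invariant identifying $\widehat M_i$ and $M_{ij}$ with the minimal partial subtree costs (the paper's $f_{\ast i}$ and $f_{\ast ij}$), conclude optimality of the value at the root, and then argue by a root-to-leaves induction that each backward-pass assignment $x_i=\pi_{ij}(x_j)$ keeps the partial assignment extendable to a global minimizer, which is exactly the paper's $\min_{y\in\calX_i}f(y)\le\min_y f(y)$ induction. No substantive difference in approach or content.
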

\begin{proof}
For a node $i\in V$ and an edge $(i,j)\in E$ define ``partial costs'' $f_{\ast i}(\cdot)$ and $f_{\ast ij}(\cdot)$
via
\begin{subequations}
\begin{eqnarray}
f_{\ast i}(x)&=&\sum_{p\in V_i} f_p(x_p) + \sum_{(p,q)\in E_i} f_{pq}(x_q-x_p) \qquad \forall x\in\mathbb R^n \\ 
f_{\ast ij}(x) &=& f_{\ast i}(x) +  f_{ij}(x_j-x_i)                            \qquad \hspace{57pt} \forall x\in\mathbb R^n   \label{eq:HGAIHFIASGFA}
\end{eqnarray}
\end{subequations}
where $(V_i,E_i)$ is the subtree of $(V,E)$ rooted at $i$. In particular, for the root $i=r$ we have  $(V_r,E_r)=(V,E)$
and $f_{\ast r}(x)=f(x)$. Now define functions
\begin{subequations}\label{eq:Message:definition}
\begin{eqnarray}
\widehat M_{i}(z)&=&\min_{x:x_i=z} f_{\ast i}(x) \hspace{4pt} \qquad \forall z\in\mathbb R \label{eq:Message:definition:a} \\
 M_{ij}(z)&=&\min_{x:x_j=z} f_{\ast ij}(x)        \qquad \forall z\in\mathbb R \label{eq:Message:definition:b}
\end{eqnarray}
\end{subequations}
It can be checked that these functions satisfy equations~\eqref{eq:Message:update}: for~\eqref{eq:Message:update:a} 
we can use the fact that $f_{\ast i}(x)=f_i(x)+\sum_{(k,i)\in E} f_{\ast ki}(x)$, while~\eqref{eq:Message:update:b}  follows from~\eqref{eq:HGAIHFIASGFA}.
Therefore, update equations~\eqref{eq:Message:update} compute quantities defined in~\eqref{eq:Message:definition}.
This also means that values $\widehat M_i(z)$ and $M_{ij}(z)$ are finite for all $z\in\mathbb R$ (due to the assumption made
in the beginning of Sec.~\ref{sec:intro}).

Now let us consider the backward pass. Rename nodes in $V$ as $\{1,\ldots,n\}$ (with $n$ being the root)
so that the procedure first assigns $x_n$, then $x_{n-1}$,  $x_{n-2}$, and so on until $x_1$. Note
that $V_i\subseteq\{1,\ldots,i\}$ for any $i\in V$. Define set
$$\calX_i=\{y\in\mathbb R^n\:|\:y_j=x_j\;\;\forall j\in\{i,\ldots,n\}\}$$
Next, we will prove that for any $i\in V$ we have $\min_{y\in\calX_i} f(y)\le\min_{y\in\mathbb R^n} f(y)$
(for $i=1$ this will mean that $f(x)\le\min_{y\in\mathbb R^n} f(y)$).
We use induction on $n$ in the decreasing order.
We have $x_n\in\argmin_{z} M_n(z)$ with $M_n(z)=\min_{x:x_i=z} f(x)$; this gives the base case $i=n$.
Now suppose the claim holds for $i+1\le n$; let us show it for $i$. Let $(i,j)\in E$ be the parent
edge for $i$, with $j\in\{i+1,\ldots,n\}$. 
We have $x_i=\pi_{ij}(x_j)$, or equivalently
\begin{equation}
x_i \in\argmin_{x_i}\left[\widehat M_i(x_i)+f_{ij}(x_j-x_i)\right] 
    = \argmin_{x_i}\left[\left(\min_{w:w_i=x_i} f_{\ast i}(w)\right)+f_{ij}(x_j-x_i)\right] \label{eq:fadjhasjdgbajdsg}
%    &=& \argmin_{x_i}\min_{w:w_i=x_i}\left[ f_{\ast i}(w)+f_{ij}(x_j-x_i)\right]
\end{equation}
Pick $x^\ast\in\argmin_{x^\ast\in\mathbb R^n:x^\ast_i=x_i} f_{\ast i}(x^\ast)$ and $y\in\argmin_{y\in\calX_{i+1}}f(y)$. 
Since function $f_{\ast i}(x^\ast)$ depends only on variables $x^\ast_j$ for $j\in V_i$,
other variables of $x^\ast$ can be chosen arbitrarily. We can thus assume w.l.o.g.\ that $x^\ast_j=y_j$ for all $j\in V-V_i$.
This implies that
$$
f(x^\ast)-f(y)=[f_{\ast i}(x^\ast)+f_{ij}(x_j-x_i)]-[f_{\ast i}(y)+f_{ij}(x_j-y_i)]
$$
%
%(and in particular $x^\ast\in\calX_i$).
(all other terms of $f$ cancel each other, and we have $x^\ast_j=y_j=x_j$ and $x^\ast_i=x_i$). We also have
\begin{eqnarray*}
 f_{\ast i}(x^\ast)+f_{ij}(x_j-x_i) \le  \left(\min_{w:w_i=y_i} f_{\ast i}(w)\right)+f_{ij}(x_j-y_i) \le  f_{\ast i}(y)+f_{ij}(x_j-y_i)
\end{eqnarray*}
where in the first inequality we used~\eqref{eq:fadjhasjdgbajdsg}. We obtain that $f(x^\ast)\le f(y)=\min_{y\in \calX_{i+1}}f(y)\le\min_{y\in \mathbb R^n}f(y)$,
where the last inequality is by the induction hypothesis. It can be checked that $x^\ast\in\calX_i$, which gives the claim for $i$.
\end{proof}

%This is the algorithm that we will use in the non-convex case. 
In the non-convex case we will use the DP algorithm directly.
In the convex case all
messages $\widehat M_i,M_{ij}$ will be convex functions, and it will be more convenient
to work with their derivatives $\widehat m_i,m_{ij}$, or more generally their subgradients if the messages
are not differentiable. (This will simplify considerably descriptions of algorithms).

The main computational question is how to manipulate with messages (or their subgradients). Storing their
values explicitly for all possible arguments is infeasible, so we need to use some implicit representation.
Specific cases are discussed below.
%A typical example 
%This question is addressed below.

%%%%%%%%%%%%%%%%%%%%%%%%%%%%%%%%%%%%%%%%%%%%%%%%%%%%%%%%%%%%%%%%%%%%%%%%%%%%%%%%%%%%%%%%%%
%%%%%%%%%%%%%%%%%%%%%%%%%%%%%%%%%%%%%%%%%%%%%%%%%%%%%%%%%%%%%%%%%%%%%%%%%%%%%%%%%%%%%%%%%%
%%%%%%%%%%%%%%%%%%%%%%%%%%%%%%%%%%%%%%%%%%%%%%%%%%%%%%%%%%%%%%%%%%%%%%%%%%%%%%%%%%%%%%%%%%
%%%%%%%%%%%%%%%%%%%%%%%%%%%%%%%%%%%%%%%%%%%%%%%%%%%%%%%%%%%%%%%%%%%%%%%%%%%%%%%%%%%%%%%%%%
%%%%%%%%%%%%%%%%%%%%%%%%%%%%%%%%%%%%%%%%%%%%%%%%%%%%%%%%%%%%%%%%%%%%%%%%%%%%%%%%%%%%%%%%%%
%%%%%%%%%%%%%%%%%%%%%%%%%%%%%%%%%%%%%%%%%%%%%%%%%%%%%%%%%%%%%%%%%%%%%%%%%%%%%%%%%%%%%%%%%%
%%%%%%%%%%%%%%%%%%%%%%%%%%%%%%%%%%%%%%%%%%%%%%%%%%%%%%%%%%%%%%%%%%%%%%%%%%%%%%%%%%%%%%%%%%
%%%%%%%%%%%%%%%%%%%%%%%%%%%%%%%%%%%%%%%%%%%%%%%%%%%%%%%%%%%%%%%%%%%%%%%%%%%%%%%%%%%%%%%%%%
%%%%%%%%%%%%%%%%%%%%%%%%%%%%%%%%%%%%%%%%%%%%%%%%%%%%%%%%%%%%%%%%%%%%%%%%%%%%%%%%%%%%%%%%%%

\section{Non-convex case}\label{sec:nonconvex}
In this section we assume that unary terms $f_i$ are continuous piecewise-linear functions with $O(1)$ breakpoints,
and terms $f_{ij}$ are given by~\eqref{eq:fij}.
As we will see, in this case all messages $\widehat M_i$ and $M_{ij}$ will also be continuous piecewise-linear functions.
We will store such function as a sequence $(s_0,\lambda_1,s_1,\ldots,s_{t-1},\lambda_t,s_t)$
where $t$ is the number of breakpoints, $\lambda_p$ is the $X$-coordinate of the $p$-th breakpoint with $\lambda_1\le\ldots\le\lambda_t$, and $s_p$ is the slope of the $p$-th segment.
Note that this sequence allows to reconstruct the message only up to an additive constant, but this will be sufficient for our purposes.

The two lemmas below address updates~\eqref{eq:Message:update:a} and~\eqref{eq:Message:update:b}, respectively;
their proofs are given in
Sec.~\ref{sec:lemma:generalUpdateSum} and~\ref{sec:lemma:generalUpdate}.
%Recall that function $f_i(\cdot)$ was assumed to have $O(1)$ breakpoints.
\begin{lemma}
If messages $M_{ki}$ for $(k,i)\in E$ are piecewise-linear functions with $t_k$ breakpoints
then $\widehat M_i=f_i+\sum_{(k,i)\in E}M_{ki}$ is also a piecewise-linear function with at most $t=O(1)+\sum_{(k,i)\in E}t_k$ breakpoints.
It can be computed in $O(t\log(d_i+1))$ time where $d_i=|\{k\:|\:(k,i)\in E\}|$ is the in-degree of node $i$.
\label{lemma:generalUpdateSum}
\end{lemma}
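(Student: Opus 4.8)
The plan is to prove the bound on the number of breakpoints first, and then describe an efficient algorithm realizing the claimed running time via a standard divide-and-conquer (balanced binary merge tree) over the incoming messages. For the breakpoint count: each $M_{ki}$ is continuous piecewise-linear with $t_k$ breakpoints, and $f_i$ is continuous piecewise-linear with $O(1)$ breakpoints by assumption. A sum of continuous piecewise-linear functions is continuous piecewise-linear, and its set of breakpoints is contained in the union of the breakpoint sets of the summands (a point where \emph{every} summand is locally affine is a point where the sum is locally affine). Hence the number of breakpoints of $\widehat M_i = f_i + \sum_{(k,i)\in E} M_{ki}$ is at most $O(1) + \sum_{(k,i)\in E} t_k =: t$, which is the stated bound.

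For the running time: adding two piecewise-linear functions stored in the slope/breakpoint representation $(s_0,\lambda_1,s_1,\ldots,\lambda_\tau,s_\tau)$ is a linear merge — walk through the two sorted breakpoint lists simultaneously, and at each breakpoint of either summand record the combined slope to its right (obtained by adding the current slopes of both functions); ties in $\lambda$-coordinates are handled by merging coincident breakpoints. If the two inputs have $\tau_1$ and $\tau_2$ breakpoints, this costs $O(\tau_1 + \tau_2)$ and the output has at most $\tau_1 + \tau_2$ breakpoints. To sum all $d_i$ incoming messages together with $f_i$, build a balanced binary tree with the $d_i+1$ functions at the leaves and perform the pairwise merges bottom-up. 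Each of the $O(\log(d_i+1))$ levels touches a total number of breakpoints that is at most $t$ (the sum of breakpoint counts can only grow by merging, but the grand total at each level is bounded by the total over all leaves, which is $t$), so the whole procedure runs in $O(t\log(d_i+1))$ time, as claimed. If $d_i = O(1)$ this is just $O(t)$.

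The one point that needs a little care — and is the main thing to get right rather than the main obstacle — is the handling of coincident breakpoints and of spurious breakpoints created by the merge (a recorded breakpoint whose left and right slopes turn out to be equal because the contributions of the two summands cancel in slope). These should either be suppressed on the fly or ignored; in either case they do not affect the asymptotic bounds, since we only ever \emph{upper}-bound the number of breakpoints by the union of input breakpoints. A second routine check is that the representation only determines each function up to an additive constant, which is consistent with the lemma statement and harmless for the DP recursion, since the minimizer of $f$ is unaffected by additive constants in the messages. I do not anticipate a genuine obstacle here; the lemma is essentially a bookkeeping statement about merging sorted lists, and the logarithmic factor is exactly the price of the balanced binary merge.
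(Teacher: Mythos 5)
Your proposal is correct and follows essentially the same route as the paper: the breakpoint bound comes from the union of the summands' breakpoints, and the running time comes from merging $d_i+1$ sorted breakpoint lists with $t$ total points in $O(t\log(d_i+1))$ time, computing slopes along the way. The only difference is that you make the multi-way merge explicit via a balanced binary merge tree, whereas the paper simply cites a reference for $k$-way merging of sorted lists; both yield the same bound.
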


\begin{lemma}
If message $\widehat M_i$ is a piecewise-linear function with $t$ breakpoints then
 $M_{ij}=\widehat M_i\otimes f_{ij}$ is also a piecewise-linear function with at most $2t+1$ breakpoints. This min-convolution
and the corresponding mapping $\pi_{ij}$ can be computed in $O(t+1)$ time;
the latter is represented by a data structure of size $O(t+1)$ that can be queried in $O(t+1)$ time.

Furthermore, if $C_{ij}=+\infty$ then %function
$M_{ij}$ has at most $t$ breakpoints.
\label{lemma:generalUpdate}
\end{lemma}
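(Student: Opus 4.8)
\emph{Proof plan.} The plan is to reduce the truncated case to the pure TV case and then analyze each piece separately. Since $f_{ij}(z)=\min\{w_{ij}|z|,\,C_{ij}\}$ and min-convolution distributes over a pointwise minimum, we have $\widehat M_i\otimes f_{ij}=\min\{\widehat M_i\otimes(w_{ij}|\cdot|),\ \widehat M_i\otimes C_{ij}\}$. The second term is the constant function $m^\ast+C_{ij}$ with $m^\ast=\min_x\widehat M_i(x)$, so, writing $N:=\widehat M_i\otimes(w_{ij}|\cdot|)$, we get $M_{ij}=\min\{N,\,m^\ast+C_{ij}\}$; note also $\min_y N(y)=m^\ast$. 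The degenerate case $w_{ij}=0$ makes $f_{ij}\equiv 0$ and $M_{ij}$ constant, so assume $w_{ij}>0$.

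Next I would analyze $N$, which is the largest $w_{ij}$-Lipschitz function below $\widehat M_i$. The key observation is that $N$ is computed by a left-to-right pass followed by a right-to-left pass: with $h^{\rightarrow}(y)=\min_{x\le y}[h(x)+w_{ij}(y-x)]$ and $h^{\leftarrow}(y)=\min_{x\ge y}[h(x)+w_{ij}(x-y)]$, minimizing the inner variable first gives $(\widehat M_i^{\rightarrow})^{\leftarrow}=N$. Each of these operations is an affine shift composed with a prefix- (resp.\ suffix-) minimum of a piecewise-linear function, and here is the combinatorial heart of the proof: \emph{the prefix-minimum $g(y)=\min_{x\le y}\phi(x)$ of a piecewise-linear $\phi$ with $k$ breakpoints has at most $k$ breakpoints.} Indeed $g$ is nonincreasing, follows $\phi$ exactly where $\phi$ is strictly decreasing and is flat elsewhere; every ``convex'' breakpoint of $g$ (slope strictly increasing) is a convex breakpoint of $\phi$, while every ``concave'' breakpoint of $g$ (where a flat stretch ends because $\phi$ has dropped back to that level) can be charged injectively to a concave breakpoint of $\phi$, namely a local maximum of $\phi$ inside the preceding flat stretch, which $g$ skips. (Boundedness of $\widehat M_i$ from below makes these prefix/suffix minima finite and the extreme pieces well-behaved.) Hence $\widehat M_i^{\rightarrow}$ has at most $t$ breakpoints, and therefore so does $N$. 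Both passes are realizable by a stack-based sweep in $O(t+1)$ time, and from the state of each sweep one reads off, for every $y$, a minimizer $x(y)$ achieving $N(y)$.

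For the truncated case, $M_{ij}=\min\{N,c\}$ with $c=m^\ast+C_{ij}$ constant. Clipping a piecewise-linear function at a constant level keeps all breakpoints strictly below the level, discards those above, and adds one breakpoint per crossing; since each maximal interval on which $N>c$ contains at least one breakpoint of $N$ (a local maximum) and contributes at most two crossings, the number of crossings is at most twice the number of breakpoints of $N$, giving at most $2t+1$ breakpoints for $M_{ij}$ (the $+1$ absorbing unbounded/tangential corner cases). When $C_{ij}=+\infty$ there is no clipping, $M_{ij}=N$, and the bound drops to $t$. For $\pi_{ij}$: where $N(y)\le c$ return the minimizer $x(y)$ from the two passes; where $N(y)>c$ return a fixed global minimizer $x^\ast$ of $\widehat M_i$ (precomputed once), which is valid since $N(y)>m^\ast+C_{ij}=\widehat M_i(x^\ast)+C_{ij}$ together with $N(y)\le\widehat M_i(x^\ast)+w_{ij}|y-x^\ast|$ forces $w_{ij}|y-x^\ast|>C_{ij}$, hence $f_{ij}(y-x^\ast)=C_{ij}$ and $\widehat M_i(x^\ast)+f_{ij}(y-x^\ast)=m^\ast+C_{ij}=M_{ij}(y)$. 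Storing the breakpoint list of $N$, the per-segment sweep annotations, and $x^\ast$ uses $O(t+1)$ space and answers a query in $O(t+1)$ time by locating $y$ among the breakpoints.

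The step I expect to need the most care is the breakpoint bound for the prefix/suffix minimum together with the verification that composing the two passes reproduces exactly the min-convolution $N$ (rather than, say, $\min\{\widehat M_i^{\rightarrow},\widehat M_i^{\leftarrow}\}$, whose breakpoint count would not be controlled by this argument). After that, the truncated-case count is routine bookkeeping of crossings and the description of $\pi_{ij}$ is straightforward.
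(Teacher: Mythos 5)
Your proposal is correct in substance but organizes the argument differently from the paper. The paper proves a general ``consecutive min-convolution'' proposition (Proposition~\ref{prop:ConsecutiveMinConvolution}): if $g=\min_k g^k$ with $g^k(0)=0$ and $g$ satisfies the triangle inequality, then $h\otimes g$ equals the chain $(\cdots(h\otimes g^1)\cdots)\otimes g^m$ and the mappings compose; it then splits $f_{ij}$ into $f^1_{ij},f^2_{ij},f^3_{ij}$ (right ramp, left ramp, and the $0/C$ spike) and handles each by an explicit sweep (Algorithm~\ref{alg:DistanceTransform}), arguing that the sweep never increases the breakpoint count because each inserted breakpoint retires at least one old one. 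You instead (i) peel off the truncation first via exact distributivity of $\otimes$ over a pointwise minimum, $h\otimes\min\{w|\cdot|,C\}=\min\{h\otimes(w|\cdot|),\,m^\ast+C\}$, which needs no triangle inequality, and (ii) reduce $h\otimes(w|\cdot|)$ to two one-sided passes, which you should justify not as a separate lemma but simply by exchanging the order of the two minimizations: $(h^{\rightarrow})^{\leftarrow}(y)=\min_{x}\min_{x'\ge\max\{x,y\}}[h(x)+w(2x'-x-y)]=\min_x[h(x)+w|y-x|]$ --- a two-line computation, so the step you flag as delicate is in fact immediate (it is just associativity of $\otimes$ together with $f^1_{ij}\otimes f^2_{ij}=w|\cdot|$). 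Your prefix-minimum charging argument for the bound of $t$ breakpoints is the same fact the paper extracts from Algorithm~\ref{alg:DistanceTransform}, stated more abstractly, and your verification that a global minimizer $x^\ast$ of $\widehat M_i$ serves as $\pi_{ij}(y)$ on the clipped region is exactly what the paper's $f^3_{ij}$-mapping does. What your route buys is elementarity (no triangle-inequality machinery, and the truncation is handled by a one-line identity); what the paper's route buys is reusability --- Proposition~\ref{prop:ConsecutiveMinConvolution} is invoked again verbatim for the concave piecewise-linear extension in Sec.~3.3, where the pieces no longer satisfy $g^1\otimes\cdots\otimes g^m=\min_k g^k$ so your distributivity shortcut would not apply. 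One small point to tighten: your crossing count for $\min\{N,c\}$ as written tops out at $2t+2$ (two unbounded super-level intervals contribute crossings not charged to any discarded breakpoint); the cleaner count is the paper's --- each of the at most $t+1$ linear segments of $N$ crosses the level $c$ at most once, giving at most $t+(t+1)=2t+1$ breakpoints.
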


\begin{corollary}
If $C_{ij}=+\infty$ for all $(i,j)\in E$ then function~\eqref{eq:main} can be minimized using $O(n^2)$ time and space.
\label{corollary:nonconvex:L1}
\end{corollary}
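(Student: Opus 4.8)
The plan is to run the dynamic programming procedure described in Section~\ref{sec:background}, represent every message in the piecewise-linear format introduced above, and bound two things: the number of breakpoints of each message, and the total cost of the forward and backward passes.

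First I would establish, by induction on the size of the subtree $(V_i,E_i)$, that there is an absolute constant $c$ such that $\widehat M_i$ has at most $c\,|V_i|$ breakpoints. For a leaf $i$ we have $\widehat M_i=f_i$, which has $O(1)$ breakpoints, so the base case holds once $c$ is taken large enough. For the inductive step, Lemma~\ref{lemma:generalUpdate} with $C_{ij}=+\infty$ gives that each incoming message $M_{ki}$ has no more breakpoints than $\widehat M_k$, hence at most $c\,|V_k|$ by the induction hypothesis; then Lemma~\ref{lemma:generalUpdateSum} bounds the number of breakpoints of $\widehat M_i=f_i+\sum_{(k,i)\in E}M_{ki}$ by $O(1)+\sum_{(k,i)\in E}c\,|V_k|=O(1)+c(|V_i|-1)$, which is at most $c\,|V_i|$ as soon as $c$ dominates the $O(1)$ term. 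In particular every message, and every mapping $\pi_{ij}$ of Lemma~\ref{lemma:generalUpdate}, has size $O(n)$, and since there are $O(n)$ of them in total the whole computation uses $O(n^2)$ space.

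Next I would account for the running time. In the forward pass, each node $i$ requires one application of Lemma~\ref{lemma:generalUpdateSum}, costing $O(t_i\log(d_i+1))$ with $t_i=O(n)$, and each non-root node one application of Lemma~\ref{lemma:generalUpdate}, costing $O(t_i+1)=O(n)$. Using $\sum_i\log(d_i+1)\le\sum_i d_i=|E|=n-1$, the first contribution sums to $O(n^2)$, and the second clearly does as well. The backward pass first minimizes $\widehat M_r$ over $\mathbb R$ in $O(t_r)=O(n)$ time, and then performs $n-1$ queries of the mappings $\pi_{ij}$, each in $O(t_i+1)=O(n)$ time, for a further $O(n^2)$. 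Hence the whole procedure runs in $O(n^2)$ time, and by the Proposition it outputs a minimizer of $f$.

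The one genuinely delicate point is the linear breakpoint bound: the induction works only because of the sharpened form of Lemma~\ref{lemma:generalUpdate} in the case $C_{ij}=+\infty$, namely that the min-convolution produces at most $t$ breakpoints rather than the generic $2t+1$. With the weaker $2t+1$ bound the same recursion would give an exponential blow-up along long branches of the tree. Everything else is routine bookkeeping over the tree structure.
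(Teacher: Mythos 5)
Your proof is correct and follows essentially the same route as the paper: bound the breakpoints of each message by $O(|V_i|)$ using Lemmas~\ref{lemma:generalUpdateSum} and~\ref{lemma:generalUpdate} (crucially the sharpened $t$-breakpoint bound for $C_{ij}=+\infty$), then sum the per-node costs over the tree using $\sum_i \log(d_i+1)\le\sum_i d_i=n-1$. You merely spell out the induction on subtree size that the paper leaves implicit, which is a welcome addition rather than a deviation.
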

\begin{proof}
The lemmas above imply that messages $\widehat M_i$ and $\widehat M_{ij}$ have at most $O(|V_i|)$ breakpoints,
where $V_i\subseteq V$ is the set of nodes in the subtree of $(V,E)$ rooted at $i$. Note that $|V_i|\le n$.
The overall time taken by updates~\eqref{eq:Message:update:b} is $\sum_{(i,j)\in E}O(n)=O(n^2)$. The same is true for updates~\eqref{eq:Message:update:a}, since
$$
\sum_{i\in V} n \log (d_i+1) \le const\cdot n \sum_{i\in V}  d_i = const\cdot n(n-1)
$$
Finally, we need $\sum_{(i,j)\in E}O(n)=O(n^2)$ space to store mappings $\pi_{ij}$, and $O(n^2)$ time to query them in the backward pass.
\end{proof}

If values $C_{ij}$ are finite then the lemmas
give only an exponential bound $2^{O(|V_i|)}$ on the number of breakpoints in messages $\widehat M_i$ and $\widehat M_{ij}$.
Our experiments, however, indicate that in practice the number of breakpoints stays manageable (see Sec.~\ref{sec:experiments}).

%To conclude the description of the algorithm, it remains to prove Lemmas~\ref{lemma:generalUpdateSum} and~\ref{lemma:generalUpdate}.

\subsection{Proof of Lemma~\ref{lemma:generalUpdateSum}}\label{sec:lemma:generalUpdateSum}
We only discuss the complexity of computing the sequence representing $\widehat M_i$; the rest of the statement is straightforward.
We need to compute a sum of $d_1+1$ piecewise-linear functions where
the breakpoints of each function are given in the non-decreasing order.
This essentially amounts to sorting all input breakpoints; clearly, during sorting we can also compute the slopes
between adjacent breakpoints. Sorting $d_i+1$ sorted lists with a total of $t$ points can be done in $O(t\log (d_i+1))$ time~\cite{Greene:93}.

\subsection{Proof of Lemma \ref{lemma:generalUpdate}}\label{sec:lemma:generalUpdate}
%where function $f_{ij}^k$ is defined
%as follows: if the $k$-th segment lies on the left of $0$ then $f_{ij}^k$ is linear on $(-\infty,0)$
%extending the corresponding linear segment of $f_{ij}$, satisfies $f_{ij}^k(0)=0$ and $f_{ij}^k(z)=+\infty$ for $z>0$.
%If the segment lies on the right of $0$ then the definition is similar: $f_{ij}^k$ is linear on $(0,+\infty)$,
%satisfies $f_{ij}^k(0)=0$ and $f_{ij}^k(z)=+\infty$ for $z<0$. 
%
%In this subsection we show how to compute $\widehat M_i\otimes f_{ij}$ and the corresponding mapping $\pi_{ij}$.
We will use the following fact, whose proof is given in Appendix~\ref{sec:ConsecutiveMinConvolution}.

\begin{proposition}
Suppose that  $g,g^1,\ldots,g^m$ are functions with $g^1(0)=\ldots=g^m(0)=0$ satisfying
\begin{equation}
g(z)=\min\{g^1(z),\ldots,g^m(z)\}\qquad \forall z\in\mathbb R
\label{eq:HGOAFHAGASF}
\end{equation}
and also suppose that $g$ satisfies a triangle inequality:
\begin{equation}
g(z_1+z_2)\le g(z_1)+g(z_2) \qquad \forall z_1,z_2
\label{eq:GHAOUSFGHAOUG}
\end{equation}
For a given function $h:\mathbb R\rightarrow\mathbb R$ define functions % $h^0=h$, $h^1=h^0\otimes g^1$, $\ldots$, $h^m=h^{m-1}\otimes g^m$.
 $h^0,\ldots,h^m$ via $h^0=h$ and $h^k=h^{k-1}\otimes g^k$.
Then $h^m=h\otimes g$. Furthermore, if mappings $\pi^k$ correspond to min-convolutions $h^{k-1}\otimes g^k$
then mapping
$\pi=\pi^1\circ\ldots\circ\pi^m$ corresponds to min-convolution $h\otimes g$.
%for a fixed $y\in\mathbb R$ expression $h(x)+g(y-x)$
%is minimized by $x=\pi^m(\pi^{m-1}(\ldots(\pi^1(y))\ldots))$ where  mapping $\pi^k$ is defined as follows:
%\begin{equation}
%\pi^k(y)\in\argmin_x\; [h^{k-1}(x) + g^k(y-x)]
%\end{equation}
%
%Then $h\otimes g=(\ldots(h\otimes g^1)\otimes \ldots)\otimes g^m$
%for any function $h:\mathbb R\rightarrow\mathbb R$. Furthermore, for a fixed $y\in\mathbb R$ expression $h(x)+g(y-x)$
%is minimized by $x=\pi^m(\pi^{m-1}(\ldots(\pi^1(y))\ldots))$ where the mapping $\pi^\ell$ is defined as follows:
%\begin{equation}
%\pi^\ell(y)\in\argmin_x\; [h^{\ell-1}(x) + g^\ell(y-x)],
%\qquad h^{\ell-1}=(\ldots(h\otimes g^1)\otimes \ldots)\otimes g^{\ell-1}
%\end{equation}
%
\label{prop:ConsecutiveMinConvolution}
\end{proposition}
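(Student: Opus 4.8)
The plan is to unfold the iterated operation and then match the result to $h\otimes g$ by two inequalities. First, a straightforward induction on $m$ (using $h^k=h^{k-1}\otimes g^k$ and, at each step, renaming the minimization variable so that the new contribution appears as a free shift $x_k$) shows
\[
h^m(y)=\min_{x_1,\dots,x_m\in\mathbb R}\Big[\,h\Big(y-\sum_{k=1}^m x_k\Big)+\sum_{k=1}^m g^k(x_k)\,\Big]\qquad\forall y ,
\]
all minima being attained by the standing assumption recalled in the footnote above.

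Next I would prove $h^m(y)\ge(h\otimes g)(y)$: fix any $x_1,\dots,x_m$ and set $z=\sum_k x_k$. Since $g(x)=\min_\ell g^\ell(x)\le g^k(x)$ for every $k$ and $x$, and iterating the triangle inequality~\eqref{eq:GHAOUSFGHAOUG} gives $g(z)\le\sum_k g(x_k)$, we get $\sum_k g^k(x_k)\ge\sum_k g(x_k)\ge g(z)$, hence $h(y-z)+\sum_k g^k(x_k)\ge h(y-z)+g(z)\ge(h\otimes g)(y)$; taking the minimum over $x_1,\dots,x_m$ proves the bound. For the opposite direction, fix $z$ and choose $k^\ast$ with $g(z)=g^{k^\ast}(z)$; setting $x_{k^\ast}=z$ and $x_k=0$ otherwise and using the normalization $g^k(0)=0$, the bracket above collapses to $h(y-z)+g^{k^\ast}(z)=h(y-z)+g(z)$, so $h^m(y)\le h(y-z)+g(z)$, and minimizing over $z$ gives $h^m(y)\le(h\otimes g)(y)$. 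Hence $h^m=h\otimes g$.

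For the claim about the mapping, I would put $y_m=y$ and $y_{k-1}=\pi^k(y_k)$ for $k=m,m-1,\dots,1$, so that $\pi(y)=y_0$. Because $\pi^k$ corresponds to $h^{k-1}\otimes g^k$ we have $h^k(y_k)=h^{k-1}(y_{k-1})+g^k(y_k-y_{k-1})$ for each $k$; telescoping these identities yields $h^m(y)=h(\pi(y))+\sum_{k=1}^m g^k(y_k-y_{k-1})$. Applying the estimate from the previous paragraph to the decomposition $y-\pi(y)=\sum_{k=1}^m(y_k-y_{k-1})$ gives $g(y-\pi(y))\le\sum_k g^k(y_k-y_{k-1})$, whence $h(\pi(y))+g(y-\pi(y))\le h^m(y)=(h\otimes g)(y)$. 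Since the left-hand side is $\ge(h\otimes g)(y)$ for any argument, equality must hold, i.e.\ $\pi(y)\in\argmin_x[h(x)+g(y-x)]$.

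I do not expect a genuine obstacle here: the argument hinges only on (i) combining the pointwise bound $g\le g^k$ with the subadditivity of $g$, which controls $h^m$ from below, and (ii) using $g^k(0)=0$ to route an arbitrary displacement $z$ entirely through the single $g^{k^\ast}$ attaining $g(z)$, which controls $h^m$ from above. The most error-prone point is simply keeping the indices consistent in the telescoping identity for $\pi$, which becomes routine once the intermediate points $y_k$ are named explicitly.
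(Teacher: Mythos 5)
Your proposal is correct and rests on exactly the same two ingredients as the paper's proof: the pointwise bound $g\le g^k$ combined with subadditivity of $g$ for the lower bound on $h^m$, the normalization $g^k(0)=0$ to route the whole displacement through the single $g^{k^\ast}$ attaining the minimum for the upper bound, and the telescoped chain of optimality identities for the mapping $\pi$. The only organizational difference is that you unfold the full $m$-fold composition into one explicit minimization, whereas the paper proves the case $m=2$ and obtains the general case by induction; the content is the same.
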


Consider function $f_{ij}(z)=\min\{w|z|,C\}$ with $w\ge 0$.
We can assume w.l.o.g.\ that $C>0$ (otherwise computing $h\otimes f_{ij}=h+C$ is trivial).
It can be checked that $f_{ij}$ 
satisfies the triangle inequality. We will represent it as the minimum of the following three functions:
\begin{eqnarray*}
\hspace{-7pt}&f^1_{ij}(z)=\begin{cases}+\infty & \mbox{if }z<0 \\ wz &\mbox{if }z\ge 0\end{cases} \qquad
f^2_{ij}(z)=\begin{cases}w|z| & \mbox{if }z\le 0 \\ +\infty &\mbox{if }z> 0\end{cases}& \\
\hspace{-7pt}&f^3_{ij}(z)=\begin{cases}C & \mbox{if }z\ne 0 \\ 0 &\mbox{if }z= 0\end{cases}&
\end{eqnarray*}
%(We omitted subscripts in $w_{ij}$ and $C_{ij}$ for brevity.)
If $C=+\infty$ then we can take just the first two functions. By
Proposition~\ref{prop:ConsecutiveMinConvolution}, it suffices to show % $f\otimes f_{ij}=((f\otimes f^1_{ij})\otimes f^2_{ij})\otimes f^3_{ij}$.
 how to compute $h\otimes g$ and the corresponding mapping $\pi$ for a given piecewise-linear function $h$ and function $g\in\{f^1_{ij},f^2_{ij},f^3_{ij}\}$,
assuming the result in each case is also piecewise-linear. These transformations can then be applied consecutively to give % $h\otimes f_{ij}$.
$$
M_{ij}=\widehat M_i\otimes f_{ij}=((\widehat M_i\otimes f^1_{ij})\otimes f^2_{ij})\otimes f^3_{ij}
$$
%We will also show how to compute mapping $\pi$ corresponding to the min-convolution $h\otimes g$;
%these mappings will be needed in the backward pass of DP.

\begin{figure*}[!t]
\center
\begin{tabular}{@{\hspace{0pt}}c@{\hspace{20pt}}c@{\hspace{12pt}}c@{\hspace{-0pt}}c@{\hspace{12pt}}c@{\hspace{-0pt}}c@{\hspace{12pt}}c}
%{\small ~~~$h(z)$} & &
%{\small ~~~~~~$(h\otimes g)(z)$} & &
%{\small ~~~~~~$(h\otimes g)(z)$} & &
%{\small ~~~~~~$(h\otimes g)(z)$}  
%\vspace{-10pt} 
%\\
(a) && (b) && (c) && (d) \vspace{-5pt}
\\
%%%%%%%%%%%%%%%%%%%%%%%%%%%%%%%%%%%%%%%%%%%%%%%%%%%%%%%%%%%%%%%%%%%%%%%%%%%%%%%%%%
%%%%%%%%%%%%%%%%%%%%%%%%%%%%%%%%%%%%%%%%%%%%%%%%%%%%%%%%%%%%%%%%%%%%%%%%%%%%%%%%%%
%%%%%%%%%%%%%%%%%%%%%%%%%%%%%%%%%%%%%%%%%%%%%%%%%%%%%%%%%%%%%%%%%%%%%%%%%%%%%%%%%%
\includegraphics[height=70pt]{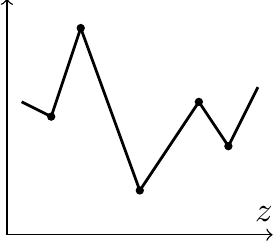}
%& 
%\raisebox{45pt}{$\stackrel{\mbox{\raisebox{5pt}{\normalsize(a)}}}{\mbox{\huge$\Rightarrow$}}$} 
%&
& \raisebox{35pt}{$\mbox{\large$\Rightarrow$}$} &
%%%%%%%%%%%%%%%%%%%%%%%%%%%%%%%%%%%%%%%%%%%%%%%%%%%%%%%%%%%%%%%%%%%%%%%%%%%%%%%%%%
%%%%%%%%%%%%%%%%%%%%%%%%%%%%%%%%%%%%%%%%%%%%%%%%%%%%%%%%%%%%%%%%%%%%%%%%%%%%%%%%%%
%%%%%%%%%%%%%%%%%%%%%%%%%%%%%%%%%%%%%%%%%%%%%%%%%%%%%%%%%%%%%%%%%%%%%%%%%%%%%%%%%%
\includegraphics[height=70pt]{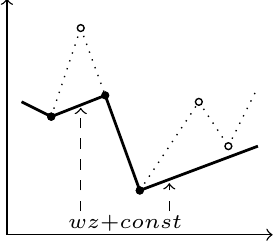}
& 
%\raisebox{45pt}{$\stackrel{\mbox{\raisebox{5pt}{\normalsize(a)}}}{\mbox{\huge$\Rightarrow$}}$} 
&
%%%%%%%%%%%%%%%%%%%%%%%%%%%%%%%%%%%%%%%%%%%%%%%%%%%%%%%%%%%%%%%%%%%%%%%%%%%%%%%%%%
%%%%%%%%%%%%%%%%%%%%%%%%%%%%%%%%%%%%%%%%%%%%%%%%%%%%%%%%%%%%%%%%%%%%%%%%%%%%%%%%%%
\includegraphics[height=70pt]{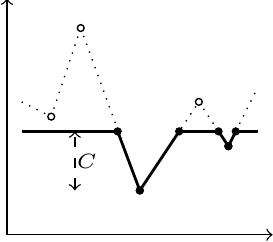}
& 
%\raisebox{45pt}{$\stackrel{\mbox{\raisebox{5pt}{\normalsize(a)}}}{\mbox{\huge$\Rightarrow$}}$} 
&
%%%%%%%%%%%%%%%%%%%%%%%%%%%%%%%%%%%%%%%%%%%%%%%%%%%%%%%%%%%%%%%%%%%%%%%%%%%%%%%%%%
%%%%%%%%%%%%%%%%%%%%%%%%%%%%%%%%%%%%%%%%%%%%%%%%%%%%%%%%%%%%%%%%%%%%%%%%%%%%%%%%%%
\includegraphics[height=70pt]{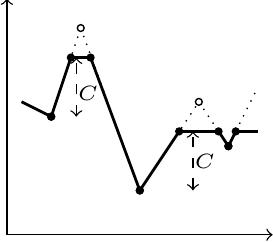}
%%%%%%%%%%%%%%%%%%%%%%%%%%%%%%%%%%%%%%%%%%%%%%%%%%%%%%%%%%%%%%%%%%%%%%%%%%%%%%%%%%
%%%%%%%%%%%%%%%%%%%%%%%%%%%%%%%%%%%%%%%%%%%%%%%%%%%%%%%%%%%%%%%%%%%%%%%%%%%%%%%%%%
\vspace{-2pt}
\\
 & & ~\scriptsize $g(z)=\begin{cases}+\infty & \mbox{if }z<0 \\ wz & \mbox{if }z\ge 0\end{cases}$ 
& & ~\scriptsize $g(z)=\begin{cases}C & \mbox{if }z\ne 0 \\ 0 & \mbox{if }z=0 \end{cases}$ 
& & ~\scriptsize $g(z)=\begin{cases}+\infty & \mbox{if }z<0 \\ 0 & \mbox{if }z=0 \\ C & \mbox{if }z> 0\end{cases}$ 
%%%%%%%%%%%%%%%%%%%%%%%%%%%%%%%%%%%%%%%%%%%%%%%%%%%%%%%%%%%%%%%%%%%%%%%%%%%%%%%%%%
\end{tabular}
\caption{
%Min-convolution of functions $h$ and $g$. Parameters $w$ and $C$ are positive.
(a): Input function $h(z)$. (b,c,d): Min-convolution $(h\otimes g)(z)$ for different functions $g$.
}
\label{fig:MinConvolution}
\end{figure*}

We assume below that $h$ is represented by the sequence $(s_0,\lambda_1,s_1,\ldots,s_{t-1},\lambda_t,s_t)$ with $t>0$. We also assume that $h$ bounded from below.
(This must be true for all messages, otherwise $f(x)$ would be unbounded from below).

%\begin{itemize}
%\item {\bf{Computing  $h\otimes f^1_{ij}$}}.~~ 
\paragraph{Computing  $h\otimes f^1_{ij}$}
This operation is illustrated in Fig.~\ref{fig:MinConvolution}(b),
and a formal procedure for computing it is given in Algorithm~\ref{alg:DistanceTransform}.
Upon termination $\sigma$ is the sequence representing $h\otimes f^1_{ij}$
and $\Pi$ is a set of intervals defining mapping $\pi$ as follows:
$$
\pi(y)=\begin{cases}
\lambda^- & \mbox{if exists }[\lambda^-,\lambda^+]\in\Pi \mbox{ s.t.\ }y\in[\lambda^-,\lambda^+]\\
y & \mbox{otherwise}
\end{cases}
$$
%Note, set $\Pi$  needs to be stored for extracting an optimal solution during the backward pass.

\begin{algorithm}[!h]
\caption{Computing min-convolution $h\otimes f^1_{ij}$ %It is assumed that $\lambda_{t+1}=+\infty$.
}\label{alg:DistanceTransform}
\begin{algorithmic}[1]
\STATE set $p=1$, $\sigma=(s_0)$, $\Pi=\varnothing$
\STATE {\bf while} $p\le t$ {\bf do}
\STATE ~~~ append $(\lambda_p,\min\{s_p,w\})$ to $\sigma$
\STATE ~~~ {\bf if} $s_p\ge w$ {\bf then}
\STATE ~~~~~~ define linear function $\bar h(z)=w(z-\lambda_p)+h(\lambda_p)$
    %\\ ~~~~~~ 
              passing through $(\lambda_p,h(\lambda_p))$ % \;\; (then $h(\lambda_{p+1}\ge \hat h(\lambda_{p+1}$)
\STATE ~~~~~~ find smallest $q\!\in\![ p+2,t+1]$ s.t.\ $h(\lambda_{q})\!<\!\bar h(\lambda_{q})$, 
    %\\ ~~~~~~
              \!assuming that $\lambda_{t+1}$ is sufficiently large;
     \\ ~~~~~~ if there is no such $q$ then add $[\lambda_p,+\infty]$ to $\Pi$ and terminate
\STATE ~~~~~~ compute $\lambda\in[\lambda_{q-1},\lambda_{q}]$  with $h(\lambda)=\bar h(\lambda)$
\STATE ~~~~~~ append $(\lambda,s_{q-1})$ to $\sigma$, add $[\lambda_p,\lambda]$ to $\Pi$, set $p:=q$
\STATE ~~~ {\bf else}
\STATE ~~~~~~ set $p:=p+1$
\STATE ~~~ {\bf end if}
\STATE {\bf end while}
\end{algorithmic}
\end{algorithm}

To verify correctness of Algorithm~\ref{alg:DistanceTransform},
first note that in line 6 we have $h(\lambda_{p+1})\ge \bar h(\lambda_{p+1})$
since $s_p \ge w$.
Therefore, in line 7 we are guaranteed to have $h(\lambda_{q-1})\ge \bar h(\lambda_{q-1})$
and $h(\lambda_{q})< \bar h(\lambda_{q})$,
and so value $\lambda$ in line 7 indeed exists.
It can also be checked that in line 3 we always have $s_{q-1}\le w$;
for $q=1$ this holds since $s_0\le 0$ due to the boundedness of $h$. \footnote{If we didn't have
the assumption that $f(x)$ is bounded, then we could modify Algorithm~\ref{alg:DistanceTransform}
as follows: if $s_0> w$ then return $\sigma=(w)$ and $\Pi=[-\infty,+\infty]$.}
This implies correctness of the procedure.

It can be seen that the number of breakpoints cannot increase:
if a new breakpoint $\lambda$ is introduced in line 8, then at least one
old breakpoint is removed, namely $\lambda_{q-1}$.
%Let us describe it more for

%By Proposition~\ref{prop:ConsecutiveMinConvolution}, it suffices to show

%\item {\bf{Computing  $h\otimes f^2_{ij}$}}.~~ 
\paragraph{Computing  $h\otimes f^2_{ij}$} 
This case can be reduced to the previous one as follows:
$h\otimes f^2_{ij}=(h^{\tt rev}\otimes f^1_{ij})^{\tt rev}$ where $\varphi^{\tt rev}$ for a function $\varphi$ is defined via $\varphi^{\tt rev}(z)=\varphi(-z)$.
(To transform $\varphi$ to $\varphi^{\tt rev}$, we need to reverse the sequence for $\varphi$ and multiply all components by $-1$).

To reduce the number of passes through the sequence, one can modify Algorithm~\ref{alg:DistanceTransform}
so that it immediately produces the sequence for $h\otimes^{\tt rev} f^1_{ij}\eqdef (h\otimes f^1_{ij})^{\tt rev}$,
and then apply it twice noting that  $(h\otimes f^1_{ij})\otimes f^2_{ij}=(h\otimes^{\tt rev} f^1_{ij})\otimes^{\tt rev} f^1_{ij}$.

%\item {\bf{Computing  $h\otimes f^3_{ij}$}}.~~ 
\paragraph{Computing  $h\otimes f^3_{ij}$} 
Assume that $0<C<+\infty$. Function $h$ is defined only up to an additive constant, so we can set e.g.\ $h(\lambda_1)=0$.
First,
we compute $p\in\argmin_{p\in[t]} h(\lambda_p)$
and set $C'=h(\lambda_p)+C$.
We now have $(h\otimes f^3_{ij})(z)=\min \{h(z),C'\}$ (Fig.~\ref{fig:MinConvolution}(c)).
Let $\Pi$ be the set of intervals whose union equals $\{z\:|\:h(z)\ge C'\}$,
then the mapping $\pi$ is given by
$$
\pi(y)=\begin{cases}
\lambda_q & \mbox{if $y\in[\lambda^-,\lambda^+]$ for some $[\lambda^-,\lambda^+]\in\Pi$} \\
y & \mbox{otherwise}
\end{cases}
$$
The number of breakpoints in $h\otimes f^3_{ij}$ is at most $2t+1$ since for each of the $t+1$ linear segments of $h$ at most one new breakpoint is introduced.

Clearly, the sequence for $h\otimes f^3_{ij}$ and the set $\Pi$ can be computed in $O(t+1)$ time; we omit details.

%\end{itemize}

\subsection{Extensions}
To conclude the discussion of the non-convex case, we mention two possible extensions: 
\begin{itemize}
\item[(i)] Allow pairwise terms $f_{ij}$ to be piecewise-linear functions that are 
non-increasing concave on $(-\infty,0]$ and non-decreasing concave on $[0,+\infty)$. 
(A truncated TV regularizer is a special case of that.)
\item[(ii)] Allow unary terms $f_i$ to be piecewise-quadratic.
\end{itemize}
We claim that in both cases messages can be computed exactly
(either as piecewise-linear or piecewise-quadratic), although the number of breakpoints could grow exponentially.
Below we give a proof sketch only for the first extension, in which the messages stay piecewise-linear.

Adding a constant to $f_{ij}$ does not change the problem, so we can assume w.l.o.g.\ that $f_{ij}(0)=0$.
If $f_{ij}$ has $m-1$ breakpoints then we can represent it as a minimum of functions $f^1_{ij},\ldots,f^m_{ij}$
where $f_{ij}^k$ satisfies $f_{ij}^k(0)=0$ and is
either (a) linear on $(-\infty,0)$ and $+\infty$ on $(0,+\infty)$,
or (b) vice versa: $+\infty$ on $(-\infty,0)$ and linear on $(0,+\infty)$.
It can be checked that $f_{ij}$ satisfies the triangle inequality,
so we can apply Proposition~\ref{prop:ConsecutiveMinConvolution}.
It is thus sufficient to describe how to compute min-convolution $h\otimes g$ for each $g\in\{f^1_{ij},\ldots,f^m_{ij}\}$.

Assume that $g$ is $+\infty$ on $(-\infty,0)$ and linear on $(0,+\infty)$
 (the other case is symmetric).
We have $g(z)=\alpha z+C$ for $z>0$, where $\alpha,C\ge 0$ are some constants. 
%We claim that is suffices to consider the case $\alpha=0$.
For a function $\varphi$ define function $\varphi^\alpha$ via $\varphi^\alpha(z)=\varphi(z)-\alpha z$.
(Note that adding a linear term to a $\varphi$  can be done 
by traversing the sequence representing $\varphi$ and increasing all slopes by a constant.)
It can be checked that $h\otimes g=(h^\alpha\otimes g^\alpha)^{-\alpha}$,
so it suffices to consider the min-convolution $h^\alpha\otimes g^\alpha$.
Such min-convolution is illustrated in Fig.~\ref{fig:MinConvolution}(d).
It is not difficult to see that it adds at most $t$ breakpoints
and can be implemented in $O(t+1)$ time, where $t$ is the number of breakpoints of $h$.
We leave details to the reader.

%%%%%%%%%%%%%%%%%%%%%%%%%%%%%%%%%%%%%%%%%%%%%%%%%%%%%%%%%%%%%%%%%%%%%%%%%%%%%%%%%%%%%%%%%%
%%%%%%%%%%%%%%%%%%%%%%%%%%%%%%%%%%%%%%%%%%%%%%%%%%%%%%%%%%%%%%%%%%%%%%%%%%%%%%%%%%%%%%%%%%
%%%%%%%%%%%%%%%%%%%%%%%%%%%%%%%%%%%%%%%%%%%%%%%%%%%%%%%%%%%%%%%%%%%%%%%%%%%%%%%%%%%%%%%%%%
%%%%%%%%%%%%%%%%%%%%%%%%%%%%%%%%%%%%%%%%%%%%%%%%%%%%%%%%%%%%%%%%%%%%%%%%%%%%%%%%%%%%%%%%%%
%%%%%%%%%%%%%%%%%%%%%%%%%%%%%%%%%%%%%%%%%%%%%%%%%%%%%%%%%%%%%%%%%%%%%%%%%%%%%%%%%%%%%%%%%%
%%%%%%%%%%%%%%%%%%%%%%%%%%%%%%%%%%%%%%%%%%%%%%%%%%%%%%%%%%%%%%%%%%%%%%%%%%%%%%%%%%%%%%%%%%
%%%%%%%%%%%%%%%%%%%%%%%%%%%%%%%%%%%%%%%%%%%%%%%%%%%%%%%%%%%%%%%%%%%%%%%%%%%%%%%%%%%%%%%%%%
%%%%%%%%%%%%%%%%%%%%%%%%%%%%%%%%%%%%%%%%%%%%%%%%%%%%%%%%%%%%%%%%%%%%%%%%%%%%%%%%%%%%%%%%%%
%%%%%%%%%%%%%%%%%%%%%%%%%%%%%%%%%%%%%%%%%%%%%%%%%%%%%%%%%%%%%%%%%%%%%%%%%%%%%%%%%%%%%%%%%%

\section{Convex case}\label{sec:convex}
We now assume that all functions $f_i$ and $f_{ij}$ are convex; as we will see, in this case
function $f(x)$ can be minimized much more efficiently. For convenience of notation we will assume
that functions $f_{ij}$ are given by
$$
f_{ij}(z)=\begin{cases}
w^-_{ij}\cdot z & \mbox{if }z < 0 \\
w^+_{ij}\cdot z & \mbox{if }z \ge 0 
\end{cases}
$$
with $w^-_{ij}\le w^+_{ij}$.

\begin{figure}[!t]
\center
\begin{tabular}{@{\hspace{0pt}}c@{\hspace{12pt}}c@{\hspace{12pt}}c@{\hspace{12pt}}c@{\hspace{12pt}}c}
{\small $\widehat M_{i}(z)$} & &
{\small ~~~$M_{ij}(z)$}
\vspace{-15pt} 
\\
%%%%%%%%%%%%%%%%%%%%%%%%%%%%%%%%%%%%%%%%%%%%%%%%%%%%%%%%%%%%%%%%%%%%%%%%%%%%%%%%%%
%%%%%%%%%%%%%%%%%%%%%%%%%%%%%%%%%%%%%%%%%%%%%%%%%%%%%%%%%%%%%%%%%%%%%%%%%%%%%%%%%%
%%%%%%%%%%%%%%%%%%%%%%%%%%%%%%%%%%%%%%%%%%%%%%%%%%%%%%%%%%%%%%%%%%%%%%%%%%%%%%%%%%
\includegraphics[scale=0.9]{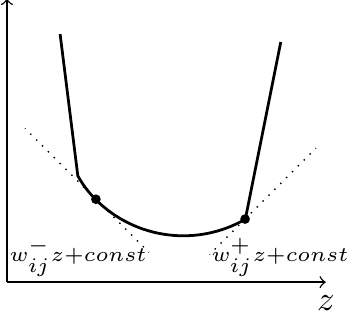}
& \raisebox{40pt}{$\mbox{\large$\Rightarrow$}$} &
%%%%%%%%%%%%%%%%%%%%%%%%%%%%%%%%%%%%%%%%%%%%%%%%%%%%%%%%%%%%%%%%%%%%%%%%%%%%%%%%%%
%%%%%%%%%%%%%%%%%%%%%%%%%%%%%%%%%%%%%%%%%%%%%%%%%%%%%%%%%%%%%%%%%%%%%%%%%%%%%%%%%%
%%%%%%%%%%%%%%%%%%%%%%%%%%%%%%%%%%%%%%%%%%%%%%%%%%%%%%%%%%%%%%%%%%%%%%%%%%%%%%%%%%
%\hspace{-100pt}
\raisebox{-8.5pt}{
\includegraphics[scale=0.9]{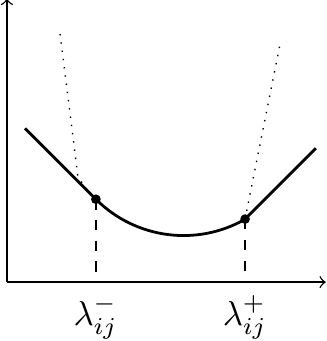}
}
%%%%%%%%%%%%%%%%%%%%%%%%%%%%%%%%%%%%%%%%%%%%%%%%%%%%%%%%%%%%%%%%%%%%%%%%%%%%%%%%%%
%%%%%%%%%%%%%%%%%%%%%%%%%%%%%%%%%%%%%%%%%%%%%%%%%%%%%%%%%%%%%%%%%%%%%%%%%%%%%%%%%%
%%%%%%%%%%%%%%%%%%%%%%%%%%%%%%%%%%%%%%%%%%%%%%%%%%%%%%%%%%%%%%%%%%%%%%%%%%%%%%%%%%
\end{tabular}
\caption{Min-convolution $M_{ij}=\widehat M_i\otimes f_{ij}$. If $\widehat M_i$ is convex then
so is $M_{ij}$. Function $M_{ij}$ coincides with $\widehat M_i$
on $[\lambda^-_{ij},\lambda^+_{ij}]$ and is linear on $(-\infty,\lambda^-_{ij}]$ and $[\lambda^+_{ij},+\infty)$ 
with the slopes $w^-_{ij}$ and $w^+_{ij}$ respectively.
}
\label{fig:ConvexMessage}
\end{figure}

It can be checked that if function $\widehat M_i$ is convex then so is $M_{ij}=\widehat M_i\otimes f_{ij}$ (see Fig.~\ref{fig:ConvexMessage}),
and therefore 
%
%It can be computed as follows: first, find interval $[\lambda^-_{ij},\lambda^+_{ij}]$
%such that \\
%(i) the slope of $\widehat M_i$ on $(-\infty,\lambda^-_{ij})$ is smaller or equal than $w^-_{ij}$; \\
%(ii) the slope of $\widehat M_i$ on $(\lambda^-_{ij},\lambda^+_{ij})$ belongs to $[w^-_{ij},w^+_{ij}]$; \\
%(iii) the slope of $\widehat M_i$ on $(\lambda^+_{ij},+\infty)$ is greater or equal than $w^+_{ij}$. \\
%Then $M_{ij}$ can be defined as a continuous convex function which is (i) linear on $(-\infty,\lambda^-_{ij}]$
%with the slope $w^-_{ij}$, (ii) coincides with $\widehat M_i$ on  $[\lambda^-_{ij},\lambda^+_{ij}]$,
%and (iii) is linear on $(-\infty,\lambda^-_{ij}]$ with the slope $w^+_{ij}$.
%
by induction all messages $\widehat M_i$ and $M_{ij}$ will be convex.
It will be more convenient to work with their derivatives $\widehat m_i(z)={\widehat M}'_i(z)$
and $m_{ij}(z)=M'_{ij}(z)$. If $\widehat M_i$ is not differentiable at $z$ then
we let $\widehat m_i(z)$ to be an arbitrary subgradient of $M_i$ at $z$ (and similarly for $M_{ij}$).
Note that functions $\widehat m_i$, $m_{ij}$ are non-decreasing and satisfy
\begin{eqnarray*}
\widehat M_{i}(z)&=&const+\int_{0}^{z}\widehat m_{i}(\lambda)d\lambda \qquad \hspace{3pt} \forall  z\in\mathbb R \\
M_{ij}(z)&=&const+\int_{0}^{z}m_{ij}(\lambda)d\lambda \qquad \forall z\in\mathbb R
\end{eqnarray*}
%(or subgradients

%Algorithm~\alg{alg:DP} below shows
Let $g_i(z)$ be a subgradient of $f_i$ at $z$; function $g_i$ is then non-decreasing (and not necessarily continuous).
An algorithm that works with subgradients is given below (Algorithm~\ref{alg:DP}). 
In this algorithm we denote  ${\tt clip}_{[a,b]}(z)=\min\{\max\{z,a\},b\}\}$ (i.e.\ the projection of $z$ to $[a,b]$). The updates in lines 2 and 3 correspond
to eq.~\eqref{eq:Message:update:a} and~\eqref{eq:Message:update:b} respectively,
and the values $\lambda^-_{ij},\lambda^+_{ij}$ in line 4 describe the mapping $\pi_{ij}$ corresponding to the min-convolution $M_{ij}=\widehat M_i\otimes f_{ij}$.

\begin{algorithm}[!h]
\caption{DP algorithm for the convex case. % for solving~\eqref{eq:main} and \eqref{eq:derivative}. 
}\label{alg:DP}
\begin{algorithmic}[1]
\setalglineno{0}
\STATE add new node $\hat r$ and edge $(r,\hat r)$ to $(V,E)$, make $\hat r$ the new root;
set $w^-_{r\hat r}=w^+_{r\hat r}=0$ 
%\STATEx
\\ // {\em forward pass; steps 2 and 3 should be done for all $z\in\mathbb R$}\!\!\!
\STATE {\bf for each} edge $(i,j)\in E$ {\bf do} in the order from the leaves toward the root
\STATE ~~~ set
%\begin{subequations}
%\label{eq:message:lambda:update}
$
\widehat m_{i}(z)=g_i(z)+\mathlarger\sum\limits_{(k,i)\in E} m_{ki}(z)
$
\STATE ~~~  set
$
m_{ij}(z)={\tt clip}_{[w^-_{ij},w^+_{ij}]}(\widehat m_{i}(z))
$
\STATE ~~~ find interval $[\lambda^-_{ij}$, $\lambda^+_{ij}]$ such that
     $\widehat m_i(\lambda^-_{ij})=w^-_{ij}$
and $\widehat m_i(\lambda^+_{ij})= w^+_{ij}$ % ~~~ // {\em for the backward pass}
\STATE {\bf end for}
%\STATEx
\\ // {\em backward pass}
\STATE set $x_{r}\in[\lambda^-_{r\hat r},\lambda^+_{r\hat r}]$
\STATE {\bf for each} edge $(i,j)\in E-\{(r,\hat r)\}$ {\bf do} in the order from the root toward the leaves
\STATE
~~~ set $x_i={\tt clip}_{[\lambda^-_{ij},\lambda^+_{ij}]}(x_j)$
%\end{subequations}
\STATE {\bf end for}
\end{algorithmic}
\end{algorithm}

If in line 4 there is no $\lambda^-_{ij}$ with $\widehat m_i(\lambda^-_{ij})=w^-_{ij}$ then
we use a natural rule (considering that function $\widehat m_i$ is non-decreasing), namely
set 
\begin{equation}
\lambda^-_{ij}\in[
\;\sup\{z\:|\:\widehat m_i(z)<w^-_{ij}\},\;
\inf\{z\:|\:\widehat m_i(z)>w^-_{ij}\}\;
]
\label{eq:FALSIGAHSG}
\end{equation}
Note that the bounds in~\eqref{eq:FALSIGAHSG} (and thus $\lambda^-_{ij}$)
can be infinite.
A similar rule is used for $\lambda^+_{ij}$.

Note that Algorithm~\ref{alg:DP} is equivalent to the one given by Johnson~\cite{Johnson2013},
except that the latter has been formulated for smooth functions $f_i$ and a chain graph $(V,E)$.

%%%%%%%%%%%%%%%%%%%%%%%%%%%%%%%%%%%%%%%%%%%%%%%%%%%%%%%%%%%%%%%%%%%%%%%%%%%%%%%%%%%%%%%
%%%%%%%%%%%%%%%%%%%%%%%%%%%%%%%%%%%%%%%%%%%%%%%%%%%%%%%%%%%%%%%%%%%%%%%%%%%%%%%%%%%%%%%
%%%%%%%%%%%%%%%%%%%%%%%%%%%%%%%%%%%%%%%%%%%%%%%%%%%%%%%%%%%%%%%%%%%%%%%%%%%%%%%%%%%%%%%
%%%%%%%%%%%%%%%%%%%%%%%%%%%%%%%%%%%%%%%%%%%%%%%%%%%%%%%%%%%%%%%%%%%%%%%%%%%%%%%%%%%%%%%
%%%%%%%%%%%%%%%%%%%%%%%%%%%%%%%%%%%%%%%%%%%%%%%%%%%%%%%%%%%%%%%%%%%%%%%%%%%%%%%%%%%%%%%
\subsection{Quadratic unaries on a chain: $O(n)$ algorithm}\label{sec:Convex:Quadratic}
Let us assume that graph $(V,E)$ is a chain 
with nodes $V=[n]$, where $n$ is the root.
We also assume that the unary terms are  strictly convex quadratic functions:
$f_i(x_i)=\frac{1}{2}a_i x_i^2 - b_i x_i$ with $a_i>0$.
We thus have $g_i(x_i)=a_i x - b_i$.
As shown in~\cite{Johnson2013}, Algorithm~\ref{alg:DP} can be implemented in $O(n)$ time.
In this section we describe a more memory-efficient version:
we use 2 floating points per breakpoint, while~\cite{Johnson2013} used 3 floating points plus
a Boolean flag.

It can be  checked by induction that all messages $m_{ij}$ and $\widehat m_i$ are
piecewise-linear non-decreasing functions, and the latter are strictly increasing (see Fig.~\ref{fig:MessageUpdate}).
We will maintain the current message  (which can be either $\widehat m_i$ or $m_{ij}$)  with $t$ breakpoints and $t+1$ segments
as a sequence $(s_0,\lambda_1,s_1,\ldots,s_{t-1},\lambda_t,s_t)$.
Here $\lambda_p$ is the $X$-coordinate of the $p$-th breakpoint with $\lambda_1\le\ldots\le\lambda_t$, and $s_p$ represents in a certain way the slope of the $p$-th segment.

If $s_p$ were the true slope of the corresponding segment then in transformation (a)
in Fig.~\ref{fig:MessageUpdate} we would need to go through the entire sequence and increase the slopes
by $a_i$.
To avoid this expensive operation, we use an implicit representation:
the true slope of the $p$-th segment in messages $\widehat m_i$ and $ m_{ij}$ is given by $s_p+\overline a_i$,
where $\overline a_i=\sum_{k=1}^i a_k$. Thus, the transformation (a) in Fig.~\ref{fig:MessageUpdate} 
is performed automatically; we just need to compute $\overline a_i=\overline a_{i-1} +a_i$.

Sequence $(s_0,\lambda_1,s_1,\ldots,s_{t-1},\lambda_t,s_t)$ will be stored contiguously in an array of size $4n+O(1)$
together with indexes pointing to the first and the last elements.
In the beginning the sequence is placed in the middle of this array, so that there is a sufficient space for growing both to the left and to the right.

\begin{figure*}[!t]
\center
\begin{tabular}{@{\hspace{0pt}}c@{\hspace{12pt}}c@{\hspace{12pt}}c@{\hspace{12pt}}c@{\hspace{12pt}}c}
%
%{\small $\widehat m_i(\lambda)$} & &
%{\small $m_{ij}(\lambda)={\tt clip}_{[w^-_{ij},w^+_{ij}]}(\widehat m_i(\lambda))$}  & &
%{\small $\widehat m_{j}(\lambda)=m_{ij}(\lambda)+(a_j\lambda-b_j)$} 
%
{\small ~~~~~~~$m_{ki}(z)$} & &
{\small ~~~$\widehat m_i(z)$} & &
{\small ~~~~~~~$m_{ij}(z)$}  
\vspace{-10pt} 
\\
%%%%%%%%%%%%%%%%%%%%%%%%%%%%%%%%%%%%%%%%%%%%%%%%%%%%%%%%%%%%%%%%%%%%%%%%%%%%%%%%%%
%%%%%%%%%%%%%%%%%%%%%%%%%%%%%%%%%%%%%%%%%%%%%%%%%%%%%%%%%%%%%%%%%%%%%%%%%%%%%%%%%%
%%%%%%%%%%%%%%%%%%%%%%%%%%%%%%%%%%%%%%%%%%%%%%%%%%%%%%%%%%%%%%%%%%%%%%%%%%%%%%%%%%
\includegraphics[height=70pt]{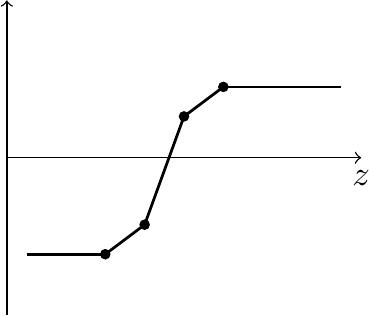}
& \raisebox{32pt}{$\stackrel{\mbox{\raisebox{1pt}{\small(a)}}}{\mbox{\large$\Rightarrow$}}$} &
%%%%%%%%%%%%%%%%%%%%%%%%%%%%%%%%%%%%%%%%%%%%%%%%%%%%%%%%%%%%%%%%%%%%%%%%%%%%%%%%%%
%%%%%%%%%%%%%%%%%%%%%%%%%%%%%%%%%%%%%%%%%%%%%%%%%%%%%%%%%%%%%%%%%%%%%%%%%%%%%%%%%%
%%%%%%%%%%%%%%%%%%%%%%%%%%%%%%%%%%%%%%%%%%%%%%%%%%%%%%%%%%%%%%%%%%%%%%%%%%%%%%%%%%
\includegraphics[height=70pt]{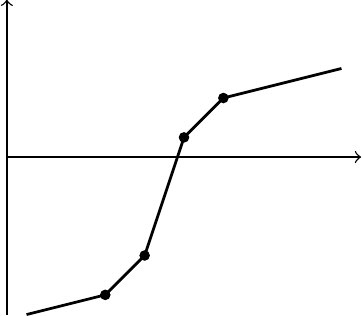}
& \raisebox{32pt}{$\stackrel{\mbox{\raisebox{1pt}{\small(b)}}}{\mbox{\large$\Rightarrow$}}$} &
%%%%%%%%%%%%%%%%%%%%%%%%%%%%%%%%%%%%%%%%%%%%%%%%%%%%%%%%%%%%%%%%%%%%%%%%%%%%%%%%%%
%%%%%%%%%%%%%%%%%%%%%%%%%%%%%%%%%%%%%%%%%%%%%%%%%%%%%%%%%%%%%%%%%%%%%%%%%%%%%%%%%%
%%%%%%%%%%%%%%%%%%%%%%%%%%%%%%%%%%%%%%%%%%%%%%%%%%%%%%%%%%%%%%%%%%%%%%%%%%%%%%%%%%
\includegraphics[height=70pt]{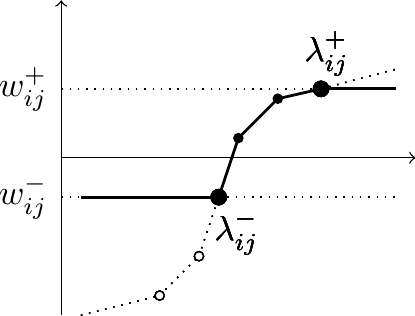}
%%%%%%%%%%%%%%%%%%%%%%%%%%%%%%%%%%%%%%%%%%%%%%%%%%%%%%%%%%%%%%%%%%%%%%%%%%%%%%%%%%
%%%%%%%%%%%%%%%%%%%%%%%%%%%%%%%%%%%%%%%%%%%%%%%%%%%%%%%%%%%%%%%%%%%%%%%%%%%%%%%%%%
%%%%%%%%%%%%%%%%%%%%%%%%%%%%%%%%%%%%%%%%%%%%%%%%%%%%%%%%%%%%%%%%%%%%%%%%%%%%%%%%%%
\end{tabular}
\caption{TV on a chain with quadratic unaries; $(k,i,j)=(i\!-\!1,i,i\!+\!1)$. Message $m_{ij}$ is obtained from $m_{ki}$
via two transformations:
$\widehat m_{i}(z)=m_{ki}(z)+a_i\lambda-b_i$
and
$m_{ij}(z)={\tt clip}_{[w^-_{ij},w^+_{ij}]}(\widehat m_i(z))$.
In transformation (b) two new breakpoints are added (shown as thick black circles),
and some breakpoints may be removed (shown as empty circles).
}
\label{fig:MessageUpdate}
\end{figure*}

\paragraph{Forward pass} We have described the data structure used for storing the messages;
let us now discuss how these data structures are updated during transformation (b) in Fig.~\ref{fig:MessageUpdate} for edge $(i,j)\in E$.
We assume that $(s_0,\lambda_1,s_1,\ldots,s_{t-1},\lambda_t,s_t)$, $t\ge 2$
is the current sequence for message $\widehat m_i$. 

During the update some breakpoints at the two ends of the sequence may be removed
(they are shown as empty circles in Fig.~\ref{fig:MessageUpdate}),
and two new breakpoints $\lambda^-_{ij}$ and $\lambda^+_{ij}$ are appended at the ends.
Let $d_i\ge 0$ be the number of removed breakpoints.
We show below that the update can be performed in $O(d_i+1)$ time.
This will imply that the forward pass takes $O(n)$ time. Indeed, we can use an amortized analysis;
when adding a new breakpoint, we give it one unit of credit, and use this credit when the breakpoint
is removed. The total number of added breakpoints is $2n$, which gives the claim.

In the remainder of this section we describe details of the update for edge $(i,j)$.
To determine where $\lambda^-_{ij}$ is to be inserted, we first need to find the largest index $\ell$ with $\widehat m_i(\lambda_{\ell})<w^-_{ij}$;
if there is no such index then let $\ell=0$. This can be done by traversing the breakpoints from left to right, % $\lambda_1,\ldots,\lambda_{\min\{\ell+1,t\}}$
stopping when index $\ell$ is found.
%from left to right. 
Note that values $\widehat m_i(\lambda_k)$ are not stored explicitly, but can be recursively computed as follows (here $k=i-1$):
\begin{eqnarray*}
\widehat m_i(\lambda_1)&=&w^-_{ki}+a_i\lambda_1-b_i \\
\widehat m_i(\lambda_{p+1})&=&\widehat m_i(\lambda_p) + (s_p+\overline a_i)\cdot(\lambda_{p+1}-\lambda_p)
\end{eqnarray*}
Once $\ell$ is computed, we  can find the value $\lambda^-_{ij}$ for which
 $\widehat m_i(\lambda^-_{ij})=w^-_{ij}$ in $O(1)$ time.
We then change the sequence to
$(-\overline a_i,\lambda^-_{ij},s_{\ell+1},\ldots,s_{t-1},\lambda_t,s_t)$.
All these operations
take $O(\ell+1)$ time.

%From now on let us redefine $\alpha_\ell=\beta^-_i$.
Inserting breakpoint $\lambda^+_{ij}$ is performed in a similar way. We traverse the sequence from right to left
and compute the smallest index $r\ge \ell+1$ with $\widehat m_i(\lambda_r)>w^+_{ij}$; if there is no such $r$ then let $r=t+1$.
We use recursions
\begin{eqnarray*}
\widehat m_i(\alpha_t)&=&w^+_{ki}+a_i\lambda_{t}-b_i \\
\widehat m_i(\lambda_p)&=&\widehat m_{i}(\lambda_{p+1}) - (s_p+\overline a_i)\cdot(\lambda_{p+1}-\lambda_p)
\end{eqnarray*}
We then set $\lambda^+_{ij}$ so that $\widehat m_i(\lambda^+_{ij})=w^+_{ij}$
and change the sequence to
$(-\overline a_i,\lambda^-_{ij},s_{\ell+1},\ldots,s_{r-1},\lambda^+_{ij},-\overline a_i)$.

\paragraph{Backward pass}
In the backward pass we need to update $x_i={\tt clip}_{[\lambda^-_{ij},\lambda^+_{ij}]}(x_j)$ for all edges $(i,j)\in E$
in the backward order.
These updates can be performed in $O(n)$ time if
we record values $\lambda^-_{ij}$ and  $\lambda^+_{ij}$
during the forward pass.

The overall memory requirements of the algorithm (excluding input parameters) is thus $6n+O(1)$ floating point numbers:
$4n+O(1)$ for storing the sequence $(s_0,\lambda_1,s_1,\ldots,s_{t-1},\lambda_t,s_t)$
and $2n+O(1)$ for storing breakpoints $\lambda^-_{ij},\lambda^+_{ij}$ for all edges $(i,j)$.
Note, breakpoints $\lambda^-_{ij}$ can be stored in the same array used for returning the solution $x$.

%%%%%%%%%%%%%%%%%%%%%%%%%%%%%%%%%%%%%%%%%%%%%%%%%%%%%%%%%%%%%%%%%%%%%%%%%%%%%%%%%%%%%%%
%%%%%%%%%%%%%%%%%%%%%%%%%%%%%%%%%%%%%%%%%%%%%%%%%%%%%%%%%%%%%%%%%%%%%%%%%%%%%%%%%%%%%%%
%%%%%%%%%%%%%%%%%%%%%%%%%%%%%%%%%%%%%%%%%%%%%%%%%%%%%%%%%%%%%%%%%%%%%%%%%%%%%%%%%%%%%%%
%%%%%%%%%%%%%%%%%%%%%%%%%%%%%%%%%%%%%%%%%%%%%%%%%%%%%%%%%%%%%%%%%%%%%%%%%%%%%%%%%%%%%%%
%%%%%%%%%%%%%%%%%%%%%%%%%%%%%%%%%%%%%%%%%%%%%%%%%%%%%%%%%%%%%%%%%%%%%%%%%%%%%%%%%%%%%%%
\subsection{Piecewise-quadratic unaries: $O(n\log n)$ algorithm}\label{sec:Convex:PiecewiseQuadratic}
To simplify the presentation, we will first consider the case of piecewise-linear unaries on a chain,
and then discuss extensions to trees and to piecewise-quadratic unaries.

\paragraph{Piecewise-linear unaries on a chain}
In this case terms $g_i=f'_i$ and the messages $m_{ij}$ and $\widehat m_i$ will be piecewise-constant non-decreasing functions;
Fig.~\ref{fig:PiewiseLinear} illustrates how they are updated. The current message $h:\mathbb R\rightarrow\mathbb R$ (which is either $m_{ij}$ or $\widehat m_i$)
will be represented by the following data: (i) values $h^-=h(-\infty)=\min_z h(z)$ and $h^+=h(+\infty)=\max_z h(z)$;
(ii) a multiset $S$ of breakpoints of the form $\sigma=(\lambda_\sigma,\delta_\sigma)$ % $(\lambda_p,\delta_p)$ for $p=1,\ldots,t$ where
where  $\lambda_\sigma$ is its $X$-coordinate and $\delta_\sigma$ is the increment in the value of $h$ at this breakpoint.
We thus have
$$
h(z)=h^- + \sum_{\sigma\in S:\lambda_\sigma<z} \delta_\sigma = h^+ - \sum_{\sigma\in S:\lambda_\sigma\ge z} \delta_\sigma
%\qquad \forall z\in\mathbb \mathbb R-\{\lambda_\sigma\:|\:\sigma\in S\}
$$
assuming that $g$ is left-continuous.
Points $\sigma\in S$ will be stored in a {\em double ended priority queue} which
allows the following operations: 
 {\tt Insert} (which inserts a single a point into $S$),
 {\tt FindMin} (which finds a point $\sigma\in S$ with the minimum value of $\lambda_\sigma$),
 {\tt FindMax},
 {\tt RemoveMin}, and
 {\tt RemoveMax}.
In our implementation we use a Min-Max Heap~\cite{MinMaxHeap} which takes $O(1)$ for {\tt FindMin}/{\tt FindMax}
and $O(\log n)$ for {\tt Insert}/{\tt RemoveMin}/{\tt RemoveMax} (assuming that the total number of points is bounded by $O(n)$).

\begin{figure*}[!t]
\center
\begin{tabular}{@{\hspace{0pt}}c@{\hspace{12pt}}c@{\hspace{12pt}}c@{\hspace{12pt}}c@{\hspace{12pt}}c}
%
%{\small $\widehat m_i(\lambda)$} & &
%{\small $m_{ij}(\lambda)={\tt clip}_{[w^-_{ij},w^+_{ij}]}(\widehat m_i(\lambda))$}  & &
%{\small $\widehat m_{j}(\lambda)=m_{ij}(\lambda)+(a_j\lambda-b_j)$} 
%
{\small ~~~$m_{ki}(z)$} & &
{\small ~~~~~~~$\widehat m_{i}(z)$}  & &
{\small ~~~~~~~$m_{ij}(z)$} 
\vspace{-10pt} 
\\
%%%%%%%%%%%%%%%%%%%%%%%%%%%%%%%%%%%%%%%%%%%%%%%%%%%%%%%%%%%%%%%%%%%%%%%%%%%%%%%%%%
%%%%%%%%%%%%%%%%%%%%%%%%%%%%%%%%%%%%%%%%%%%%%%%%%%%%%%%%%%%%%%%%%%%%%%%%%%%%%%%%%%
%%%%%%%%%%%%%%%%%%%%%%%%%%%%%%%%%%%%%%%%%%%%%%%%%%%%%%%%%%%%%%%%%%%%%%%%%%%%%%%%%%
\includegraphics[height=70pt]{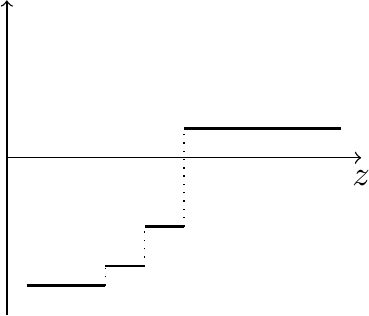}
& \raisebox{32pt}{$\stackrel{\mbox{\raisebox{1pt}{\small(a)}}}{\mbox{\large$\Rightarrow$}}$} &
%%%%%%%%%%%%%%%%%%%%%%%%%%%%%%%%%%%%%%%%%%%%%%%%%%%%%%%%%%%%%%%%%%%%%%%%%%%%%%%%%%
%%%%%%%%%%%%%%%%%%%%%%%%%%%%%%%%%%%%%%%%%%%%%%%%%%%%%%%%%%%%%%%%%%%%%%%%%%%%%%%%%%
%%%%%%%%%%%%%%%%%%%%%%%%%%%%%%%%%%%%%%%%%%%%%%%%%%%%%%%%%%%%%%%%%%%%%%%%%%%%%%%%%%
\includegraphics[height=70pt]{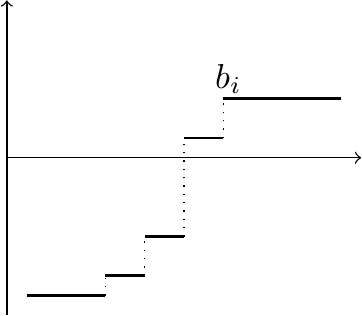}
& \raisebox{32pt}{$\stackrel{\mbox{\raisebox{1pt}{\small(b)}}}{\mbox{\large$\Rightarrow$}}$} &
%%%%%%%%%%%%%%%%%%%%%%%%%%%%%%%%%%%%%%%%%%%%%%%%%%%%%%%%%%%%%%%%%%%%%%%%%%%%%%%%%%
%%%%%%%%%%%%%%%%%%%%%%%%%%%%%%%%%%%%%%%%%%%%%%%%%%%%%%%%%%%%%%%%%%%%%%%%%%%%%%%%%%
%%%%%%%%%%%%%%%%%%%%%%%%%%%%%%%%%%%%%%%%%%%%%%%%%%%%%%%%%%%%%%%%%%%%%%%%%%%%%%%%%%
\includegraphics[height=70pt]{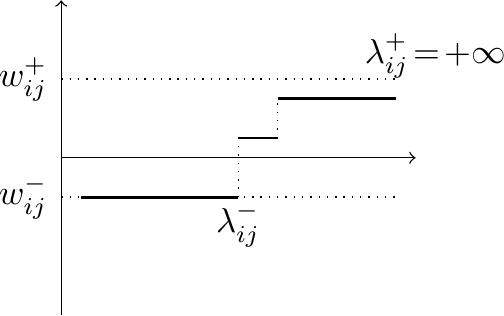}
%%%%%%%%%%%%%%%%%%%%%%%%%%%%%%%%%%%%%%%%%%%%%%%%%%%%%%%%%%%%%%%%%%%%%%%%%%%%%%%%%%
%%%%%%%%%%%%%%%%%%%%%%%%%%%%%%%%%%%%%%%%%%%%%%%%%%%%%%%%%%%%%%%%%%%%%%%%%%%%%%%%%%
%%%%%%%%%%%%%%%%%%%%%%%%%%%%%%%%%%%%%%%%%%%%%%%%%%%%%%%%%%%%%%%%%%%%%%%%%%%%%%%%%%
\end{tabular}
\caption{TV on a chain with piecewise-linear unaries.
}
\label{fig:PiewiseLinear}
\end{figure*}
Let us discuss how to update this data during message passing. First, consider the update $\widehat m_{ij}=m_{ki}+g_i$.
If $g_i$ has one breakpoint, i.e.\ 
%$f_i(z)=\begin{cases}a^-_i|z-b_i| & \mbox{if } z\le b_i \\ a^+_i(z-b_i) & \mbox{if } z> b_i \end{cases}$
$g_i(z)=\begin{cases}a^-_i & \mbox{if } z\le b_i \\ a^+_i & \mbox{if } z> b_i \end{cases}$,
then we insert $\sigma=(b_i,a^+_i-a^-_i)$ into $S$ and update $h^-:=h^-+a^-$, $h^+:=h^++a^+$.
If $g_i$ has more than one breakpoint then the procedure is similar. Now consider the update $m_{ij}(z)={\tt clip}_{[w^-_{ij},w^+_{ij}]}(\widehat m_{i}(z))$.
To clip from below, we first need to remove points $\sigma\in S$ with $\widehat m_{i}(\lambda_\sigma+0)\le w^-_{ij}$.
For that we repeatedly call {\tt FindMin}/{\tt RemoveMin}  (updating $h^-$ accordingly) until we get $h^-\ge w^-_{ij}$.
As the last step, we may need to update 
the value $\delta_\sigma$ for $\sigma={\tt FindMin}$. (The case when $S$ becomes empty should be handled separately.) Clipping from above is done in a similar way.
During this procedure we can also compute values $\lambda^-_{ij}$ and  $\lambda^+_{ij}$.

It remains to discuss the complexity. {\tt Insert}, {\tt RemoveMin} and {\tt RemoveMax} operations are called at most $O(n)$ times
(the number of points is $O(n)$), so they take $O(n\log n)$ time. When computing $m_{ij}$, the number of calls to {\tt FindMin}
exceeds that to {\tt RemoveMin} by at most 1 (and similarly for ``{\tt Max}''), so {\tt FindMin}/{\tt FindMax} are called $O(n)$ times
and take $O(n)$ time.
\paragraph{Extension to trees} If graph $(V,E)$ is a tree, we will use the same data structure for each branch
as we go from the leaves toward the root. The difference from the previous case
is that now during the update $\widehat m_i=g_i+\sum_{(k,i)\in E}m_{ki}$ we need to compute the union of multisets
corresponding to messages $m_{ki}$. Thus, we need a version of double ended priority queue that allows
merging two queues. One possibility is to use two Fibonacci Heaps~\cite{FibonacciHeap} (one for the $\min$ and one for the $\max$ operations)
which allow merging in $O(1)$. The total number of merge operations is $O(n)$, so the overall complexity is still $O(n\log n)$.

\paragraph{Extension to piecewise-quadratic unaries}
In this case terms $g_i$ (and thus messages $m_{ij}$, $\widehat m_i$)
will be non-decreasing piecewise-linear functions 
(possibly, discontinuous at breakpoints).
We will use the same approach as before, only now each segment will be specified via two numbers $(a,b)$
(that define function $az+b$), not one.
Thus, $h^-$ is now a vector $h^-=(a^-,b^-)$ with $h(z)=a^- z + b^-$
for $z\rightarrow-\infty$, and similarly for $h^+$. A breakpoint $\sigma$ is now given by a triple $\sigma=(\lambda_\sigma,
\delta a_\sigma,\delta b_\sigma)$ where the last two values describe the change in the parameters of the linear function
at this breakpoint. One difference in the algorithm 
is that now new breakpoints may appear during the update $m_{ij}(z)={\tt clip}_{[w^-_{ij},w^+_{ij}]}(\widehat m_{i}(z))$,
similar to the case in Sec.~\ref{sec:Convex:Quadratic}. However, the number of such breakpoints is at most 2,
and therefore the complexity is still $O(n\log n)$.

\subsection{Piecewise-linear unaries on a chain: $O(n\log\log n)$ algorithm}\label{sec:Convex:PiecewiseLinear}
Here we assume again that $(V,E)$ is a chain with $V=[n]$ and
functions $f_i$ are piecewise-linear with $O(1)$ breakpoints. Thus, functions $g_i$ are non-decreasing piecewise-constant; we will assume that they are left-continuous.
It is well-known~\cite{Murota:book} that any submodular function $h(z)$ has a unique {\em lowest} minimizer; we will denote it as $\argmin^-_z h(z)\in\argmin_z h(z)$.
We will show how to compute $x=\argmin^-_x f(x)$. To simplify the presentation,
we will assume that it is bounded, i.e.\ $x\in\mathbb R^n$. 

Let $\Lambda$ be the multiset of breakpoint values $\lambda$ present in unary terms, so that $|\Lambda|=O(n)$.
It can be checked % (e.g.\ by analyzing the algorithm given in the previous section)
that there exists an optimal solution $x\in\Lambda^n$ (e.g.\ by observing that the
algorithm given in the previous section never introduces new breakpoint values).

Note that in all previous algorithms we explicitly computed values $\lambda^-_{ij}$ and $\lambda^+_{ij}$.
The proposition below shows that we cannot afford to do this anymore if we want to improve on $O(n\log n)$.
(Its proof is given in Appendix~\ref{sec:SortingLowerBound}, and is based on a reduction to the sorting problem.)
%The proposition below shows that this requires  $\Omega(n\log n)$ in the worst
%case 
\begin{proposition}
Any comparison-based algorithm that computes values $\lambda^+_{ij}$ for all $(i,j)\in E$ 
requires $\Omega(n\log n)$ comparisons in the worst case.
%Let $m\ge 1$ be an integer bounded by $O(\log n)$.
%Any comparison-based algorithm that computes values $\lambda^+_{ij}$ for all $(i,j)\in E$ 
%with $\frac im\in\mathbb Z$
%requires $\Omega(n\log n)$ comparisons in the worst case.
\label{prop:SortingLowerBound}
\end{proposition}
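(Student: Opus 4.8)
The plan is to reduce sorting of $n$ real numbers to the computation of the breakpoints $\lambda^+_{ij}$ on a suitably chosen chain instance, thereby transferring the classical $\Omega(n\log n)$ comparison lower bound. Given an arbitrary input $a_1,\dots,a_n\in\mathbb R$ to be sorted, I would build a chain on $V=\{1,\dots,n,n+1\}$ (or on $[n]$ with a dummy root, as in Algorithm~\ref{alg:DP}) and design the unary terms $f_i$ and the pairwise weights $w^-_{ij},w^+_{ij}$ so that the forward pass of the DP produces messages $\widehat m_i$ whose clipping levels $w^\pm_{ij}$ cut the message exactly at the coordinate $a_i$. Concretely, I want to arrange that $\widehat m_i$ has a single relevant breakpoint located at $a_i$ (plus possibly some fixed, instance-independent structure), so that the recorded value $\lambda^+_{ij}$ equals $a_i$ — or a fixed monotone function of $a_i$. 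Then reading off the tuple $(\lambda^+_{12},\lambda^+_{23},\dots)$ recovers the multiset $\{a_1,\dots,a_n\}$, and identifying which output equals which input is itself free (the DP visits nodes in order), so sorting $\{a_i\}$ costs no more than the algorithm plus $O(n)$ bookkeeping.

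The cleanest realization uses piecewise-linear unaries of the form $f_i(x_i)=|x_i-a_i|$, so $g_i$ is the step function jumping from $-1$ to $+1$ at $a_i$; and I would make the pairwise terms decouple the chain, e.g.\ by choosing weights so that each clipping interval $[w^-_{ij},w^+_{ij}]$ forces $m_{ij}\equiv 0$ (take $w^-_{ij}=w^+_{ij}=0$, exactly as for the dummy edge $(r,\hat r)$ in Algorithm~\ref{alg:DP}). Then for every edge, $\widehat m_i=g_i$ is the single step at $a_i$, and line~5 of Algorithm~\ref{alg:DP} records $\lambda^+_{ij}$ as a point where $\widehat m_i$ crosses the value $0$; by the tie-breaking rule~\eqref{eq:FALSIGAHSG} this pins $\lambda^+_{ij}$ to lie in the interval between $\sup\{z:\widehat m_i(z)<0\}$ and $\inf\{z:\widehat m_i(z)>0\}$, i.e.\ exactly at $a_i$ (since $g_i$ is left-continuous with $g_i(a_i)=+1$, both bounds coincide at $a_i$). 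Hence $\lambda^+_{ij}=a_i$ for each $i$, and the collection $\{\lambda^+_{ij}\}_{(i,j)\in E}$ together with the natural edge order yields the sorted sequence in $O(n)$ extra comparison-free work.

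For the lower bound to apply I then invoke the standard information-theoretic / decision-tree argument: any comparison-based algorithm must be able to distinguish all $n!$ orderings of the inputs, hence its comparison tree has depth $\Omega(n\log n)$; since our reduction is comparison-free except for $O(n)$ overhead, an algorithm computing all $\lambda^+_{ij}$ in $o(n\log n)$ comparisons would sort in $o(n\log n)$ comparisons, a contradiction. The main obstacle, and the only place requiring care, is making the reduction genuinely \emph{comparison-free}: I must check that constructing the instance from $(a_1,\dots,a_n)$ (writing down $f_i=|x_i-a_i|$ and constant weights) involves no comparisons among the $a_i$, and that recovering the sorted list from the outputs $\lambda^+_{ij}=a_i$ likewise involves none — which holds because the map $i\mapsto\lambda^+_{ij}$ is returned in index order, so the $a_i$ come out already labeled, and sorting labeled-with-their-own-value outputs is trivial. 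A secondary subtlety is the boundedness/attainment assumption from Section~\ref{sec:intro}: with $f_i=|x_i-a_i|$ and zero pairwise contribution the global minimizer is $x_i=a_i$, which is finite and attained, so the instance is legitimate. One should also note the reduction only shows hardness of computing \emph{all} the $\lambda^+_{ij}$ simultaneously, which is exactly what the proposition claims.
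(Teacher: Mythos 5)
Your high-level plan---reduce sorting to the computation of the $\lambda^+_{ij}$ and invoke the decision-tree lower bound---is exactly the paper's strategy, but your concrete instance does not implement the reduction. With $w^-_{ij}=w^+_{ij}=0$ the chain completely decouples: $m_{ij}\equiv 0$, $\widehat m_i=g_i$, and indeed $\lambda^+_{i,i+1}=a_i$. But that means the output tuple $(\lambda^+_{12},\lambda^+_{23},\dots)$ is just the input sequence $(a_1,a_2,\dots)$ in its \emph{original} order. An algorithm can produce these values with zero comparisons by copying its input, so no lower bound follows. Your claim that the outputs ``together with the natural edge order yield the sorted sequence in $O(n)$ extra comparison-free work'' is the gap: recovering the multiset $\{a_i\}$ is trivial and is not sorting; to obtain the sorted order from your outputs you would still have to sort them, so the post-processing itself costs $\Omega(n\log n)$ comparisons and the reduction collapses.

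The missing idea is that the chain coupling must be used to force the $\lambda^+_{ij}$ to appear in \emph{sorted} order. The paper does this with $n=2N$ nodes: the first $N$ nodes carry unit step functions at the $b_i$ and have $w^+_{ij}=N+1$ so large that no clipping occurs, whence $\widehat m_N$ is the non-decreasing staircase $\sum_i g_i$ with jumps at all the $b_i$; the remaining $N$ nodes have zero unaries and strictly decreasing half-integer clip levels $w^+_{ij}=2N-i-\tfrac12$, so each successive clipping intersects the staircase one step lower and $\lambda^+_{ij}=c_{2N-i}$ reads off the order statistics. That is the step your construction needs and lacks; without some such accumulation-then-threshold mechanism (or another device that makes the outputs a sorted permutation of the inputs), the argument does not go through.
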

To motivate our approach, we will first describe an alternative $O(n\log n)$ algorithm %(which is a minor variation of Hochbaum's algorithm
that avoids computing values  $\lambda^-_{ij}$ and $\lambda^+_{ij}$. This algorithm can be viewed as a specialization of Hochbaum's algorithm~\cite{Hochbaum:ACM01} to chain graphs.
We will then show how to modify it to get $O(n\log\log n)$ complexity.

%We will assume for all functions $g_i$ are left-continuous.

The idea is to 
reduce minimization problem~\eqref{eq:main} 
to a sequence of problems of the following form (for some fixed values of parameter $\lambda\in\mathbb R$): 
%for all values of $\lambda\in\mathbb R$~\cite{Hochbaum:ACM01,Zalesky:JAM02,Chambolle:EMMCVPR05,Darbon:JMIV:I}:
\begin{equation*}
\min_{y\in\{0,1\}^n} g_\lambda(y),\quad g_\lambda(y)=\sum_{i\in V} g_i(\lambda)y_i + \sum_{(i,j)\in E} f_{ij}(y_j-y_i)
%\label{eq:derivative}
\end{equation*}
Such reduction to a {\em parametric maxflow problem}, due to Hochbaum~\cite{Hochbaum:ACM01}, is well-known for the TV
problem on general graphs; it also appeared later in~\cite{Zalesky:JAM02,Darbon:JMIV:I,Chambolle:EMMCVPR05}.
It is based on  the following result. % (TODO: proof?).
%In the result below we assume that all functions $g_i$ are left-continuous.
\begin{theorem}[\cite{Hochbaum:ACM01}]
For a fixed $\lambda\in\mathbb R$, let $y=\argmin^-_{y\in\{0,1\}^n}g_\lambda(y)$.
%and $x\in\mathbb R^n$ be the lowest minimizer of $f$,
%and $x\in\argmin_{x\in\mathbb R^n} 
%where $\lambda\in[a,b]\subseteq\mathbb R\cup\{\pm\infty\}$.
Denote $V_0=\{i\in V\:|\:y_i=0\}$ and
$V_1=\{i\in V\:|\:y_i=1\}$. 
Let $x=\argmin^-_{x\in\mathbb R^n}f(x)$; for brevity write 
 $x=(x^0,x^1)$ %for a vector $x\in\mathbb R^n$, 
where $x^k$ is the subvector of $x$ corresponding to subset $V_k$.
Then
%There exists $x=(x^0,x^1)\in\argmin_{{x\in[a,b]}^n} f(x)$
%such that 
$x^0< \overline\lambda$ and $x^1\ge \overline\lambda$ component-wise,
where $\overline \lambda$  denotes vector $(\lambda,\ldots,\lambda)$ of
the appropriate dimension.
\end{theorem}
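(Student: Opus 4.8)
The plan is to prove the theorem by a ``thresholding'' argument that compares the continuous minimizer $x=\argmin^-_{x\in\mathbb R^n}f(x)$ with the one obtained from it by clipping coordinates to the two sides of $\overline\lambda$. Concretely, for a vector $z\in\mathbb R^n$ and the fixed value $\lambda$, define two modified vectors: $z^{\downarrow}$ with $z^{\downarrow}_i=\min\{z_i,\lambda\}$ and $z^{\uparrow}$ with $z^{\uparrow}_i=\max\{z_i,\lambda\}$. The key structural fact I would establish first is a ``submodularity''-type inequality for $f$ with respect to this thresholding operation at level $\lambda$: that $f(z^{\downarrow})+f(z^{\uparrow})\le f(z)+f(\overline{z})$ for an appropriate companion vector (or, more directly, that replacing $x$ by $x$ with each coordinate on the ``wrong'' side of $\overline\lambda$ pushed to $\lambda$ cannot increase $f$). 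The unary terms obviously decompose; the work is in the pairwise terms $f_{ij}(x_j-x_i)=w^-_{ij}\cdot(x_j-x_i)$ or $w^+_{ij}\cdot(x_j-x_i)$, and here I would use convexity of $f_{ij}$ together with $w^-_{ij}\le w^+_{ij}$ to check the relevant inequality edge by edge (four cases depending on which side of $\lambda$ each of $x_i,x_j$ lies).

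Next I would connect this to the binary problem. Observe that for a $0/1$ vector $y$ the pairwise term $f_{ij}(y_j-y_i)$ equals $w^+_{ij}$ if $y_i=0,y_j=1$, equals $-w^-_{ij}$ (i.e.\ $w^-_{ij}\cdot(-1)$) if $y_i=1,y_j=0$, and $0$ otherwise; I would relate $g_\lambda(y)$ to the ``jump cost'' incurred by $x$ as it crosses the level $\lambda$ along each edge, using the identity that the cost of $x$ can be written by integrating the derivative of $f$ in $\lambda$ (the standard co-area / Hochbaum decomposition already cited before the theorem statement). Then the claim ``$x^0<\overline\lambda$ and $x^1\ge\overline\lambda$'' becomes: the set $V_1=\{i:y_i=1\}$ where $y=\argmin^-_{y}g_\lambda(y)$ must coincide with (a canonical choice of) $\{i:x_i\ge\lambda\}$. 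I would argue inclusion in both directions by contradiction: if some $i\in V_1$ had $x_i<\lambda$, thresholding $x$ up at that coordinate would (by the edge-by-edge inequality above, summed over the tree) not increase $f$, and strictly decrease it or else violate the ``lowest minimizer'' selection $\argmin^-$; symmetrically for $i\in V_0$ with $x_i\ge\lambda$. The tie-breaking via lowest minimizers on both sides is exactly what forces the strict inequality $x^0<\overline\lambda$ versus the non-strict $x^1\ge\overline\lambda$, matching the asymmetric definition of $\argmin^-$ for the submodular binary function.

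The main obstacle I anticipate is getting the tie-breaking completely right: both $g_\lambda$ and $f$ are minimized only up to a lattice of optimal solutions, and the theorem pins down a specific correspondence between the \emph{lowest} binary minimizer and the \emph{lowest} continuous minimizer. Making this rigorous requires a lattice/submodularity argument — showing that $y\mapsto g_\lambda(y)$ is submodular on $\{0,1\}^n$ (immediate from the form of $f_{ij}$ with $w^-_{ij}\le w^+_{ij}$), that $x\mapsto f(x)$ is submodular on $\mathbb R^n$ in the sense that its minimizer set is a lattice, and that the thresholding map $x\mapsto \mathbf{1}[x\ge\lambda]$ sends the lowest continuous minimizer to the lowest binary minimizer. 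I would handle this by invoking the same ``lowest minimizer'' machinery already referenced in the paper (\cite{Murota:book}), and by noting that on a tree the optimal-solution lattice has a particularly simple product-like structure that makes the coordinatewise comparison $x^0<\overline\lambda\le x^1$ tractable. The remaining steps — the edge-case verifications and the integration identity — are routine once the lattice framework is in place.
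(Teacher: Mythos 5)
The paper does not prove this theorem at all --- it is imported verbatim from Hochbaum's paper (hence the citation in the theorem header), so there is no in-paper argument to compare yours against. Your outline is in fact Hochbaum's own strategy (a level-set / co-area decomposition of $f$ into the binary energies $g_\mu$, plus a threshold-exchange argument and a monotonicity-of-minimizers lemma to handle ties), so the \emph{approach} is the right one. But as written the proposal has two genuine gaps, both in the steps you defer.

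First, the contradiction step fails as stated. If $i\in V_1$ but $x_i<\lambda$, ``thresholding $x$ up at that coordinate'' can strictly \emph{increase} $f$: the pairwise terms $f_{ij}(x_j-x_i)$ to neighbours that stay below $\lambda$ grow by $\Theta(\lambda-x_i)$, and nothing in the submodularity of $f$ against the constant vector $\overline\lambda$ compensates for this. The exchange must be performed on all of $V_1$ (resp.\ $V_0$) simultaneously, and the inequality $f(x')\le f(x)$ for the jointly thresholded vector is \emph{not} a consequence of $f(z\wedge\overline\lambda)+f(z\vee\overline\lambda)\le f(z)+f(\overline\lambda)$; it requires feeding in the optimality of $y$ for $g_\lambda$, either through the identity $f(x)=\mathrm{const}+\int g_\mu(\mathbf{1}[x\ge\mu])\,d\mu$ (which you mention but never establish) or through an explicit first-order perturbation bound of the form $f(x')-f(x)\le s\,\bigl(g_\lambda(y)-g_\lambda(\mathbf{1}[x\ge\lambda])\bigr)+o(s)$. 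This quantitative link between the continuous and binary problems is the heart of the theorem and is exactly the step your sketch asserts rather than proves. Second, the strict/non-strict split $x^0<\overline\lambda$, $x^1\ge\overline\lambda$ hinges on which subgradient $g_i(\lambda)$ is used at breakpoints of $f_i$: with the right-derivative convention the statement is already false for a single node with $f_1(z)=\max\{0,z-1,-z\}$ and $\lambda=0$ (the lowest binary minimizer becomes $y_1=0$ while the lowest continuous minimizer is $x_1=0$, violating $x^0<\overline\lambda$). The theorem needs the left-continuous choice of $g_i$ (the convention the paper adopts only later, in Sec.~4.3), and your tie-breaking discussion never pins this down. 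The remark about the minimizer lattice having a special ``product-like structure on a tree'' is a red herring --- the theorem holds on general graphs and nothing tree-specific enters the proof.
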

The theorem suggests a divide-and-conquer algorithm  for computing $x=\argmin^-_{x\in[a,b]^n} f(x)$:
%for general graphs it has been used in~\cite{Hochbaum:ACM01,Chambolle:EMMCVPR05,Darbon:JMIV:I}.
\begin{itemize}
\item Pick some ``pivot'' value $\lambda\in[a,b]$ and compute $y=\argmin^-_{y\in\{0,1\}^n}g_\lambda(y)$;
this partitions the nodes into two subsets $V_0$ and
$V_1$ of sizes $n_0$ and $n_1$ respectively.
\item Compute recursively $x^0=\argmin^-_{x^0\in[a,\lambda]^{n_0}} f(x^0,\overline\lambda)$
and $x^1=\argmin^-_{x^1\in[\lambda,b]^{n_1}} f(\overline\lambda,x^1)$ (or solve these problems explicitly,
if e.g.\ their size is small enough).\footnote{Note that for any fixed $x^1\ge\overline\lambda$
we have $f(x^0,x^1)=f(x^0,\overline\lambda)+const$ for $x^0\le\overline\lambda$.
This justifies replacing the objective function $f(x^0,x^1)$ with $f(x^0,\overline\lambda)$
in the first subproblem. A similar argument
holds for the second subproblem.}
These two subproblems are defined on induced subgraphs $(V_0,E[V_0])$ and $(V_1,E[V_1])$, respectively.
Each of them is a union of chains; each chain is solved independently via a recursive call.
\end{itemize}

Let us apply this strategy to our problem. First, observe that for a fixed $\lambda$ function $g_\lambda(y)$
can be minimized in $O(n)$ time.
Indeed, we can use a dynamic programming approach described in Sec.~\ref{sec:background},
except that instead of continuous-valued variables we now have $\{0,1\}$-valued
variables. Let 
 $\widehat m_j(y_j;\lambda)$ and $m_{ij}(y_j;\lambda)$ be the corresponding messages where $j\in V$,   $(i,j)\in E$
and $y_j\in\{0,1\}$. To extract an optimal solution, it suffices to know the differences $\widehat m_i(1;\lambda)-\widehat m_i(0;\lambda)$
and $m_{ij}(1;\lambda)-m_{ij}(0;\lambda)$.
Denote these differences as $\widehat m_i(\lambda)$ and $m_{ij}(\lambda)$ respectively.
It can be checked that the update equations for these values are given by lines 2 and 3 of Algorithm~\ref{alg:DP}
for $z=\lambda$, and each of these updates takes $O(1)$ time. %; it can be checked that $m_{ij}(\lambda)=m_{ij}(1;\lambda)-m_{ij}(0;\lambda)$.

Second, we note that in the subproblem $\min_{x^0\in [a,\lambda]^{n_0}} f(x^0,\overline\lambda)$
we can modify the unary terms for nodes $i\in V_0$ by removing all breakpoints that are greater than or equal to $\lambda$;
this will not change the problem. Similarly, for the other subproblem we can remove all
breakpoints that are smaller than or equal to $\lambda$.

Finally, we need to discuss how to select value $\lambda$. It is natural to take $\lambda$ as 
the median of values in $\Lambda$, which can be computed in $O(|\Lambda|)\subseteq O(n)$ time~\cite{Cormen}.
(If $\Lambda$ is empty, then we can solve the problem explicitly by taking $x=\overline a$.)

Let $\Lambda_0$ and $\Lambda_1$ be the multisets of breakpoints present in the first and the second
subproblems respectively. We have $|\Lambda_0|\le \frac{1}{2}|\Lambda|$ and  $|\Lambda_1|\le \frac{1}{2}|\Lambda|$,
which leads to the following complexity (see Appendix \ref{sec:proof:nlogn}).
\begin{proposition}
The algorithm above has complexity $O(n\log n)$.
\label{prop:nlogn}
\end{proposition}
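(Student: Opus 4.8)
The plan is to set up the recursion tree of the divide-and-conquer scheme and bound its total cost by a charging argument. The root of the tree is the original chain on $n$ nodes with breakpoint multiset $\Lambda$, $|\Lambda|=O(n)$. A generic node of the tree is a subproblem which is a single chain on some $m$ nodes carrying a breakpoint multiset $\Lambda'$; if $\Lambda'=\varnothing$ the subproblem is a leaf, solved explicitly by $x=\overline a$ in $O(m)$ time, and otherwise we take $\lambda$ to be the median of $\Lambda'$, run the $O(m)$-time binary dynamic program (the $\{0,1\}$-variable version of lines 2--3 of Algorithm~\ref{alg:DP}, $O(1)$ per node) to obtain the partition $V_0,V_1$, strip from the unaries of $V_0$ the breakpoints $\ge\lambda$ and from those of $V_1$ the breakpoints $\le\lambda$, and recurse on each maximal sub-chain of $(V_0,E[V_0])$ and of $(V_1,E[V_1])$. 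The point to record here is that the work spent at this subproblem, \emph{excluding} the recursive calls, is $O(m)$: the binary DP and the back-substitution are $O(1)$ per node, the median of the $O(m)$ values of $\Lambda'$ is found in $O(m)$ time by linear-time selection~\cite{Cormen}, stripping breakpoints costs $O(1)$ per node since each unary has $O(1)$ breakpoints, and the sub-chains of the two induced subgraphs are produced by a single left-to-right scan.

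Next I would bound the depth of the recursion tree. Since $|\Lambda_0|\le\tfrac12|\Lambda'|$ and $|\Lambda_1|\le\tfrac12|\Lambda'|$ for the two sides of a split (established just above the proposition), the breakpoint count of a subproblem at depth $d$ is at most $|\Lambda|/2^d$; being a non-negative integer, it equals $0$ — i.e.\ the subproblem is a leaf — once $2^d>|\Lambda|$, that is once $d>\log_2|\Lambda|=O(\log n)$. Hence every root-to-leaf path has length $O(\log n)$.

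Finally I would assemble the bound by charging. Distribute the $O(m)$ work of each subproblem as $O(1)$ to each of its $m$ nodes. Because every split partitions the current node set as $V_0\sqcup V_1$, the node sets of the subproblems appearing at a fixed depth $d$ are pairwise disjoint subsets of $V$; consequently a node $i\in V$ belongs to exactly one subproblem at each depth $0,1,\dots,d_i$, where $d_i=O(\log n)$ is the depth at which the subproblem containing $i$ becomes a leaf. Node $i$ is therefore charged $O(1)$ at $O(\log n)$ depths, for a total charge of $O(\log n)$, and summing over all $n$ nodes gives total running time $O(n\log n)$.

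I expect the proof to be essentially routine; the only points requiring care are (a) the median convention, so that discarding all copies of the median together with one side genuinely leaves at most $\tfrac12|\Lambda'|$ breakpoints on each side — this is what makes the breakpoint count halve and hence bounds the depth; and (b) checking that degenerate subproblems (single-node or breakpoint-free chains) are still handled in time linear in their size, so that the per-depth cost stays $O(n)$ even though one level may contain up to $n$ subproblems. Neither presents a real obstacle.
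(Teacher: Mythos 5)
Your proof is correct and follows essentially the same route as the paper's: both analyze the recursion tree level by level, using the facts that the subproblems at any fixed depth have pairwise disjoint node sets (so the per-level work is $O(n)$) and that the breakpoint count halves at each split (so the depth is $O(\log n)$). Your charging argument to individual nodes is just a presentational variant of the paper's summation $\sum_i\sum_k O(\lambda(P^i_k)+n(P^i_k))=\sum_i O(n)$, and your side remarks about the median convention and degenerate subproblems are sensible but not points where the paper's argument differs.
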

We now discuss how to modify this approach to get $O(n\log \log n)$ complexity.
%It is convenient to add an extra node $\hat r=n+1$ to $V$ with a zero unary term ($g_{n+1}(z)=0$ for all $z$)
%and an edge $(n,n+1)$ with zero weight; accordingly, functions $f$ and $g_\lambda$ now depend on $n+1$ variables.
Choose an integer $m\in \Theta(\log n)$,
and define set $\calU=\{km+1\in [n]\:|\:k\in\mathbb Z\}\cup\{n\}$ of size $N=|\calU|\in\Theta(n/\log n)$.
The nodes in $\calU$ will be called {\em subsampled nodes}.
We assume that $\calU=\{i_1,\ldots,i_N\}$ with $1=i_1<\ldots<i_N=n$,
and let $\calE=\{(i_k,i_{k+1})\:|\:k\in[N-1]\}$. 

The algorithm will have two stages.
First, we use a divide-and-conquer strategy above to compute an optimal solution $x_i$
for nodes $i\in \calU$. Once this is done, a full optimal solution can be recovered 
by solving $|\calE|$ independent subproblems: for each $(i,j)\in\calE$
we need to minimize function $f(x)$ over $(x_{i+1},\ldots,x_{j-1})$ 
with fixed values of $x_i$ and $x_j$. The latter can be done in $O(m\log m)$
time (since $j-i\le m$), so the complexity of the second stage is $O(Nm\log m)=O(n\log\log n)$.

We thus concentrate on the first stage. Its main computational subroutine
is to compute an optimal solution $y=\argmin^-_{y\in\{0,1\}^{n}} g_\lambda(y)$
for a given $\lambda$
at nodes $i\in \calU$. As before, we will use dynamic programming.
Passing messages in a naive way would take
$O(Nm)$ time, which is too slow for our purposes. To speed it up,
we will ``contract'' each edge $(i,j)\in\calE$ into a data structure
that will allow passing a message from $i$ to $j$ in $O(\log m)$ time instead of $O(m)$, 
so that the subroutine will take $O(N\log m)$ time. The contraction operation is described below;
we then give a formal description of the first stage.

\paragraph{Contraction} Consider indices $i,j\in V$ with $i<j$.
Our goal to solve efficiently the following problem for a given $\lambda\in\mathbb R$: given the value $\widehat m_{i}(\lambda)$,
compute message $\widehat m_{j}(\lambda)$. %(We assume that $m_{01}(\lambda)=0$ for all $\lambda$'s).
Let us denote the corresponding transformation by $T^\lambda_{ij}:\mathbb R\rightarrow\mathbb R$, so that
$\widehat m_{j}(\lambda)=T^\lambda_{ij}(\widehat m_{i}(\lambda))$.  
We will show that mapping $T^\lambda_{ij}$ can be described compactly by 3 numbers.\!\!\!\!\!\!\!\!
\begin{proposition} For a triplet  $\tau=(\delta,a,b)\in\mathbb R^3$ with $a\le b$ 
define  function $\langle\tau\rangle:\mathbb R\rightarrow\mathbb R$ via
\begin{equation*}
\langle\tau\rangle(v)=\delta + {\tt clip}_{[a,b]}(v)\qquad\forall v\in\mathbb R
%\label{eq:deltaab}
\end{equation*}
(a) If $(i,j)\in E$ then $T^\lambda_{ij}=\langle g_j(\lambda),w^-_{ij},w^+_{ij}\rangle$. (b) There holds
%
%
%\begin{equation*}
%\langle\delta,a,b\rangle\circ\langle\delta',a',b'\rangle
%%=\langle\delta+\delta',a'', b''\rangle
%=\begin{cases}
%\langle \delta + a - b',b',b'\rangle & \mbox{if } b'<\min I \\
%\langle \delta+\delta',{\tt clip}_I(a'),{\tt clip}_I(b')\rangle & \mbox{if } [a',b']\cap I\ne \varnothing \\
%\langle \delta +b - a',a',a'\rangle & \mbox{if } a'>\max I
%\end{cases}
%\end{equation*}
%where $I=[a-\delta',b-\delta']$. 
%
%
%$\langle\delta',a',b'\rangle\circ\langle\delta,a,b\rangle=$
\begin{equation*}
\langle\delta',a',b'\rangle\circ\langle\delta,a,b\rangle \; = \;
\begin{cases}
\langle \delta' + a' - b,b,b\rangle & \mbox{if } b<\min I \\
\langle \delta'+\delta,{\tt clip}_I(a),{\tt clip}_I(b)\rangle & \mbox{if } [a,b]\cap I\ne \varnothing \\
\langle \delta' +b' - a,a,a\rangle & \mbox{if } a>\max I
\end{cases}
\end{equation*}
where $I=[a'-\delta,b'-\delta]$. \footnote{Note that in the first and third cases the composition
is a constant mapping $\mathbb R\rightarrow\mathbb R$, and
can be described by many possible triplets. We chose parameters that will ensure the correctness of the backward pass
(namely, of eq.~\eqref{eq:nloglogn:backward} given later).}
\label{prop:Tlambdaij}
\end{proposition}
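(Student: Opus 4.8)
\textbf{Part (a).} The plan is to read off $T^\lambda_{ij}$ directly from lines 2–3 of Algorithm~\ref{alg:DP} specialized to the chain/edge setting. When $(i,j)\in E$, message $\widehat m_j$ is obtained from $\widehat m_i$ by first passing through the edge, $m_{ij}(z)={\tt clip}_{[w^-_{ij},w^+_{ij}]}(\widehat m_i(z))$, and then adding the unary subgradient at $j$ together with any other incoming messages. In the restricted problem relevant to the contraction (a chain segment, or more precisely the reduced line graph in which $i$ has $j$ as its unique relevant successor), the only incoming message at $j$ along the branch under consideration is $m_{ij}$, so $\widehat m_j(\lambda)=g_j(\lambda)+{\tt clip}_{[w^-_{ij},w^+_{ij}]}(\widehat m_i(\lambda))$. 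Matching this against the definition $\langle\tau\rangle(v)=\delta+{\tt clip}_{[a,b]}(v)$ gives $\tau=(g_j(\lambda),w^-_{ij},w^+_{ij})$, and $w^-_{ij}\le w^+_{ij}$ supplies the required $a\le b$. I would state this as a one-line verification, being careful to point out why no other terms enter (they are accounted for when those messages are themselves contracted, or they belong to other branches).

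\textbf{Part (b).} This is a pure composition-of-clipped-affine-maps computation, which I would organize as a case analysis on where the interval $[a,b]$ (the range of ${\tt clip}_{[a,b]}$, shifted by $\delta$ it becomes $[a+\delta,b+\delta]$) sits relative to the ``active window'' $I=[a'-\delta,b'-\delta]$ of the outer map. Concretely, $\bigl(\langle\delta',a',b'\rangle\circ\langle\delta,a,b\rangle\bigr)(v)=\delta'+{\tt clip}_{[a',b']}\bigl(\delta+{\tt clip}_{[a,b]}(v)\bigr)$; substituting $u={\tt clip}_{[a,b]}(v)$ and using ${\tt clip}_{[a',b']}(\delta+u)=\delta+{\tt clip}_{[a'-\delta,b'-\delta]}(u)=\delta+{\tt clip}_I(u)$ reduces everything to understanding ${\tt clip}_I\circ{\tt clip}_{[a,b]}$. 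If $[a,b]$ lies entirely below $I$ (i.e. $b<\min I$), then ${\tt clip}_{[a,b]}(v)\le b<\min I$ always, so ${\tt clip}_I$ pins it to $\min I=a'-\delta$, and the composite is the constant $\delta'+\delta+(a'-\delta-\text{nothing})=\delta'+a'-b+ b$... — here I would just compute the constant value: it equals $\delta'+{\tt clip}_{[a',b']}(\delta+b)=\delta'+a'$ when $b<\min I$, and one checks $\langle\delta'+a'-b,b,b\rangle(v)=\delta'+a'-b+{\tt clip}_{[b,b]}(v)=\delta'+a'$, matching. The symmetric case $a>\max I$ is identical with $b'$ in place of $a'$. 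In the remaining case $[a,b]\cap I\ne\varnothing$, clipping to $[a,b]$ and then to $I$ is the same as clipping to $[a,b]\cap I$ (a standard fact about nested clips when the intervals overlap), which is $[\,{\tt clip}_I(a),{\tt clip}_I(b)\,]$; adding back the shifts yields $\langle\delta'+\delta,{\tt clip}_I(a),{\tt clip}_I(b)\rangle$. I would verify the identity ${\tt clip}_I\circ{\tt clip}_{[a,b]}={\tt clip}_{[a,b]\cap I}$ on the three sub-intervals $(-\infty,a]$, $[a,b]$, $[b,+\infty)$ separately.

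\textbf{Main obstacle.} The only delicate point is bookkeeping in the two degenerate cases, where the composite is constant and hence representable by infinitely many triples: I must check that the \emph{specific} choices $\langle\delta'+a'-b,b,b\rangle$ and $\langle\delta'+b'-a,a,a\rangle$ are the ones consistent with the backward pass (the footnote flags this, and equation~\eqref{eq:nloglogn:backward} later relies on it). So rather than just verifying the functional identity, I would note that in the $b<\min I$ case the correct ``inverse'' behavior for the backward pass is to map any downstream value back to $b$ (the last point of $[a,b]$ that was still active before the outer clip saturated), which is exactly encoded by setting both clip endpoints to $b$; the constant offset $\delta'+a'-b$ is then forced by requiring $\langle\cdot\rangle(b)$ to equal the true composite value $\delta'+a'$. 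Everything else is routine interval arithmetic and I would present it compactly.
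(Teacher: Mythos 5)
Your proposal is correct: part (a) follows by reading off lines 2--3 of Algorithm~\ref{alg:DP} on a chain edge, and part (b) reduces, after the substitution ${\tt clip}_{[a',b']}(\delta+u)=\delta+{\tt clip}_I(u)$, to the interval identity ${\tt clip}_I\circ{\tt clip}_{[a,b]}={\tt clip}_{[a,b]\cap I}$ in the overlapping case and to constant maps in the two degenerate cases, all of which you verify (and you rightly flag that the particular triplets chosen in the degenerate cases matter only for the later backward pass, not for the functional identity itself). The paper declares this proof ``mechanical'' and omits it entirely, so your argument is exactly the intended direct verification, carried out in full.
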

A proof of these facts is mechanical, and is omitted. Using induction on $j-i$,
we conclude that $T^\lambda_{ij}=\langle\delta,a,b\rangle$ for some constants $\delta,a,b$
(that may depend on $\lambda$, $i$ and $j$). 

We showed that for a fixed $\lambda$ transformation $T^\lambda_{ij}$ can be stored using $O(1)$
space and queried in $O(1)$ time. Let us now discuss how to store these
transformations for all $\lambda\in\mathbb R$; we denote the corresponding mapping $\mathbb R\times\mathbb R\rightarrow\mathbb R$ 
by $T_{ij}$. 
%defined by $T_{ij}(\lambda,v)=T^\lambda(v)$. 
Let $\lambda_1,\ldots,\lambda_t$
be the breakpoint values in the non-decreasing order present in the unary
terms $g_k$ for $k\in[i+1,j]$, with $t=O(j-i)$. It follows from the previous
discussion that mapping $T_{ij}$ can be represented by a sequence 
$(\tau_0,\lambda_1,\tau_1,\ldots,\tau_{t-1},\lambda_t,\tau_t)$ where $\tau_p=(\delta_p,a_p,b_p)$.
If $\lambda_p<\lambda\le\lambda_{p+1}$ then $T^\lambda(v)=\langle\tau_p\rangle(v)$, where we assume that $\lambda_0=-\infty$
and $\lambda_{t+1}=+\infty$. % and also that functions $g_k$ are left-continuous.
%For $\lambda=\lambda_p$ we may
%define either $T^\lambda_{ij}(v)=T^{\lambda-0}_{ij}(v)$ or $T^\lambda_{ij}(v)=T^{\lambda+0}_{ij}(v)$;
%this is because we have a freedom in how to define subgradient $g_k(\lambda)$ of  $f_k$ at $\lambda$.

Given sequences for $T_{ij}$ and $T_{jk}$ with $t$ and $t'$ breakpoints respectively,
we can compute the sequence for $T_{ik}$ with $t+t'$ breakpoints in $O(t+t'+1)$ time
by traversing the input sequences as in the ``merge'' operation of the {\tt MergeSort}
algorithm~\cite{Cormen}, and using Proposition~\ref{prop:Tlambdaij}(b).
Therefore, the sequence for $T_{ij}$ with $(i,j)\in\calE$
can be computed in $O(m\log m)$ time; the complexity analysis is 
the same as in the {\tt MergeSort} algorithm. The overall time for computing the sequences for all $(i,j)\in\calE$
is $O(Nm\log m)=O(n\log\log n)$.

Given such sequence, passing a message from $i$ to $j$ (i.e.\ computing $\widehat m_{j}(\lambda)=T^\lambda_{ij}(\widehat m_{i}(\lambda))$)
can be done in $O(\log m)$ time: first, we use a binary search to locate index $p$ with $\lambda_p< \lambda\le\lambda_{p+1}$,
and then return $\langle\tau_p\rangle(\widehat m_{i}(\lambda))$. 
We also need to discuss how to perform a backward pass, i.e.\ how to compute the  optimal lowest
label $x_i$ if we know message $\widehat m_{i}(\lambda)$ and the optimal lowest label $x_j\in\{0,1\}$.
Denoting $\tau_p=\langle\delta,a,b\rangle$, we can set
\begin{equation}
x_i = \begin{cases}
1 & \mbox{if } \widehat m_{i}(\lambda) < a \\
x_j & \mbox{if } \widehat m_{i}(\lambda) \in [a,b) \\
0 & \mbox{if } \widehat m_{i}(\lambda) \ge b 
\end{cases}
\label{eq:nloglogn:backward}
\end{equation}
The correctness of this rule can be verified by induction (assuming that the parameters
are computed as in Proposition~\ref{prop:Tlambdaij}); we leave it to the reader.

\paragraph{Divide-and-conquer algorithm}
We are now ready to give a formal description of the first stage.
It will be convenient to append two extra nodes $0$ and $n+1$ at the ends of the chain
with zero unary terms.
We also add edges $(0,1)$ and $(n,n+1)$ with zero weight
(and compute the sequences for mappings $T_{01}$ and $T_{n,n+1}$).
Clearly, this transformation does not change the problem.
For a subsampled node $i$ let $i^-=i-m$  be its left subsampled neighbor,
or $i^-=0$ if $i$ is the first subsampled node (i.e.\ if $i=1$). Similarly,
let $i^+$ be the right subsampled neighbor of $i$ (with $i^+=n+1$ for $i=n$). 

We will define a recursive procedure ${\tt Solve}(\calU,\calE,a,b,\ell_-,\ell_+)$.
Here $\calU$ is a non-empty set of consecutive subsampled nodes and $\calE$ is the set of edges
connecting adjacent nodes in $\calU$, containing
additionally edges $(i^-,i)$ and $(j,j^+)$ where $(i,j)=(\min \calU,\max \calU)$.
Note that $|\calE|=|\calU|+1$.
Each edge $(i,j)\in\calE$ has a pointer to
the sequence  representing mapping $T_{ij}$.
The following invariants will hold:
\begin{itemize}
\item[(a)] Minimizer $x=\argmin^-_x f(x)$ satisfies \\
(i) $x_{i^-}\le a$ (if $\ell_-=0$) or $x_{i^-}\ge b$ (if $\ell_-=1$), where $i=\min \calU$; \\
(ii) $x_k\in[a,b)$ for all $k\in \calU$; \\
(iii) $x_{j^+}\le a$ (if $\ell_+=0$) or $x_{j^+}\ge b$ (if $\ell_+=1$), where $j=\max \calU$.
\item[(b)] All breakpoints present in $T_{ij}$ for $(i,j)\in\calE$ belong to $(a,b)$.
\end{itemize}
 The output of this procedure is a minimizer $x=\argmin^-_x f(x)$ sampled at nodes $i\in \calU$,
with $x_i\in[a,b)$.
In the beginning we would call ${\tt Solve}(\calU,\calE,-\infty,+\infty,0,0)$.

Our first task is to pick the pivot value $\lambda$. 
For edge $(i,j)\in\calE$ let $\Lambda_{ij}$ be the multiset of breakpoint values
in the current sequence for $T_{ij}$. If $\Lambda_{ij}$ is empty for each $(i,j)\in\calE$
then we return solution $x_i=a$ for all $i\in  U$. 
%\footnote{This is justified as follows.
%We know that unary terms $g_k$ for $k\in[i^-,j^+-1]$ do not have breakpoints in $(a,b)$, where for $(i,j)=(\min U,\max U)$.
%Also
%there is an optimal solution of the form $x_i=\lambda$ for some constant $\lambda\in[a,b]$,
%and unary terms $g_k$ for $k\in[i^-,j^+-1]$ for $(i,j)=(\min U,\max U)$ do not have breakpoints in $(a,b)$.
%Since  functions $g_i$ were assumed to be left-continuous, the cost of the solution will not change if $\lambda$ is set to $a$.
%One special case is when $b=+\infty$, then we use the assumption from Sec.~\ref{sec:intro}
%that $f$ has a minimizer $x$ which is bounded.}
If this is not the case then we do the following:
\begin{itemize}
\item For each $(i,j)\in \calE$ with $\Lambda_{ij}\ne\varnothing$
compute a median value $\lambda_{ij}\in\Lambda_{ij}$
(breaking the ties arbitrarily if $|\Lambda_{ij}|$ is even).
This can be done in $O(1)$ time since breakpoints in $\Lambda_{ij}$ are stored in an array in a sorted order.
\item Compute $\lambda$
 as a weighted median of the values above where $\lambda_{ij}$ comes with the weight $|\Lambda_{ij}|$.
This can be done in $O(| \calE|)$ time~\cite{Cormen}.
(This choice ensures that both of the multisets $\{ \lambda' \in \Lambda: \lambda' \ge \lambda\}$ and $\{ \lambda' \in \Lambda: \lambda' \le \lambda\}$ have at least $\frac14 |\Lambda|$ elements.)
\end{itemize}

The next step is to compute minimizer $y=\argmin^-_y g_\lambda(y)$ sampled at nodes $i\in \calU$.
As described earlier, this can be done in $O(|\calU|\log m)$ time. The first message 
%for $i=\min U$
is computed as follows:
if $\ell_-=0$ then $\widehat m_{i^-}(\lambda)=+\infty$,
otherwise $\widehat m_{i^-}(\lambda)=-\infty$ (where $i=\min \calU$). \footnote{This rule for $\ell_-=0$ can
be justified as follows. Constraint $x_{i^-}\le a$ means that the problem will not change if we add
unary term $f_{i^-}(x)=C|x_{i^-}-a'|$ with $C>0$ for some $a'\le a$. Such change increases the message $\widehat m_{i^-}(\lambda)$ by $C$
(since $\lambda > a'$). Since constant $C$ can be arbitrarily large, the claim follows. The case $\ell_-=1$ is analogous.}
Upon reaching node $j^+$ for $j=\max \calU$ we set its optimal label to $y_{j^+}=\ell_+$
and proceed with the backward pass.
This procedure partitions $\calU$ into sets $ \calU_1,\ldots  \calU_r$ such
that (i) $\max  \calU_s<\min  \calU_{s+1}$ for all $s$, (ii) all nodes $i\in \calU_s$ have the same label $y_i$
(which we call ``the label of $\calU_s$'' and denote as $y(\calU_s)\in\{0,1\}$), and (iii) adjacent sets $ \calU_s$ and $ \calU_{s+1}$ have different labels.

Finally, for each set $\calU_s$ we do the following. Define interval
% $[a_s,b_s]$ as $[a_s,b_s]=[a,\lambda]$ if $y(U_s)=0$, and  $[a_s,b_s]=[\lambda,b]$ otherwise.
$[a_s,b_s]=\begin{cases}[a,\lambda] & \mbox{if }y(U_s)=0 \\ [\lambda,b] & \mbox{if }y(U_s)=1\end{cases}$.
Let $\calE_s=\{(i,j)\in \calE\:|\:\{i,j\}\cap \calU_s\ne \varnothing\}$.
For each edge $(i,j)\in\calE_s$ modify the sequence for $T_{ij}$
by removing all breakpoints $\lambda'$ that do not belong to $(a_s,b_s)$ (since from now on we will 
need to pass messages from $i$ to $j$ only for values  $\lambda'\in(a,b)$). Since breakpoints are stored in a sorted order, this  takes $O(\log m)$ per $(i,j) \in \calE_s$ so $O(|\calE|\log m)$ in total.
We then make a recursive call 
${\tt Solve}(\calU_s,\calE_s,a_s,b_s,y(\calU_{s-1}),y(\calU_{s+1}))$ where 
it is assumed that $y(\calU_0)=\ell_-$ and $y(\calU_{r+1})=\ell_+$.

Note that edges $(i,j)\in\calE$ connecting adjacent sets $\calU_s$ and $\calU_{s+1}$
are split into two (one for $\calU_s$ and one for $\calU_{s+1}$). The same holds for the corresponding mappings $T_{ij}$.
Breakpoints in $T_{ij}$ that are smaller than $\lambda$ are kept
in one of the new mappings, and breakpoints that are larger than $\lambda$
are kept in the other one.

\begin{proposition}
The algorithm above has complexity $O(n\log \log n)$.
\label{prop:nloglogn}
\end{proposition}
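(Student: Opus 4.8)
The plan is to charge the running time to the three phases of the algorithm separately: the preprocessing that builds the sequences for the contracted mappings $T_{ij}$, $(i,j)\in\calE$; the recursive first stage consisting of the calls to ${\tt Solve}$; and the second stage that fills in the non-subsampled coordinates. The first and third phases were already shown to run in $O(Nm\log m)$ time, and since $N\in\Theta(n/\log n)$ and $m\in\Theta(\log n)$ this is $O(n\log\log n)$; so everything reduces to bounding the total cost of all calls to ${\tt Solve}$.

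First I would bound the work performed inside one call $v={\tt Solve}(\calU_v,\calE_v,a,b,\ell_-,\ell_+)$, not counting its recursive descendants. Since $|\calE_v|=|\calU_v|+1$, testing emptiness of the $\Lambda_{ij}$ and extracting their medians, as well as the weighted-median computation, cost $O(|\calU_v|)$. The forward dynamic-programming pass computing $y=\argmin^-_y g_\lambda(y)$ on $\calU_v$ passes one message across each edge of $\calE_v$ at cost $O(\log m)$ (a binary search in the sequence for $T_{ij}$), and the backward pass is analogous, for $O(|\calU_v|\log m)$ total. Restricting the sequences of the child subproblems to the new intervals $(a_s,b_s)$ costs $O(\log m)$ per edge; since each edge of $\calE_v$ lies in at most two of the sets $\calE_s$, this is again $O(|\calU_v|\log m)$. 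Hence the per-call work (excluding recursion) is $O(|\calU_v|\log m)$.

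Next I would prove that the recursion has depth $O(\log n)$. For a call $v$ put $\Lambda_v=\bigcup_{(i,j)\in\calE_v}\Lambda_{ij}$, a multiset that is the disjoint union of the per-edge breakpoint multisets (disjoint because the node ranges $[i+1,j]$ over $(i,j)\in\calE_v$ are disjoint), and note $\Lambda_v\subseteq(a,b)$ by invariant (b). When $v$ splits on the weighted median $\lambda$, every ``$0$-side'' child (label $0$, interval $[a,\lambda]$) retains only breakpoints strictly below $\lambda$, every ``$1$-side'' child only breakpoints strictly above $\lambda$, and breakpoints equal to $\lambda$ are dropped from both; moreover a split edge joining two adjacent child sets contributes its low part to the $0$-side copy and its high part to the $1$-side copy. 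Consequently the $0$-side children together retain at most $|\{\lambda'\in\Lambda_v:\lambda'<\lambda\}|\le|\Lambda_v|-\tfrac14|\Lambda_v|=\tfrac34|\Lambda_v|$ breakpoints (by the stated guarantee of the weighted median), and likewise for the $1$-side; so every child $v'$ satisfies $|\Lambda_{v'}|\le\tfrac34|\Lambda_v|$. A call with $\Lambda_v=\varnothing$ is a leaf, and $|\Lambda_{\text{root}}|=|\Lambda|=O(n)$, so along any root-to-leaf path $|\Lambda_v|$ shrinks by a factor $\le\tfrac34$ per step, bounding the depth by $O(\log n)$.

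Finally I would assemble the pieces. At any fixed depth $d$ the sets $\calU_v$ of the calls at that depth are pairwise disjoint subsets of the top-level $\calU$, hence $\sum_{v:\text{depth }d}|\calU_v|\le N$; summing over the $O(\log n)$ depths gives $\sum_v|\calU_v|=O(N\log n)=O(n)$. Therefore all calls to ${\tt Solve}$ cost $\sum_v O(|\calU_v|\log m)=O(n\log m)=O(n\log\log n)$, and adding the preprocessing and second-stage costs proves the proposition. The part I expect to be most delicate is the depth bound: one must make sure each split genuinely reduces the breakpoint multiset of \emph{every} child by a constant factor, which relies on pivot-valued breakpoints being deleted and on the two halves of a split edge being sent to opposite sides.
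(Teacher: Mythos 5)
Your proof is correct and follows essentially the same route as the paper's: bound the per-call work by $O((|\calU_v|+|\calE_v|)\log m)$, show the recursion depth is $O(\log n)$ via the $3/4$ shrinkage of the breakpoint multiset guaranteed by the weighted median, and sum over the $O(\log n)$ levels using disjointness of the $\calU_v$ at each level. Your treatment of the delicate points (pivot-valued breakpoints being dropped, split edges sending their two halves to opposite sides) is in fact more explicit than the paper's, and your final bound $O(n\log\log n)$ is the correct one.
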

A proof is given in Appendix~\ref{sec:proof:nloglogn}.

\paragraph{Remark 1} {\em Consider the minimization  problem~\eqref{eq:main} on a chain.
We say that it has an {\em interaction radius} $R$ if the optimal solution at node $i\in V$ depends only on unary terms $f_j$ 
and pairwise terms $f_{jk}$ for indices with $|j-i|\le R$ and $|k-i|\le R$. Note that $R\ge 1$.
It can be shown that the number of breakpoints in all messages stays bounded by some function of $R$.
This means that if $R$ is bounded by a constant, then the complexity of the presented algorithms (except for the $O(n\log\log n)$ algorithm in Sec.~\ref{sec:Convex:PiecewiseLinear})
is actually linear in $n$. In particular, complexity $O(n^2)$ for the non-convex case in Corollary~\ref{corollary:nonconvex:L1} %Sec.~\ref{sec:nonconvex}
becomes $O(nR)$, while the complexity $O(n\log n)$ for the convex case in Sec.~\ref{sec:Convex:PiecewiseQuadratic}
becomes $O(n\log R)$. We do not give a formal proof of these claims, so they should be treated as conjectures.

In practice we often have $R\ll n$; this happens, in particular, if  the regularization term is sufficiently weak relative to the data term.
This may explain why in the experiments given in the next section many of the algorithms empirically perform better than their worst-case complexities.
}

\section{Application examples}\label{sec:experiments}
In this section, we show how the proposed direct algorithms for total
variation minimization on trees can be used to minimize 2D total
variation based model for image processing and computer vision. In all
examples, we consider the total variation based on the $\ell_1$ norm
of the local (2D) image gradients. This allows us to rewrite the
models as the sum of one dimensional total variation problems, which
can be solved by the direct message passing algorithms proposed in
this paper. The basic idea is to perform an Lagrangian decomposition
to transform the minimization of a 2D energy to an iterative algorithm
minimizing 1D energies in each iteration. 

In~\cite{BarberoSra}, a Dykstra-like algorithm~\cite{BoyleDykstra} has
been used to iteratively minimize the TV-$\ell_2$ model. Furthermore,
the authors proposed an efficient implementation of the taut-string
algorithm, which in turn also allows also to tackle the weighted total
variation. In case all weights are equal, the method is equivalent
to Condat's algorithm. Accelerated Dykstra-like block decomposition
algorithms based on the FISTA acceleration
technique~\cite{BeckTeboulle} have been recently investigated
in~\cite{CP2015}. The authors considered different splittings of the
image domain and confirmed in numerical experiments that the
accelerated algorithms consistently outperform the unaccelerated ones.

While block-decomposition methods for minimizing the TV-$\ell_2$ have
already been proposed, using block-decomposition strategies for
solving the TV-$\ell_1$ model or (truncated) TV models subject to
nonconvex piecewise linear data terms seems to be new. Throughout this
section, we adopt the framework of structured convex-concave
saddle-point problems which can be solved by the primal-dual
algorithm~\cite{CP2011}.

\begin{itemize}
\item In case of total variation with convex quadratic unaries, we
  show that the proposed method scales linearly with the signal
  length, which is equivalent to the dynamic programming approach of
  Johnson~\cite{Johnson2013}, but better than the recently proposed
  method of Condat~\cite{Condat:13,BarberoSra} whose worst-case
  complexity is quadratic in the signal length. Furthermore, our
  algorithm can deal with weighted total variation which is not the
  case for Johnson's method. Using our proposed direct algorithms
  together with a primal-dual algorithm outperforms competing methods
  on minimizing the TV-$\ell_2$ model by one order of magnitude.
\item In case of total variation piecewise linear (or quadratic)
  unaries, we show that the empirical complexity also scales linearly
  with the signal length. We again apply the algorithm within a
  primal-dual algorithm to minimize the TV-$\ell_1$ model for 2D
  images and obtain an algorithm that outperforms the state-of-the-art by
  one order of magnitude.
\item In case of minimizing the total variation or nonconvex truncated
  total variation with nonconvex piecewise linear unaries, we show
  that the empirical worst-case bounds of our algorithms is much
  better that the theoretical worst-case bounds. We further apply the
  algorithms within a Lagrangian decomposition approach for stereo
  matching. We point out that instead of discretizing the range values
  of our problems using a discrete set of labels, we always rely on
  continuous valued solutions. Hence, the memory requirement is always
  only in the order of the 2D image size.
\end{itemize}

\subsection{TV-$\ell_2$ image restoration}

\begin{figure*}[ht!]
\subfigure[]{\includegraphics[width=0.5\textwidth]{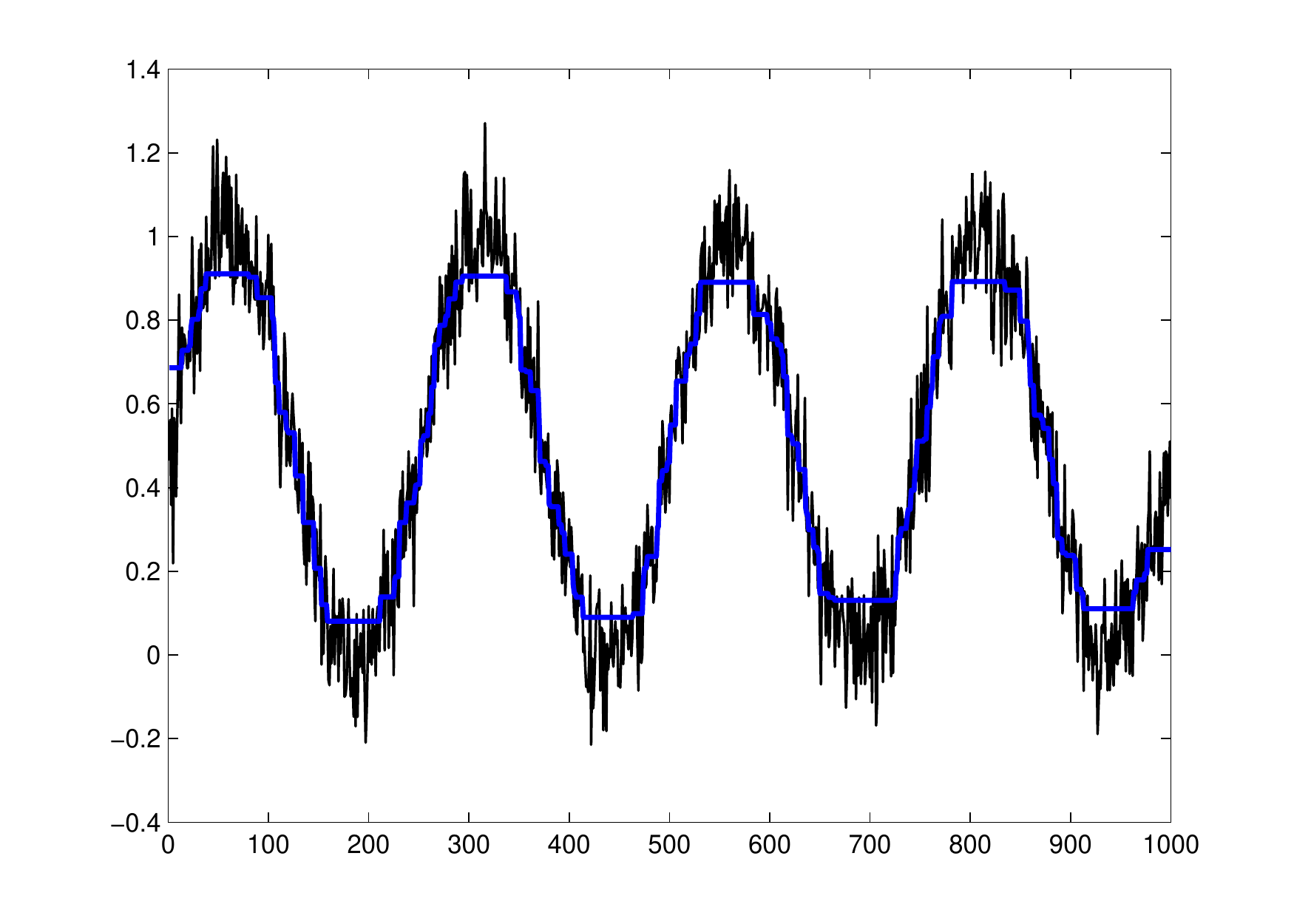}}\hfill
\subfigure[]{\includegraphics[width=0.5\textwidth]{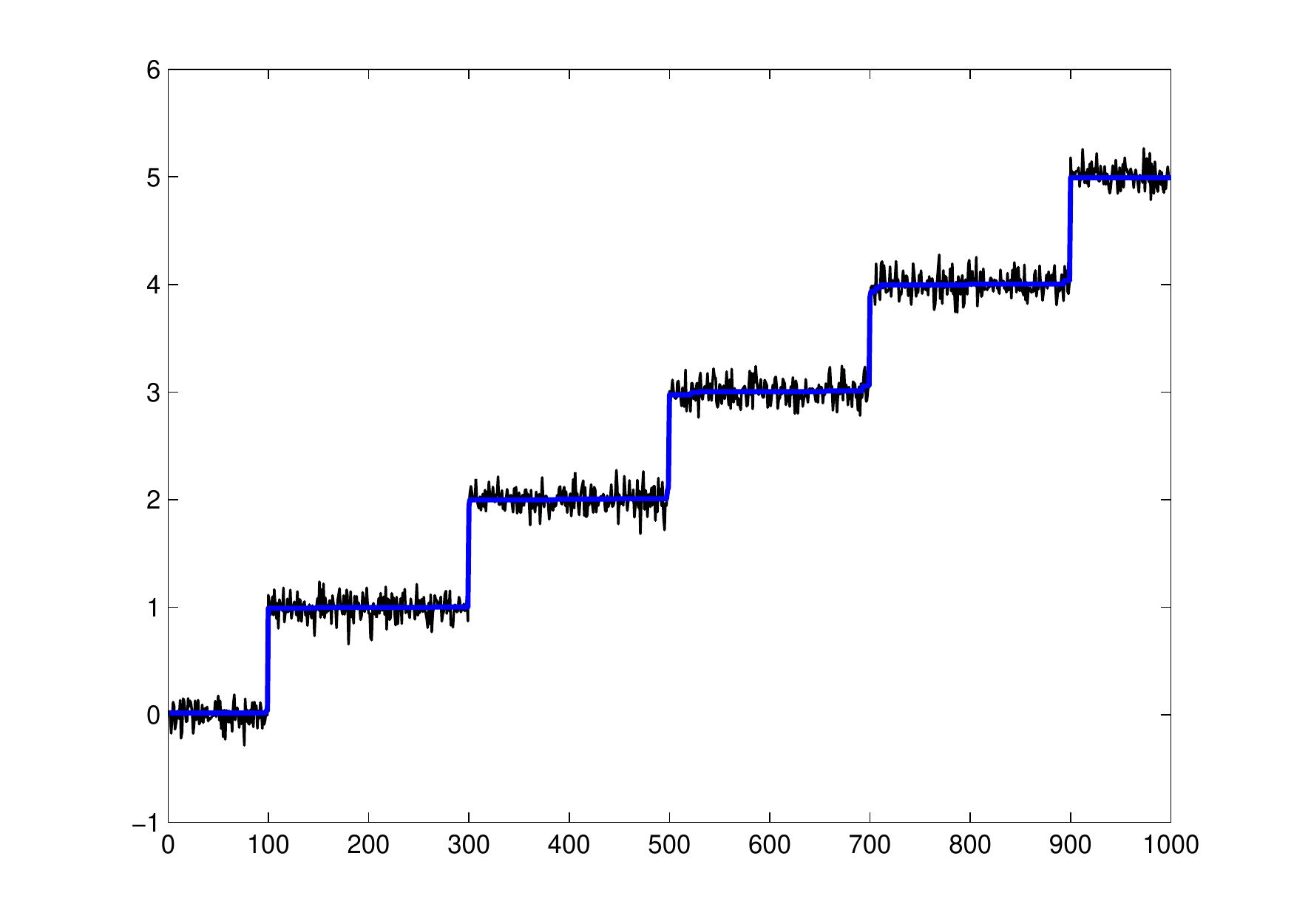}}\\
\subfigure[]{\includegraphics[width=0.5\textwidth]{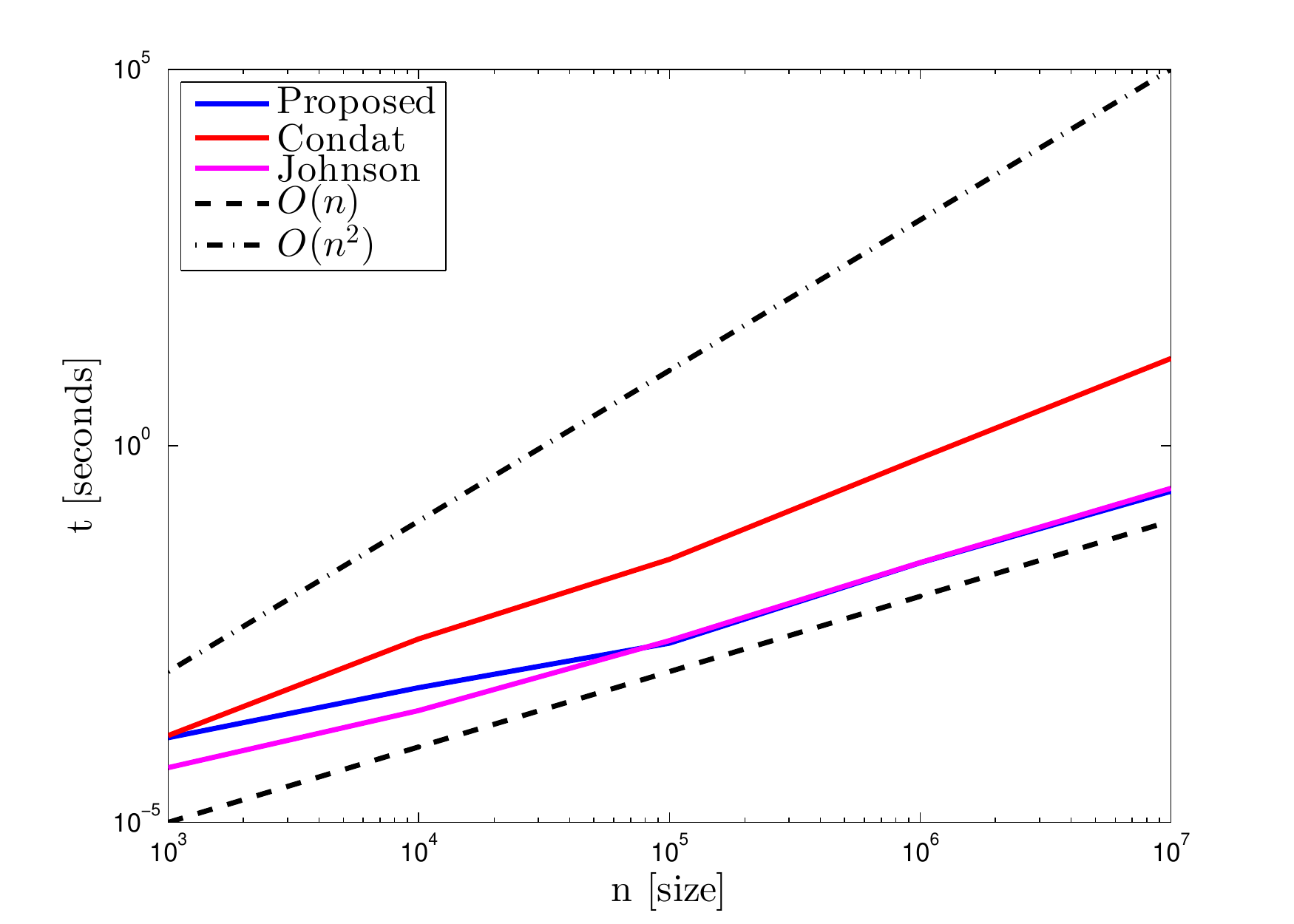}}\hfill
\subfigure[]{\includegraphics[width=0.5\textwidth]{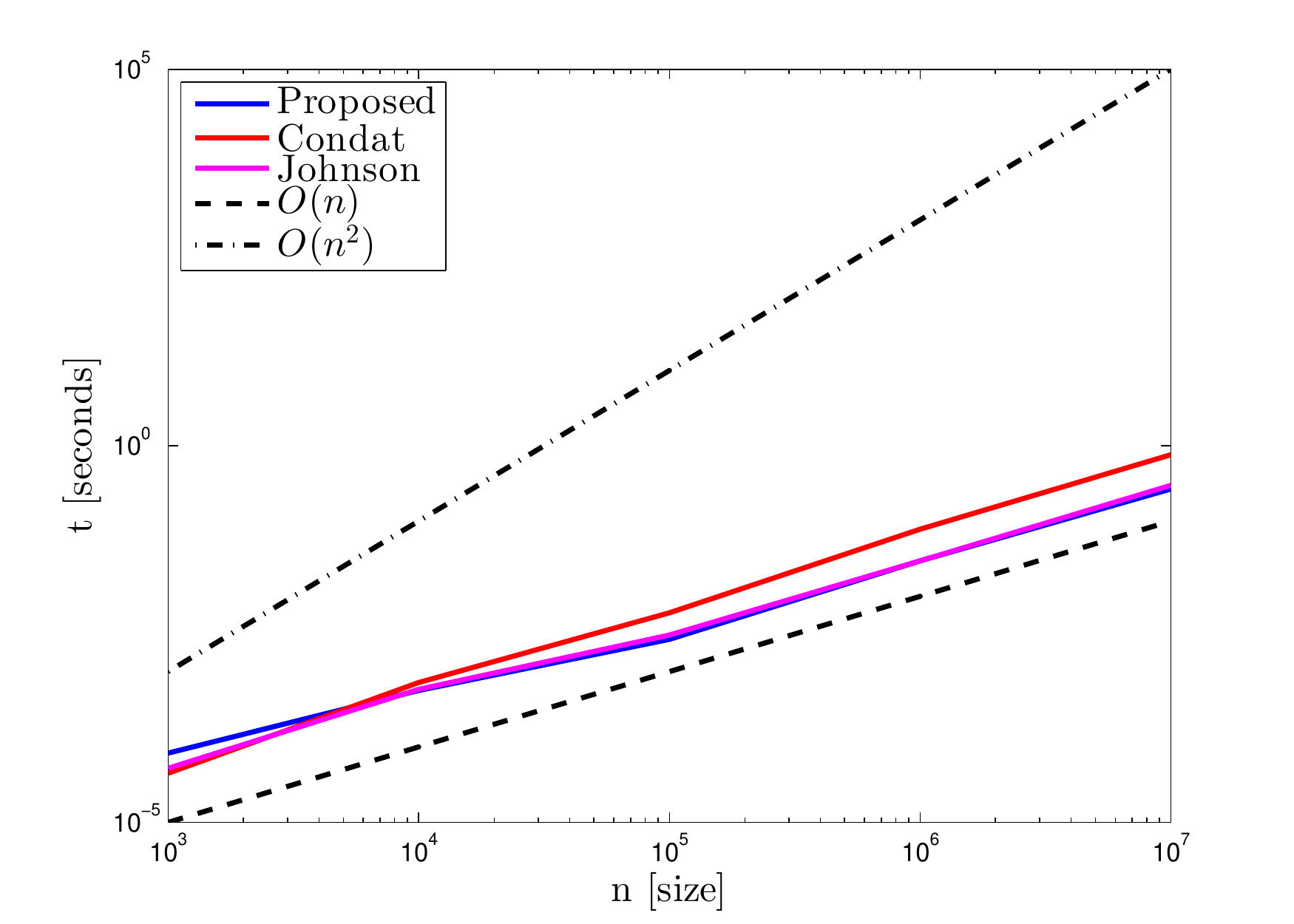}}
\caption{TV-$\ell_2$ denoising of a noisy 1D sine function and a step
  function of length $n$ containing Gaussian noise with
  $\sigma=0.1$. (a) and (b) show in black the input signal for
  $n=10^3$ and in blue the TV-$\ell_2$ regularized signal using
  $w_{i,j}=n/500$. (c) and (d) show the CPU times for different signal
  lengths $n$. Both, the proposed method and Johnson's method
  outperform Condat's method. For larger signals, the proposed method
  appears slightly more efficient than Johnson's method. In case of
  the step function also Condat's method seems to be
  competitive.}\label{fig:sine-tv-l2}
\end{figure*}

First, we provide a comparison of the proposed message passing
algorithm for solving total variation with convex quadratic unaries
(TV-$\ell_2$) on chains to the competing methods of
Condat~\cite{Condat:13} and Johnson~\cite{Johnson2013}. The problem is
written as
\begin{equation}\label{eq:1D-ROF}
\min_{x\in \mathbb R^{n}} \sum_{(i,j) \in E } w_{ij}|x_{i}-x_{j}| +
\frac12 \sum_{i \in V} (x_{i}-f_{i})^2,
\end{equation}
where $n$ is the length of the signal and $w_{ij}$ are the pairwise
weights. Fig.~\ref{fig:sine-tv-l2} provides a comparison of the
proposed algorithm to the aforementioned competing methods based on
regularizing a smooth sine-like function and a piecewise constant step
function. All three methods have been implemented in C++ and executed
on a single CPU core. The implementations of Condat and Johnson have
been provided by the authors. For the smooth sine function, the
experiments show that while Condat's method has an almost quadratic
worst case time complexity, the method of Johnson and the proposed
method have a linear time complexity. On the piecewise constant step
function, Condat's method appears to perform better but still slower
than Johnson's method and the proposed method.

\begin{figure*}
\subfigure[]{\includegraphics[width=0.45\textwidth]{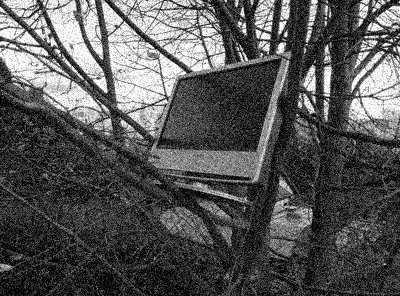}}\hfill
\subfigure[]{\includegraphics[width=0.45\textwidth]{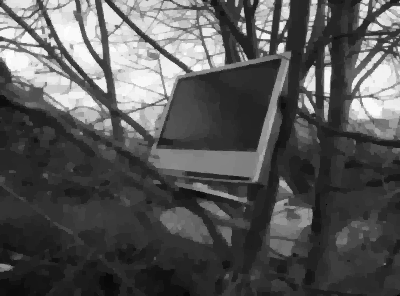}}\\
\subfigure[]{\includegraphics[width=0.5\textwidth]{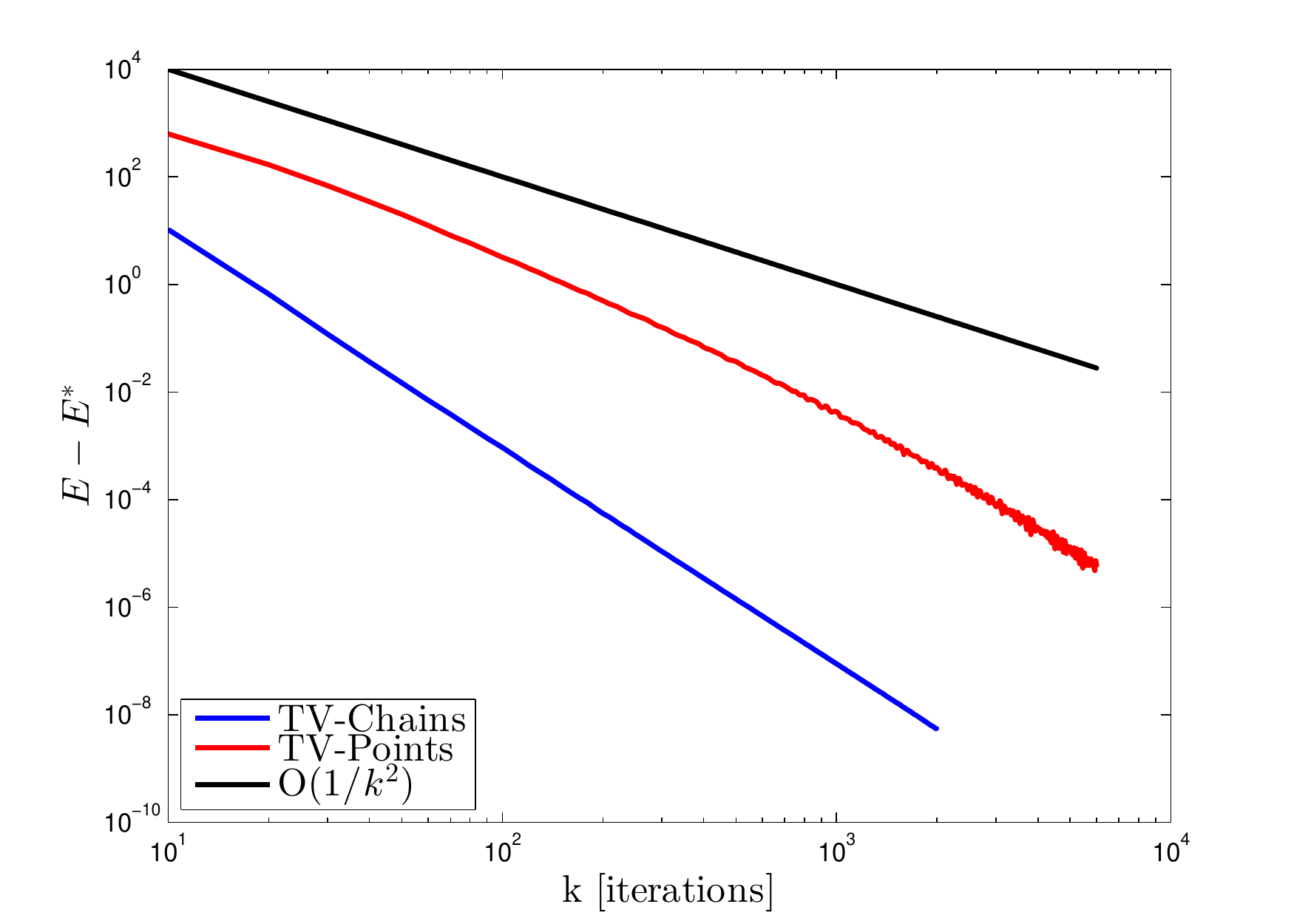}}\hfill
\subfigure[]{\includegraphics[width=0.5\textwidth]{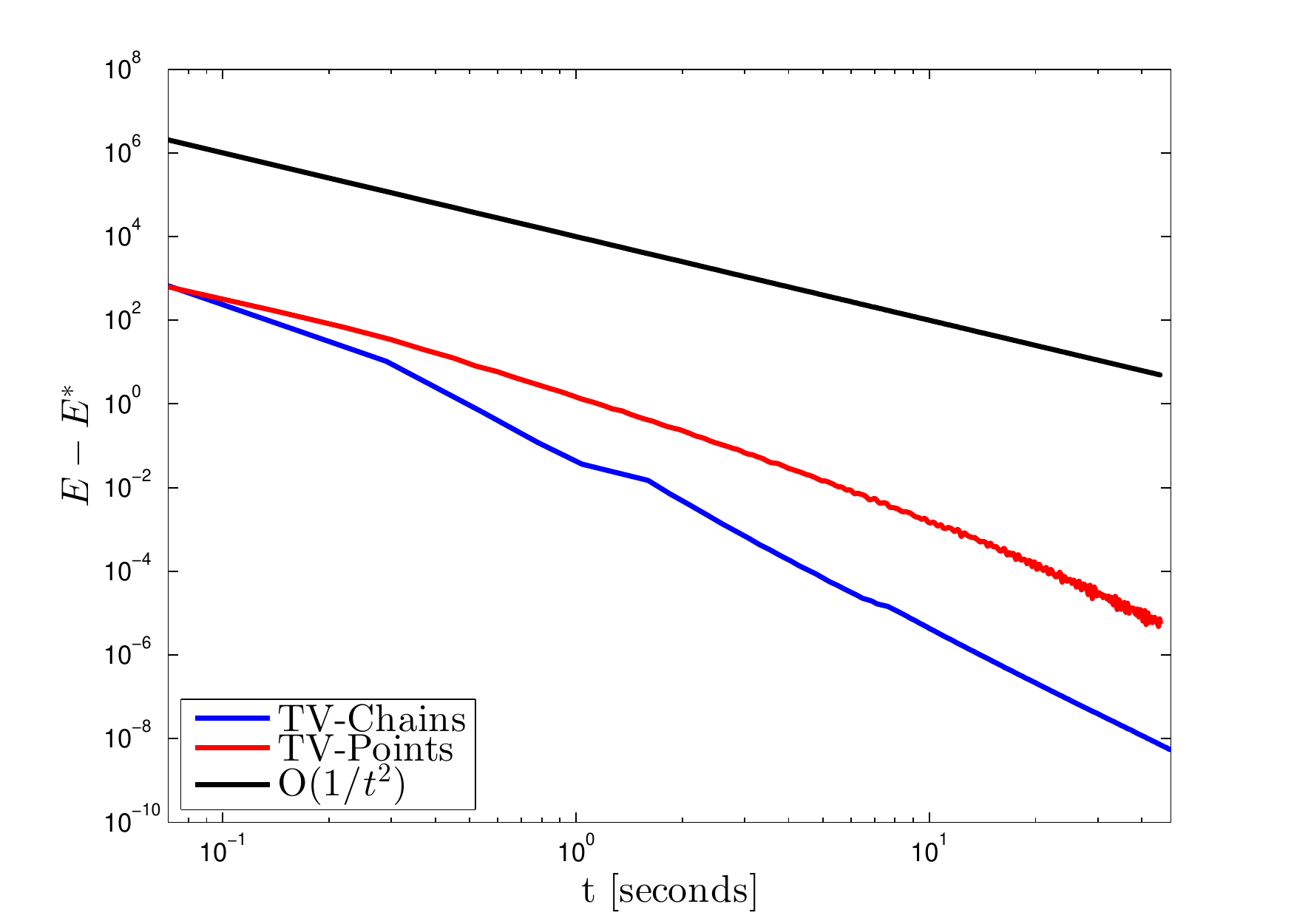}}
\caption{TV-$\ell_2$ denoising: (a) shows the ``TV-Tree'' test image
  of size $400\times 296$, which has been degraded by adding zero-mean
  Gaussian noise with standard deviation $\sigma = 25/255$. (b) shows
  the result of TV-$\ell_2$ denoising using $w_{ij} = 0.1$ for all
  $i,j$. (c) shows a comparison in terms of iterations and (d) shows a
  comparison in terms of CPU time.}\label{fig:monarch-tv-l2}
\end{figure*}

In the second example, we consider the Rudin-Osher-Fatemi
(ROF)~\cite{ROF} model for total variation image restoration of 2D
images. We point out that the ROF problem is a very fundamental
problem since besides image denoising it can also be used to compute
graph cuts~\cite{Chambolle:EMMCVPR05}.

We consider a given image $f \in \mathbb R^{mn}$ which is defined on a
regular 2D graph of $m\times n$ vertices (pixels). The model is
written as the convex minimization problem
\[
\min_{x\in \mathbb R^{mn}} TV_h(x) + TV_v(x) + \frac12 \sum_{i \in V}
(x_{i}-f_{i})^2,
\]
where $V$ is the set of nodes (pixels), with $|V|=mn$. $TV_h(x)$ and
$TV_v(x)$ refer to the total variation in horizontal and vertical
direction, which are given by
\[
TV_{h,v}(x) = \sum_{(i,j) \in E_{h,v}} w_{ij}|x_{i}-x_{j}|, 
\]
where $E_h$ and $E_v$ correspond to the sets of vertical and
horizontal edges defined on the 2D graph.  Now, we perform a
Lagrangian decomposition and rewrite the above problem as
\[
\Lambda(x,x',y) = TV_h(x) + TV_v(x') + \frac12 \norm[2]{x-f}^2 +
\scal{x-x'}{y},
\]
where $y \in \mathbb R^{mn}$ is a Lagrange multiplier and
$\scal{\cdot}{\cdot}$ denotes the usual scalar product. We proceed by
observing that the convex conjugate $TV_v^*$ of the function $TV_v$ is
given by
\[
TV_v^*(y) = \sup_{x' \in \mathbb R^{mn}} \scal{y}{x'} - TV_v(x').
\]
Substituting back in the Lagrangian yields the following saddle-point
problem:
\begin{equation}\label{eq:rof-saddle}
\min_x \max_y \scal{x}{y} + TV_h(x) + \frac12\norm[2]{x-f}^2 -
TV_v^*(y).
\end{equation}
This problem can be solved by the first-order primal-dual algorithm
proposed in~\cite{CP2011}, which in our setting is given by
\begin{equation}\label{eq:pd}
\begin{cases}
  y^{k+1} = \mathrm{prox}_{\sigma_k TV_v^*} \left(y^k + \sigma_k (x^k + \theta_k(x^k-x^{k-1}))\right)\\
  x^{k+1} = \mathrm{prox}_{\tau_k (TV_h + \frac12 \norm{\cdot-f}^2)}
  \left(x^k - \tau_k (y^{k+1})\right),
\end{cases}
\end{equation}
where $\tau_k,\sigma_k,\theta_k$ are positive step size parameters
such that $\tau_k\sigma_k=1$, $\theta_k \in (0,1]$. Since the
saddle-point problem is $1$-strongly convex in the primal variable
$x$, we can apply the accelerated variant of the primal-dual
algorithm, ensuring an optimal $O(1/k^2)$ convergence of the
rimal-dual gap (see~\cite{CP2014}). Observe that since the linear
operator in the bilinear term in~\eqref{eq:rof-saddle} is the
identity, the primal-dual algorithm is equivalent to an (accelerated)
Douglas-Rachford splitting (see~\cite{CP2011}).

In order to make the primal-dual algorithm implementable, we need to
efficiently compute the proximal maps with respect to the functions
$TV_h + \frac12\norm{\cdot-f}^2$ and $TV_v^*$. It can be checked that
the proximal map for the primal function is given by
\[
  \mathrm{prox}_{\tau (TV_h + \frac12\norm{\cdot-f}^2)}(\xi) =
  \arg\min_{x} TV_h(x) + \frac{1+\tau^{-1}}{2} \norm[2]{x-
    (1+\tau^{-1})^{-1}(f + \tau^{-1}\xi)}^2,
\]
for some $\xi \in \mathbb R^{mn}$ and $\tau > 0$. Its solution can be
computed by solving $m$ independent problems of the
form~\eqref{eq:1D-ROF}. In order to compute the proximal map with
respect to the dual function, we make use of the celebrated Moreau
identity
\begin{equation}\label{eq:prox-tv-conj}
  y = \mathrm{prox}_{\sigma TV_v^*}(y) +
  \sigma \cdot \mathrm{prox}_{\sigma ^{-1} TV_v} \left(\sigma^{-1} y\right)\,,
\end{equation}
which shows that the proximal map with respect to $TV_v^*$ can be
computed by computing the proximal map with respect to $TV_v$.
\begin{equation*}
  \mathrm{prox}_{\sigma TV_v^*}(\eta) = \eta - \sigma \cdot
  \arg\min_{x} TV_v(x) + \frac{\sigma}{2} \norm[2]{x - \sigma^{-1} \eta}^2,
\end{equation*}
for some given $\eta \in \mathbb R^{mn}$ and $\sigma > 0$. The
proximal map again reduces to $n$ independent problems of the
form~\eqref{eq:1D-ROF}. According to Sec.~\ref{sec:Convex:Quadratic},
the total complexity for computing the proximal maps is $O(mn)$ and
hence linear in the number of image pixels. Furthermore, the
computation of the independent 1D subproblems can be done fully in
parallel.

Fig.~\ref{fig:monarch-tv-l2} presents the results of a performance
comparison between the proposed accelerated primal-dual algorithm by
solving TV-$\ell_2$ problems on chains (TV-Chains) and the
state-of-the art primal-dual algorithm proposed in~\cite{CP2011} which
is based on a pointwise decomposition (TV-Points). Both algorithms
were implemented in Matlab, while for TV-Chains, the solution of the
1D subproblems was implemented in C++. The figure shows that TV-Chains
converges significantly faster than TV-Points both in terms of
iterations and CPU time and hence significantly improves the
state-of-the art (approx. one order of magnitude).

\subsection{TV-$\ell_1$ image restoration}

Next, we consider again total variation minimization but now with a
$\ell_1$ data fitting term. The minimization problem is given by
\[
\min_{x\in \mathbb R^{mn}} TV_h(x) + TV_v(x) + \sum_{i \in V}
|x_i-f_i|.
\]
It is well-known that the TV-$\ell_1$ model performs significantly
better compared to the TV-$\ell_2$ model in presence of non-Gaussian
noise. However, being a completely nonsmooth optimization problem it
is also significantly more challenging to minimize.

\begin{figure*}[ht!]
\subfigure[]{\includegraphics[width=0.5\textwidth]{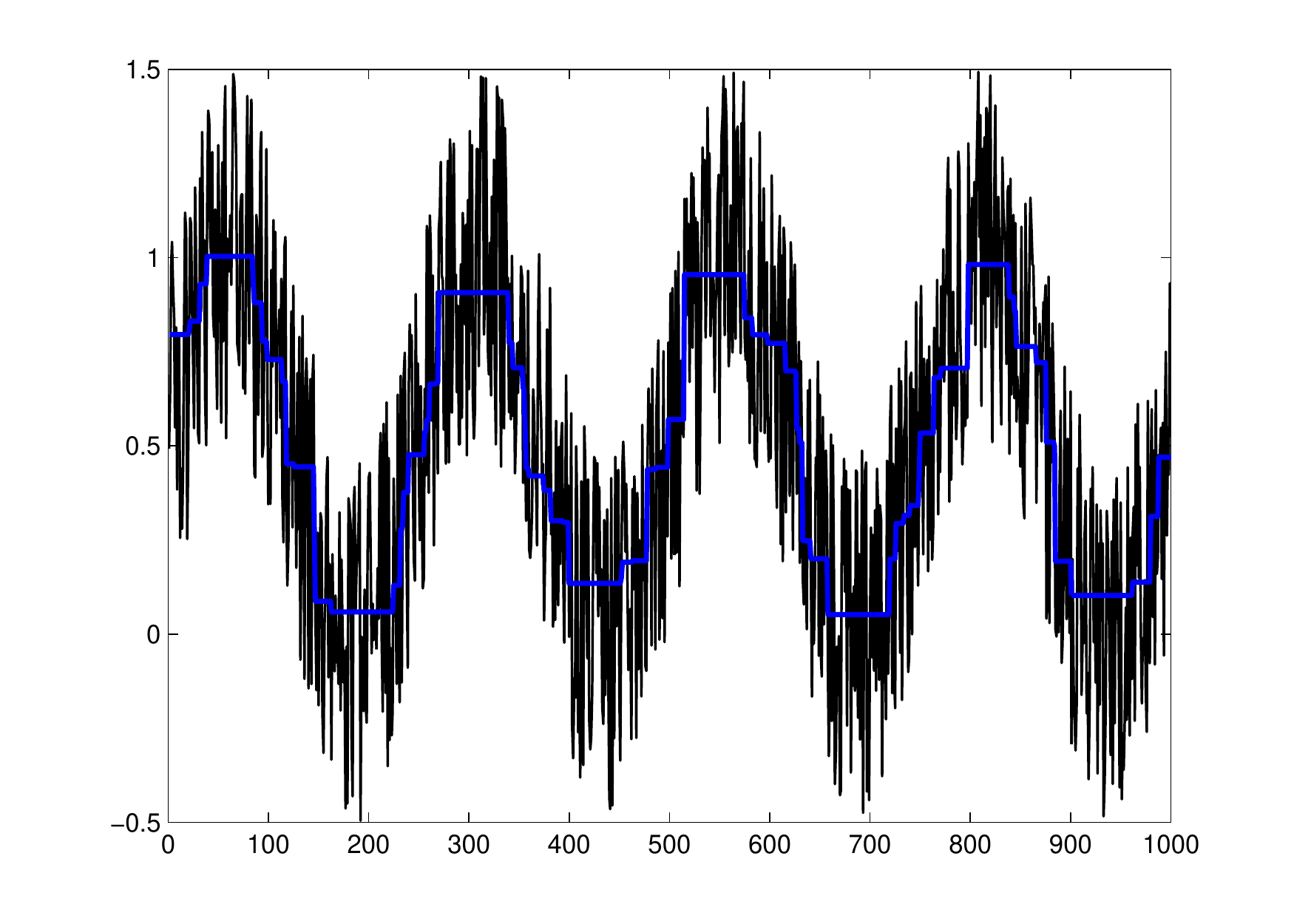}}\hfill
\subfigure[]{\includegraphics[width=0.5\textwidth]{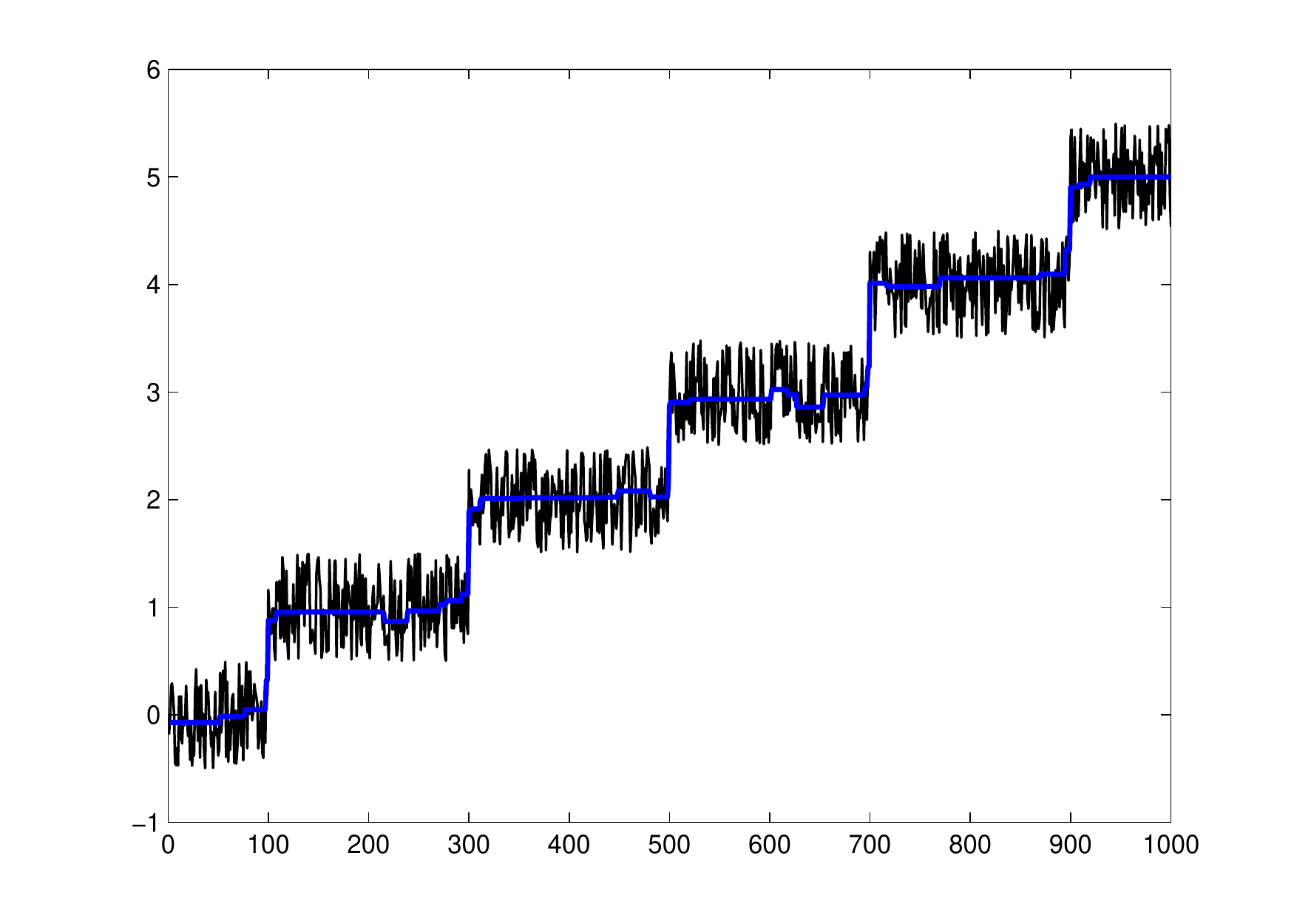}}\\
\subfigure[]{\includegraphics[width=0.5\textwidth]{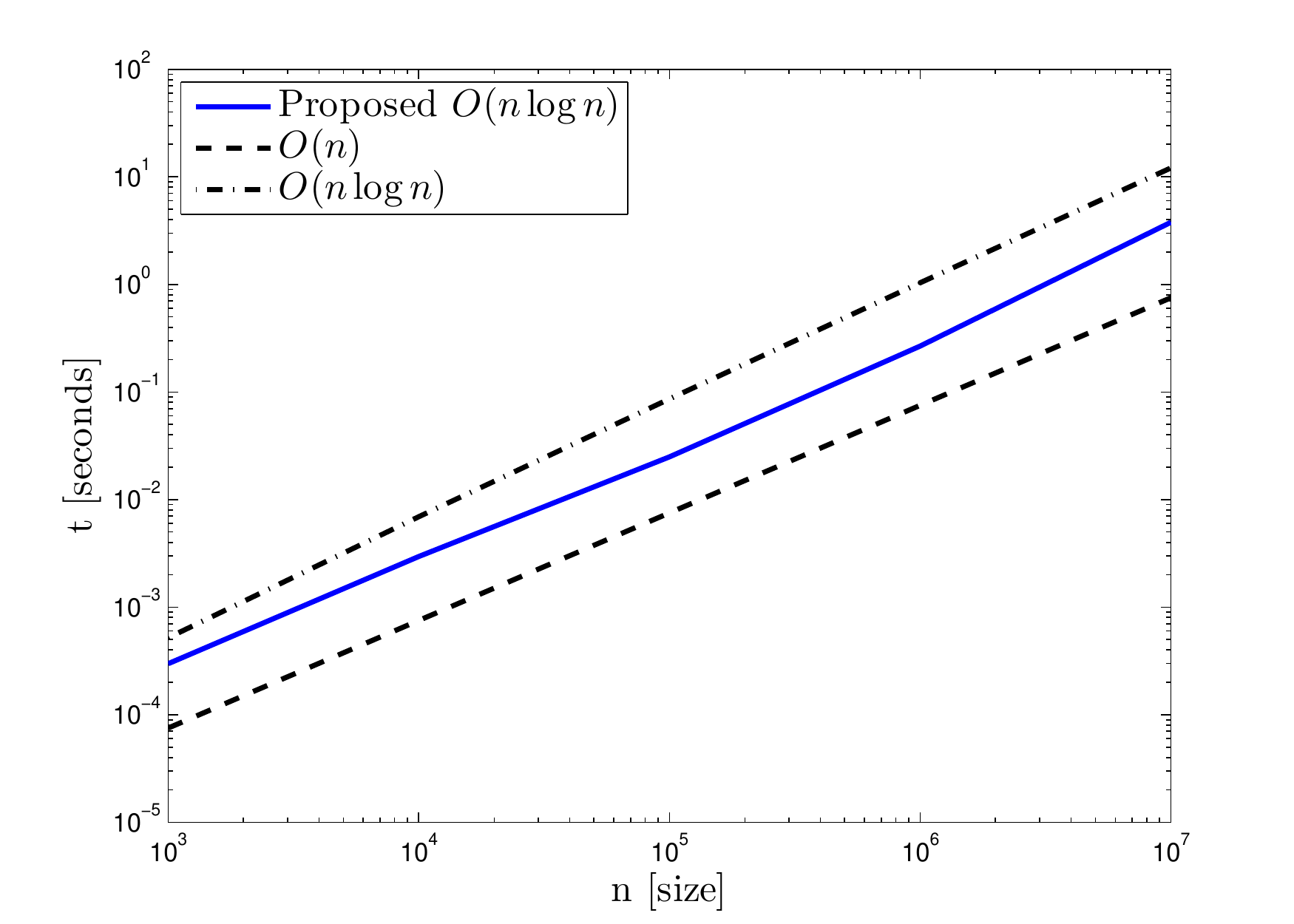}}\hfill
\subfigure[]{\includegraphics[width=0.5\textwidth]{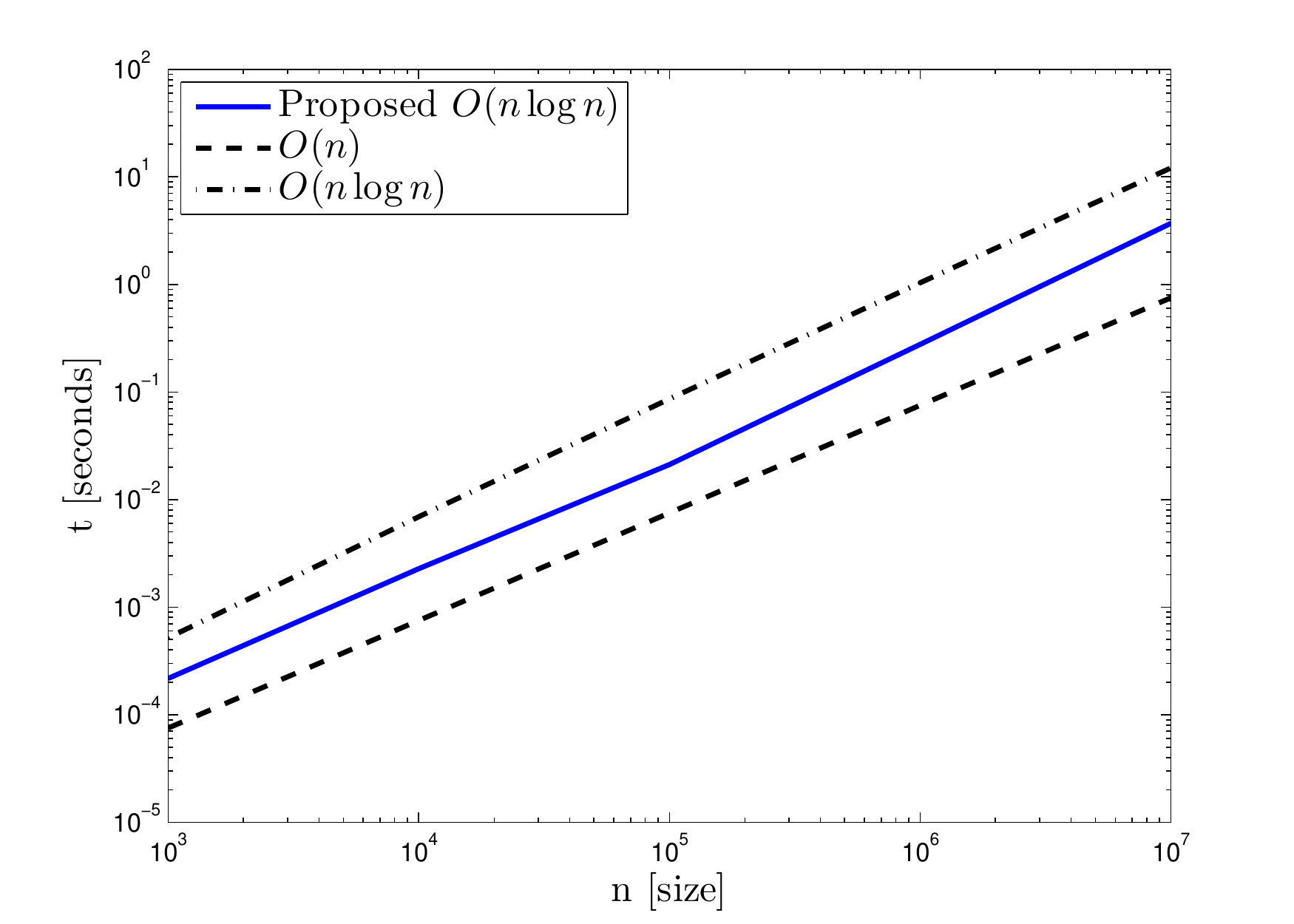}}
\caption{TV-$\ell_1$ denoising of a 1D sine function and a step
  function of length $n$ containing zero-mean uniformly distributed
  noise with magnitude $1/2$. (a) and (b) show in black the input signal for
  $n=10^3$ and in blue the TV-$\ell_1$ regularized signal using
  $w_{i,j}=200/n$. (c) and (d) show the CPU times for different signal lengths
  $n$. One can see that the empirical complexity of the proposed
  direct algorithm for computing the proximal map with respect to the
  1D TV-$\ell_1$ model is between $O(n)$ and $O(n\log n)$.}\label{fig:sinel1}
\end{figure*}

\begin{figure*}
\subfigure[]{\includegraphics[width=0.45\textwidth]{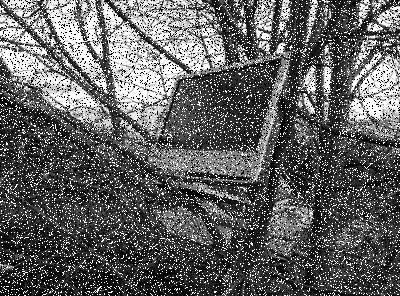}}\hfill
\subfigure[]{\includegraphics[width=0.45\textwidth]{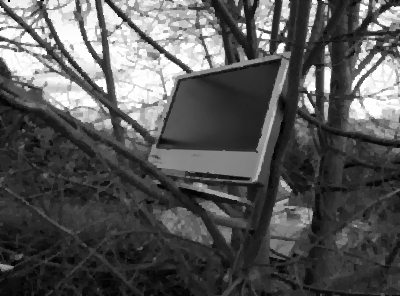}}\\
\subfigure[]{\includegraphics[width=0.5\textwidth]{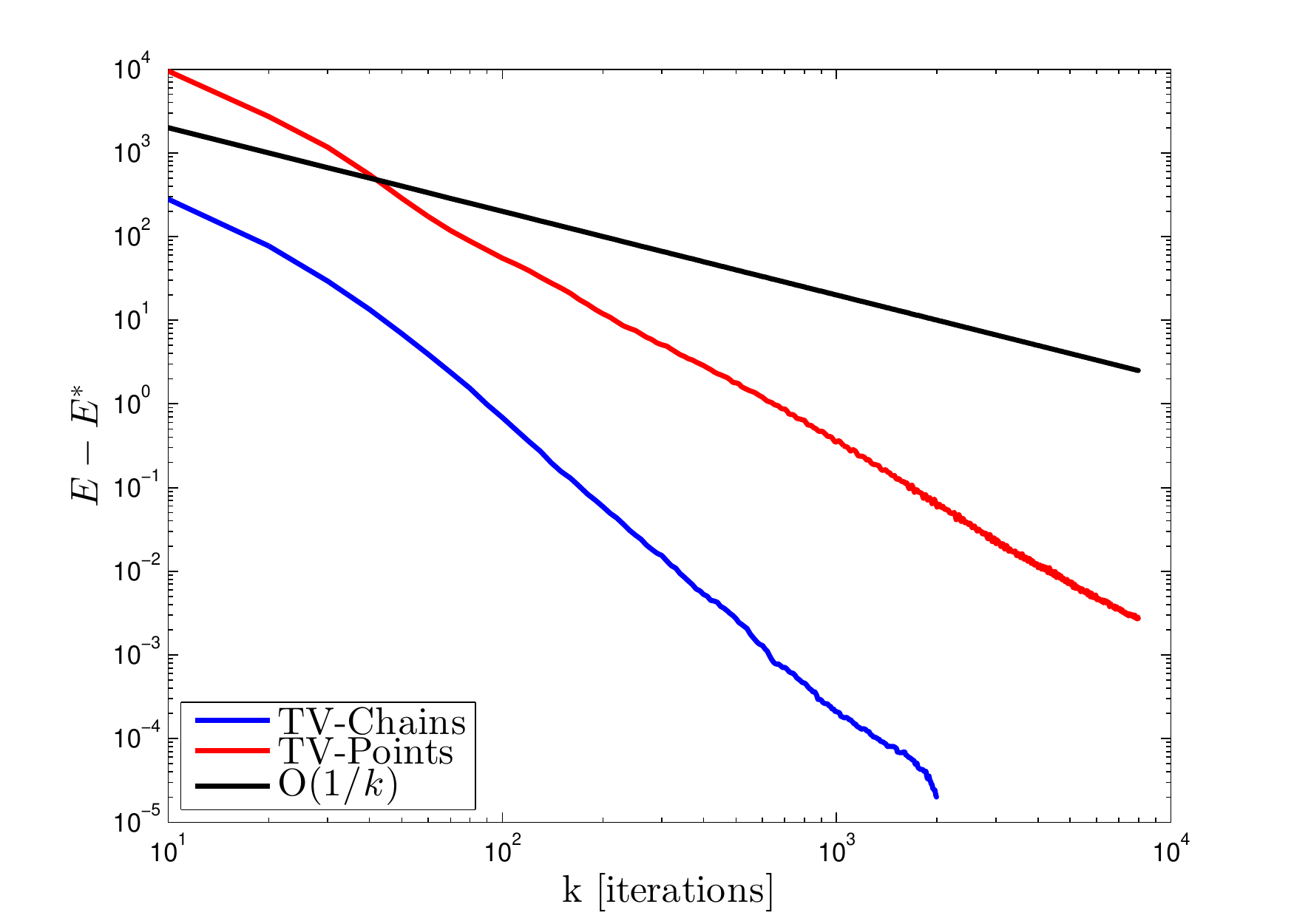}}\hfill
\subfigure[]{\includegraphics[width=0.5\textwidth]{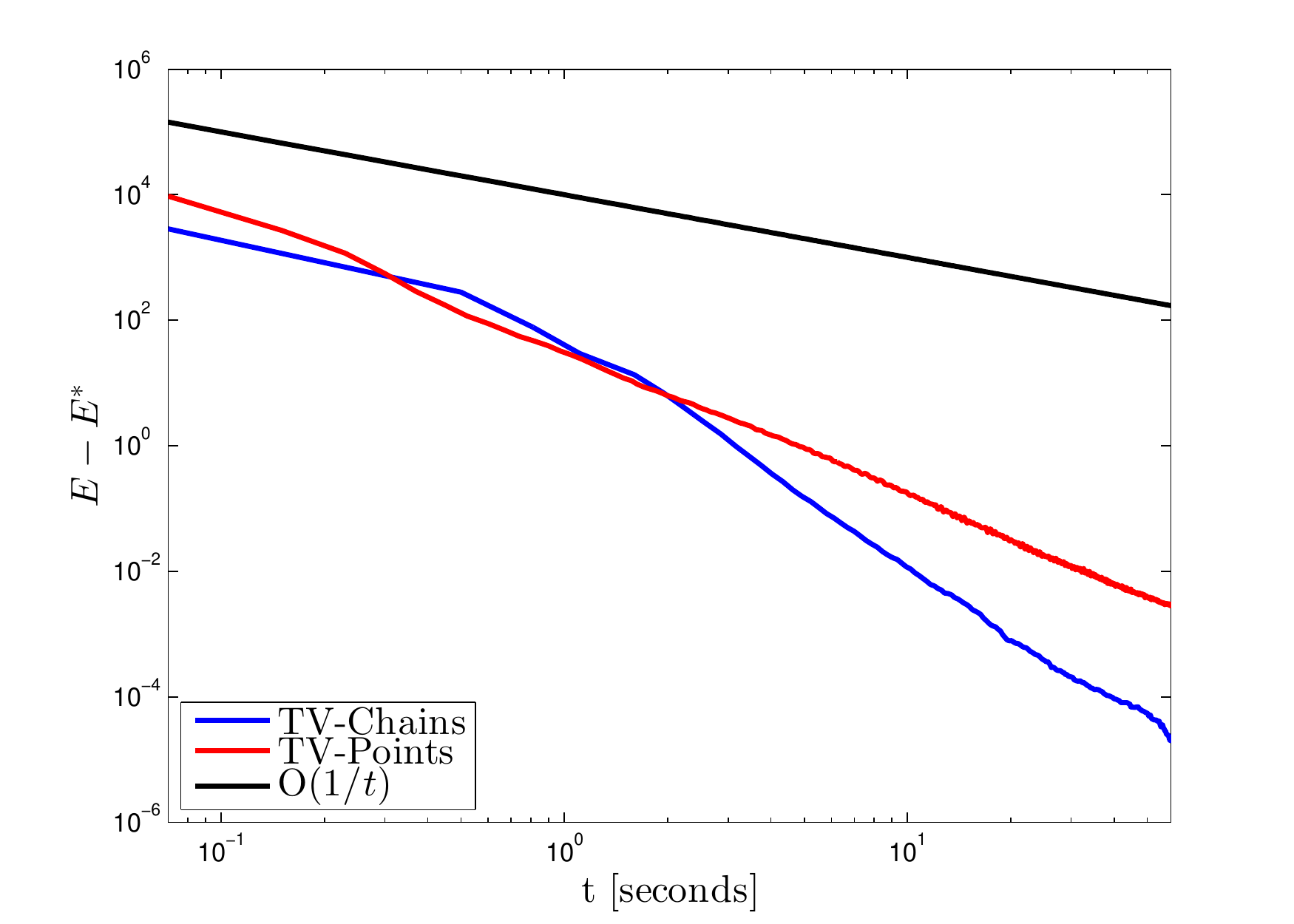}}
\caption{TV-$\ell_1$ denoising: (a) shows the ``TV-tree'' test image
  of size $400\times 296$, which has been degraded by 25\%
  salt\&pepper noise. (b) shows the result of TV-$\ell_1$ denoising
  using $w_{ij} = 0.55$ for all $i,j$. (c) shows the convergence rate
  in terms of iterations and (d) shows the convergence rate in terms
  of CPU time.}\label{fig:monarch-tv-l1}
\end{figure*}

In order to apply the direct 1D algorithms proposed in this paper to
minimize the TV-$\ell_1$ model we consider a splitting in the same
spirit as in the previous section.
\[
\min_x \max_y \scal{x}{y} + TV_h(x) + \frac12\norm[1]{x-f} -
TV_v^*(y).
\]
We solve the saddle-point problem again by using the primal-dual
algorithm~\eqref{eq:pd}. To make the algorithm implementable, we need
fast algorithms to solve the proximity operators with respect to both
the primal and dual functions. The proximity operator with respect to
the primal function is given by
\begin{equation}\label{eq:prox-pq}
  \mathrm{prox}_{\tau (TV_h + \norm[1]{\cdot-f})}(\xi) =
  \arg\min_{x} TV_h(x) + \norm[1]{x-f} + \frac{1}{2\tau}
  \norm[2]{x-\xi}^2,
\end{equation}
for some point $\xi \in \mathbb R^{mn}$ and $\tau > 0$.  Computing
this proximity operator reduces to minimizing $m$ independent total
variation problems subject to piecewise quadratic unaries. According
to Sec.~\ref{sec:Convex:PiecewiseQuadratic}, one subproblem can be
computed in $O(n\log n)$ time. The proximity operator with respect to
$TV_v^*$ is equivalent to the proximity operator
in~\eqref{eq:prox-tv-conj} and hence it reduces to $n$ independent 1D
TV problems subject to quadratic unaries. Hence, the overall
complexity for one iteration of the primal dual algorithm is
$O(mn\log n)$.

We first evaluate the empirical complexity of the direct algorithm for
minimizing the total variation with piecewise quadratic unaries which
is used in~\eqref{eq:prox-pq} to compute the proximal map with respect
to the 1D TV-$\ell_1$ problems. For this we again consider a
discretized sine function and a step function with different signal
lengths $n$ and we added zero-mean uniformly distributed noise with
magnitude $1/2$ (see Fig.\ref{fig:sinel1}. The pairwise weights were
set to $w_{i,j}=200/n$. We also set the quadratic part of the function
close to zero ($10^{-6}$) in order to be able to successfully restore
the signal. From Fig.~\ref{fig:sinel1}, one can see that the empirical
performance of the proposed algorithm for piecewise quadratic unaries
is between $O(n)$ and its worst case complexity of $O(n\log n)$. We
also compared with the $O(n \log \log n)$ direct algorithm for convex
piecewise linear unaries and it turned out that the practical
performance is about the same.

Fig.~\ref{fig:monarch-tv-l1} shows a comparison of the proposed
primal-dual algorithm based on chains (TV-Chains) to the primal-dual
algorithm based on a points-based splitting (TV-Points)~\cite{CP2011}.
Although theoretically not justified, we again used varying step sizes
in case of TV-Chains to accelerate the convergence. For TV-Points, the
acceleration scheme did not work. Both algorithms were again
implemented in Matlab, while for TV-Chains, the solution of the
proximal operators were implemented in C++.  The comparison shows that
TV-Chains needs far less iterations compared to TV-Points and it is
also significantly more efficient in terms of the CPU time
(approx. one order of magnitude).

\subsection{TV-nonconvex}

Finally, we consider total variation minimization subject to nonconvex
piecewise linear unaries. Such problems arise for example in stereo
and optical flow estimation. The general form of the minimization
problem we consider here is given by
\begin{equation}\label{eq:tv-lin}
  \min_{x\in \mathbb R^{mn}} \calP(x) = TV^C_h(x) + TV^C_v(x) + \sum_{i \in V} f_i(x_i),
\end{equation}
where $f_i$ are continuous piecewise linear functions, which are
defined by a set of $t+1$ slopes $(s_l)_{l=0}^t$, and a corresponding
set of $t$ break-points $(\lambda_kl)_{l=1}^t$. $TV^C_{h,v}(x)$ refers
to truncated total variation defined by
\begin{equation}\label{eq:ttv}
TV^C_{h,v}(x) = \sum_{(i,j) \in E_{h,v}} w_{ij}\cdot\min(C,
|x_{i}-x_{j}|),
\end{equation}
where $w_{ij}$ are edge weights and $C$ is some positive
constant. Observe that convex total variation is obtained for
$C=\infty$. We again perform a splitting into horizontal and vertical
1D problems and consider the Lagrangian
\[
\min_{x_{h,v}}\max_y\Psi_h(x_h) + \Psi_v(x_v) + \scal{x_h-x_v}{y}.
\]
where
\[
\Psi_{h,v}(x) = TV^C_{h,v}(x) + \frac12 \sum_{i \in V} f_i(x_i).
\]
The reason for splitting the nonconvex term into two parts is that
there is a higher chance that part of the nonconvexity are absorbed by
the convexity of the regularization terms. Observe that while the
problem is nonconvex in $x_h$ and $x_v$ it is concave in $y$ since it
is a pointwise maximum over linear functions.

In contrast to the application of the convex conjugate utilized in the
two previous examples, we consider here a direct application of the
primal-dual algorithm~\cite{CP2011} to the Lagrangian function. The
algorithm takes the following form:
\[
\begin{cases}  
  x_{h}^{k+1} = \mathrm{prox}_{\tau_k \Psi_{h}} \left(x_h^k - \tau_k \bar y^k \right)\\
  x_{v}^{k+1} = \mathrm{prox}_{\tau_k \Psi_{v}} \left(x_v^k + \tau_k \bar y^k \right)\\
  y^{k+1} = y^k + \sigma_k (x_h^{k+1} - x_v^{k+1})\\
  \bar y^{k+1} = y^{k+1} + \theta^k(y^{k+1}-y^{k}).
\end{cases}
\]
The proximal maps with respect to the nonconvex functions
$\Psi_{h,v}$ are computed by adding a piecewise linear approximation
of the quadratic proximity term
$\frac{1}{2\tau_k}\norm[2]{\cdot-x_{h,v}^k}^2$ to the functions
$\Psi_{h,v}$ and solving the resulting independent 1D problems using
the direct algorithm for minimizing the (truncated) total variation
subject to (nonconvex) piecewise linear unaries which has been
presented in Sec.~\ref{sec:nonconvex}.

Due to the nonconvexity in the primal objective, the primal-dual
algorithm is not guaranteed to converge. However, we observe
convergence when gradually decreasing the step size parameter $\tau_k$
during the iterations. The intuition behind this strategy is that by
gradually decreasing the primal step size, the primal-dual algorithm
approaches a (regularized) dual algorithm, applied to the (concave)
dual objective. We found that the rule $\tau_k = \tau_0/k$, $\tau_0
\approx 100...1000$ works well in practice. The dual step size is set
to $\sigma_k = 1/(\tau_k L^2)$, where the Lipschitz constant $L$ is
computed as $L=\sqrt{2}$. The relaxation parameter $\theta_k$ is
constantly set to $\theta_k=1$.

We applied problem~\eqref{eq:tv-lin} to disparity estimation in stereo
images. The stereo image pair is the ``Motorcycle'' data set of size
$1000\times 1482$ pixels, which is taken from the recently introduced
Middlebury stereo data set~\cite{Scharstein2014} (see
Fig.~\ref{fig:moto}). The stereo data term (piecewise linear functions
$f_i$ in~\eqref{eq:tv-lin}) and the edge weights ($w_{ij}$
in~\eqref{eq:ttv})are set identically to the stereo experiment
described in~\cite{CP2015}. The piecewise linear matching function is
computed using $126$ break points, which corresponds to a disparity
range of $[0,125]$.

\begin{figure*}[ht!]
  \subfigure[]{\includegraphics[width=0.45\textwidth]{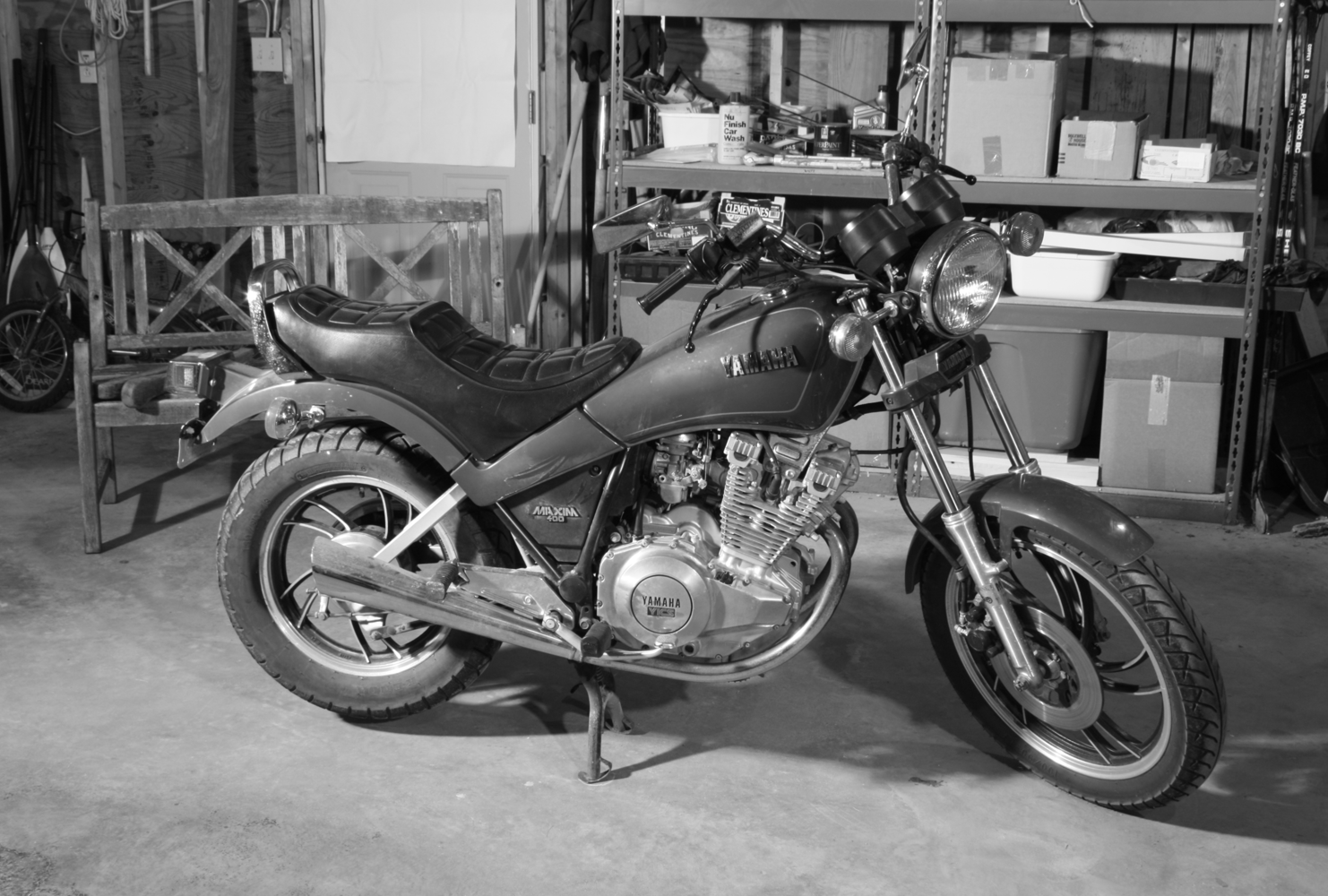}}\hfill
  \subfigure[]{\includegraphics[width=0.45\textwidth]{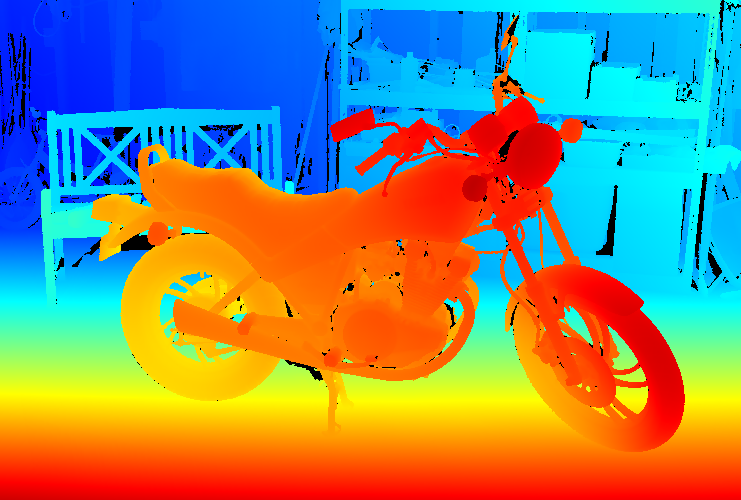}}
  \caption{``Motorcycle'' stereo data set used in the experiment. (a)
    Shows the left input image of size $1000\times 1482$ and (b) shows
    the color coded ground truth disparity map.}\label{fig:moto}
\end{figure*}

In the first experiment, we evaluate the practical performance of our
proposed dynamic programming algorithms for minimizing the convex and
nonconvex total variation subject to nonconvex piecewise linear
unaries. For this, we consider different sizes of the stereo image
pair and recorded the average time of computing the solutions of the
horizontal lines during the first iteration of the algorithm. The
number of break points is kept constant in all problems. The
worst-case complexity for solving one problem of size $n$ is $O(n^2)$
in case of convex total variation and exponential in case of nonconvex
truncated total variation. The resulting timings are presented in
Fig.~\ref{fig:timing-nonconvex}. One can clearly see that the
practical performance of the algorithm is significantly better than
the theoretical worst-case complexities (see
Sec.~\ref{sec:nonconvex}).

\begin{figure}[ht!]
  \centering
  \includegraphics[width=0.5\textwidth]{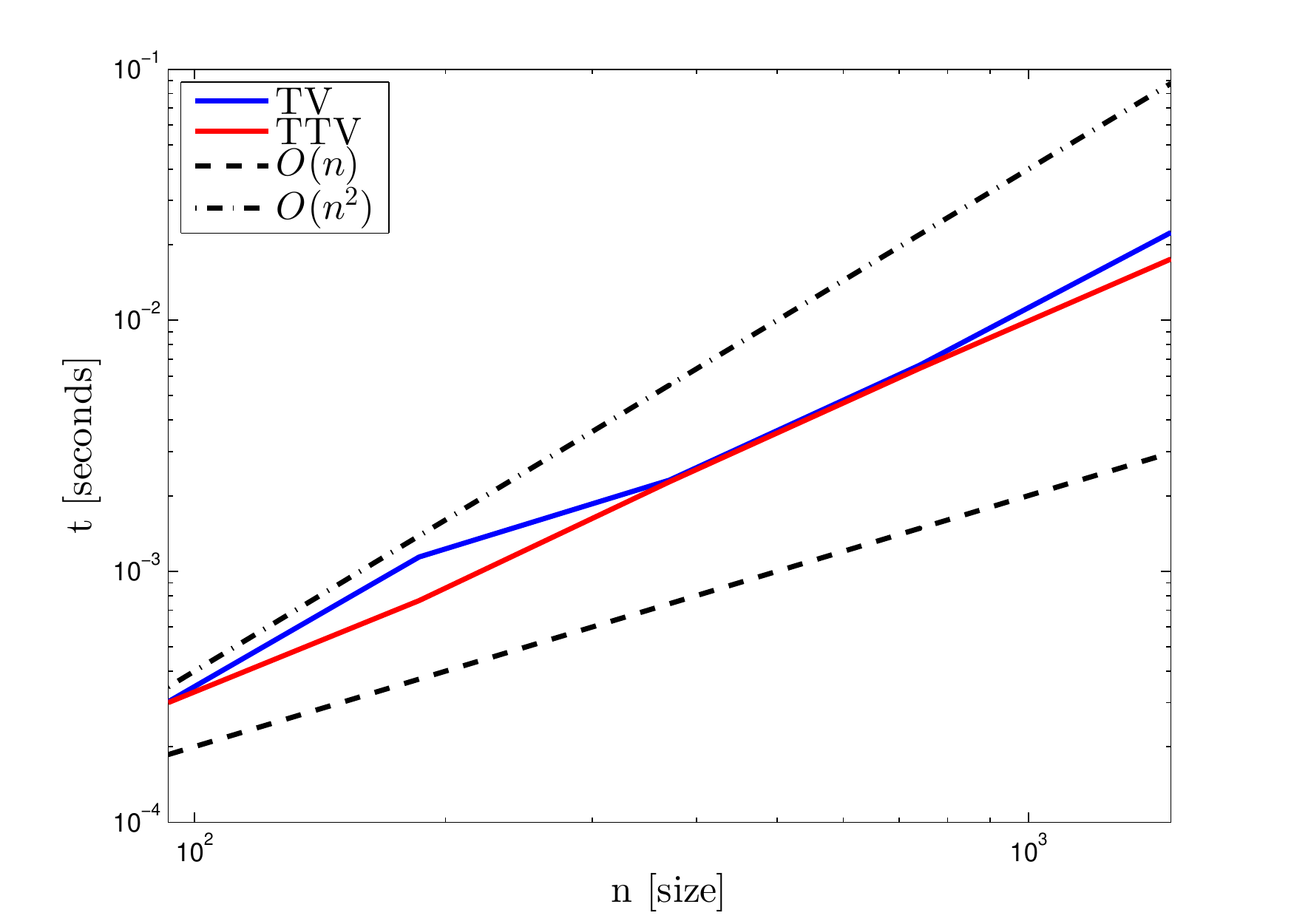}
  \caption{Evaluation of the practical performance of the proposed
    dynamic programming algorithm for minimizing 1D total variation
    subject to nonconvex piecewise linear unaries and using different
    sizes $n$. The practical performance for both convex total
    variation (TV) and nonconvex truncated total variation (TTV) is
    significantly better compared to the theoretical worst-case
    complexities presented in
    Sec.~\ref{sec:nonconvex}.}\label{fig:timing-nonconvex}
\end{figure}

In our second experiment, we conduct exactly the same stereo
experiment as in~\cite{CP2015}. However, instead of computing the
globally optimal solution by means of a lifting approach in 3D, we
directly solve the nonconvex 2D Lagrangian problem. We use either
convex total variation (TV) or truncated total variation (TTV) where
we set the truncation value to be $C=10$. The primal variables
$x_{h,v}$ are initialized by the solutions of the 1D problems
(assuming no coupling between the horizontal and vertical chains).

Fig.~\ref{fig:stereo-disp} shows a comparison between our proposed
Lagrangian decomposition and the globally optimal solution obtained
from~\cite{CP2015}. Observe that the primal energy of the Lagrangian
decomposition method quickly decreases during the first
iterations. Suprisingly, we can approach the lower bound up to a very
small error after a larger number of iterations. We also plot the
color coded disparity maps corresponding to the average solution $\bar
x^k = (x_h^k+x_v^k)/2$. While the solution after the first iteration
still shows some streaking artifacts, the solution obtained after only
$10$ iterations is visually almost identical to the globally optimal
solution.

\begin{figure*}[ht!]
  \centering
  \subfigure[Convergence]{\includegraphics[width=0.49\textwidth]{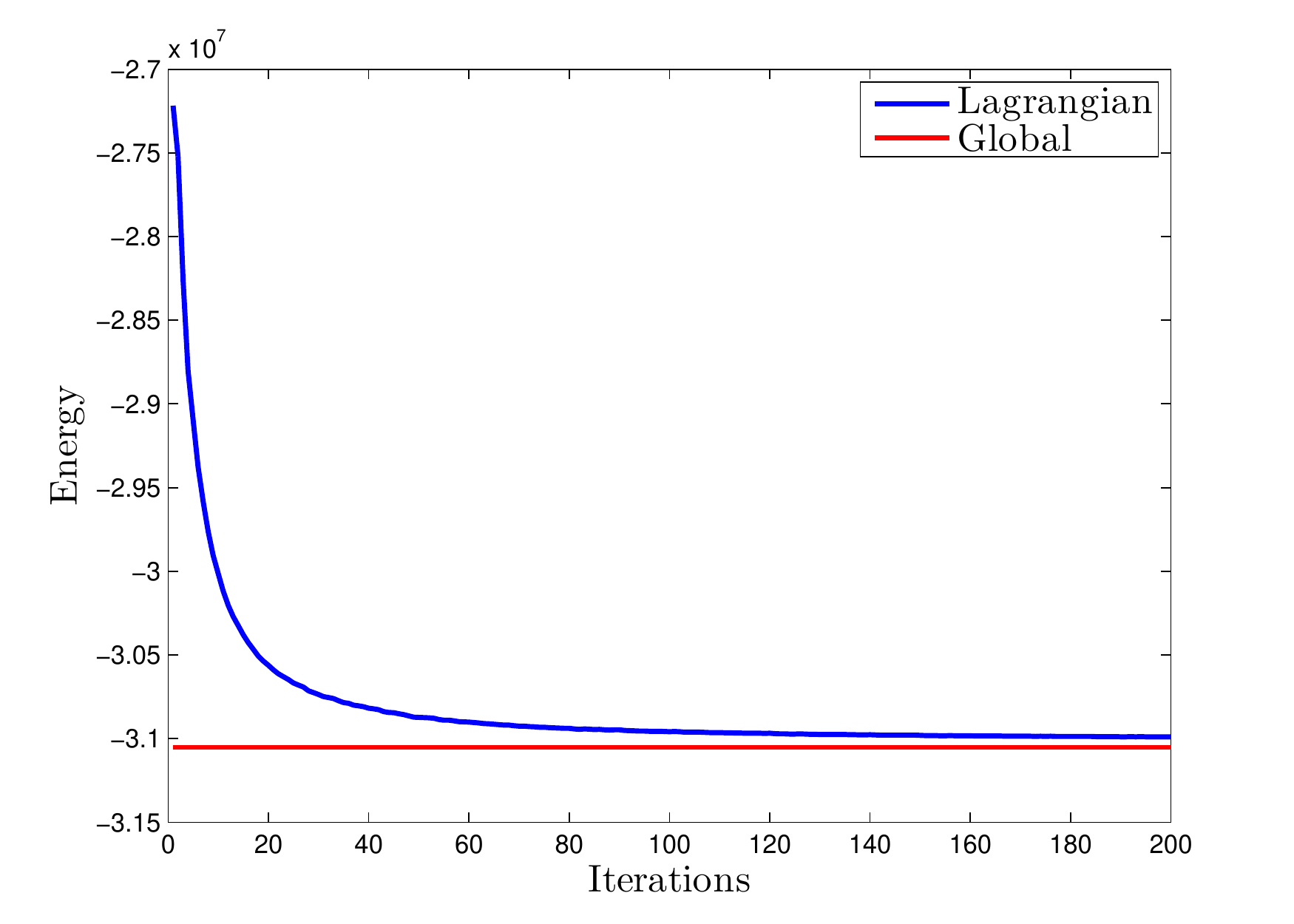}}\hfill
  \subfigure[Globally optimal~\cite{CP2015}]{\includegraphics[width=0.49\textwidth]{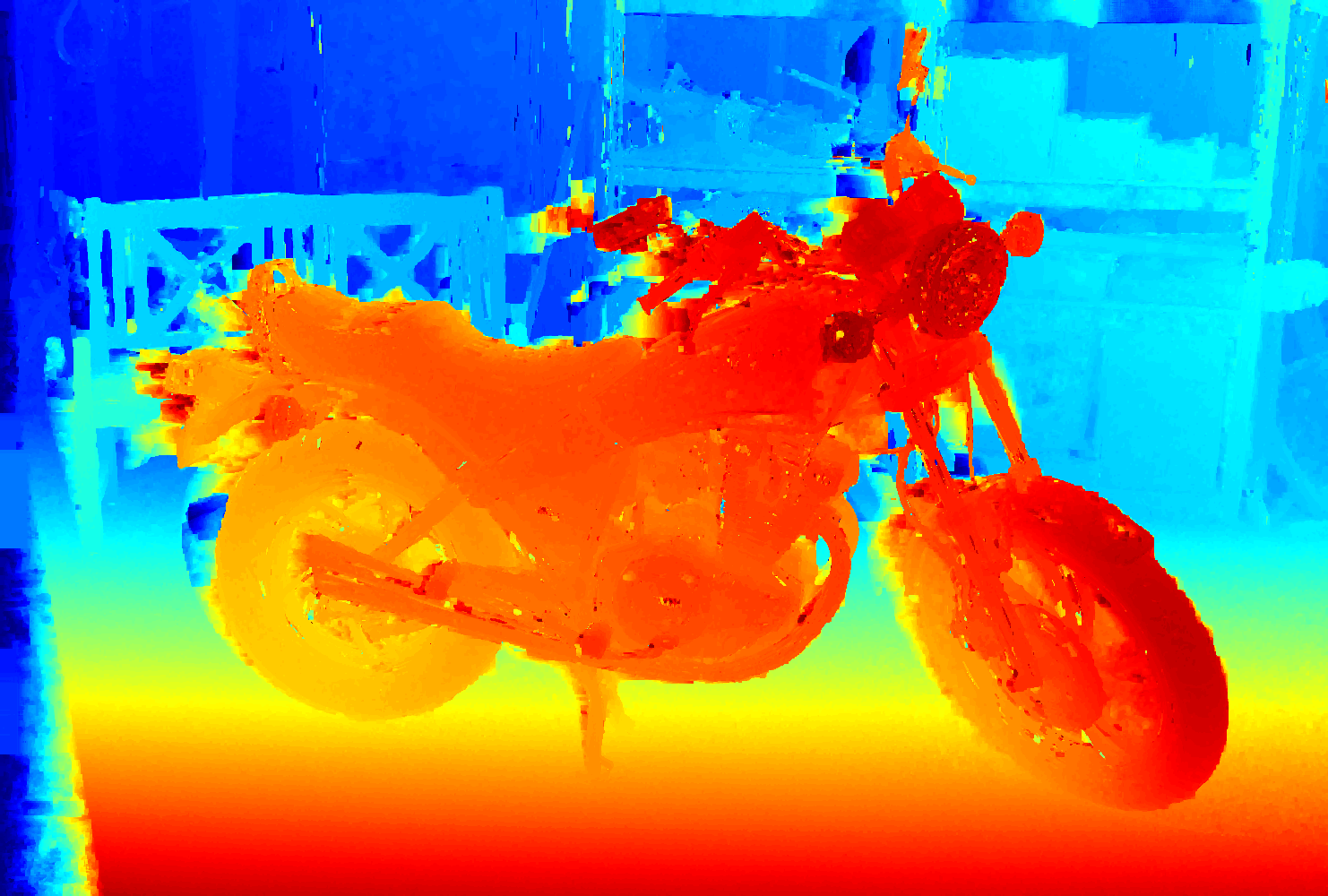}}\\  
  \subfigure[Lagrangian, $k=1$]{\includegraphics[width=0.49\textwidth]{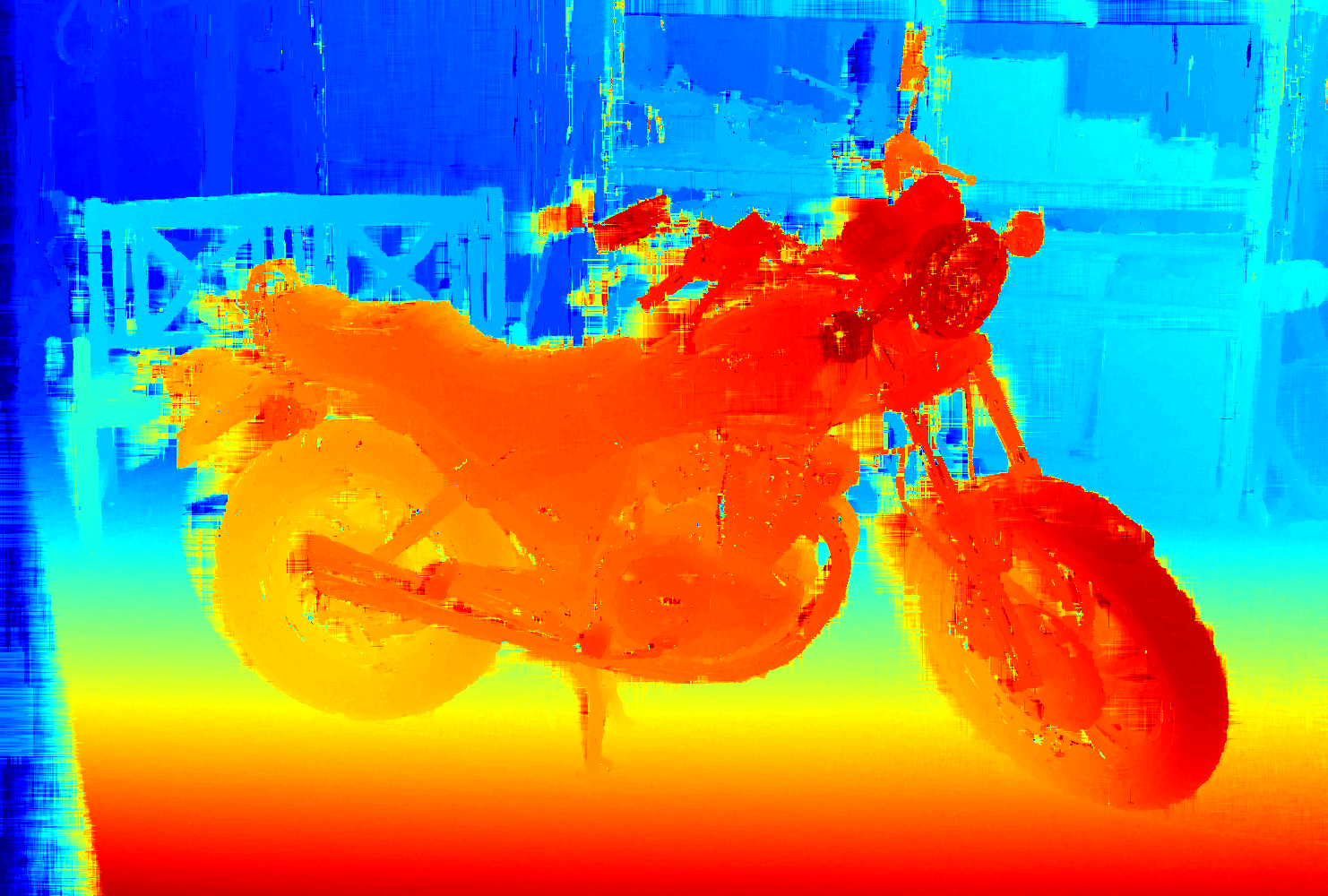}}\hfill
  \subfigure[Lagrangian, $k=10$]{\includegraphics[width=0.49\textwidth]{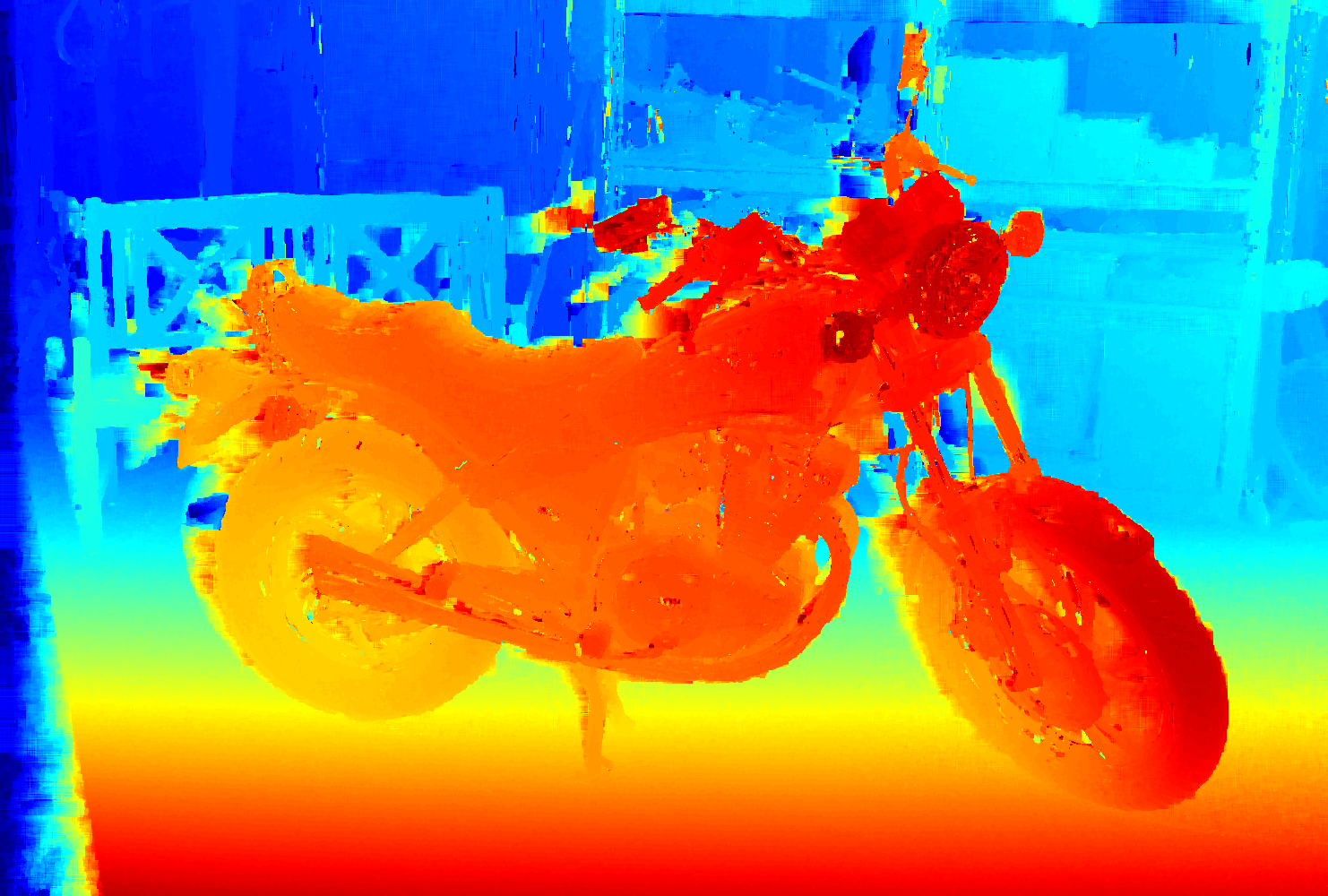}}\\
  \subfigure[Lagrangian, $k=50$]{\includegraphics[width=0.49\textwidth]{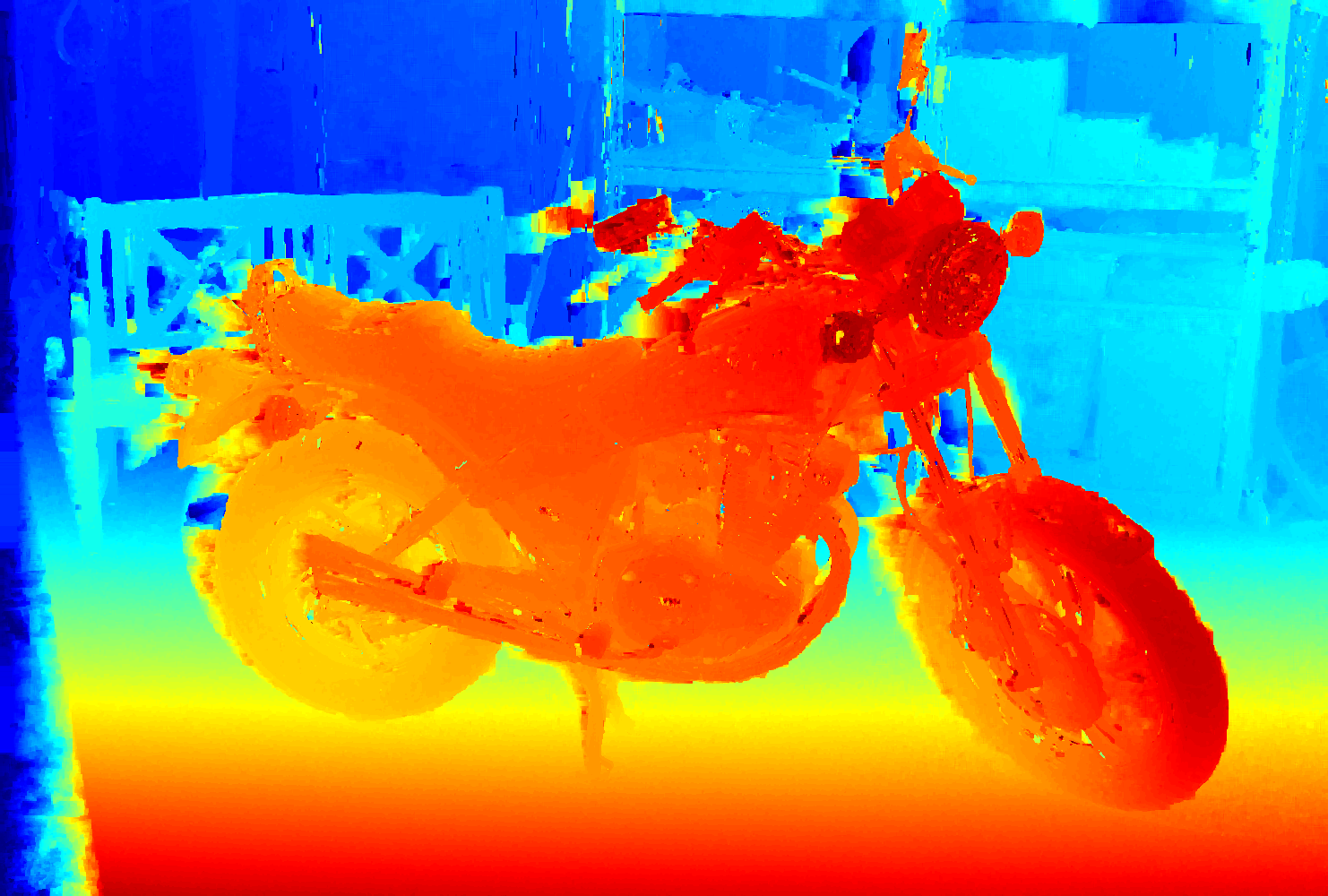}}\hfill
  \subfigure[Lagrangian, $k=100$]{\includegraphics[width=0.49\textwidth]{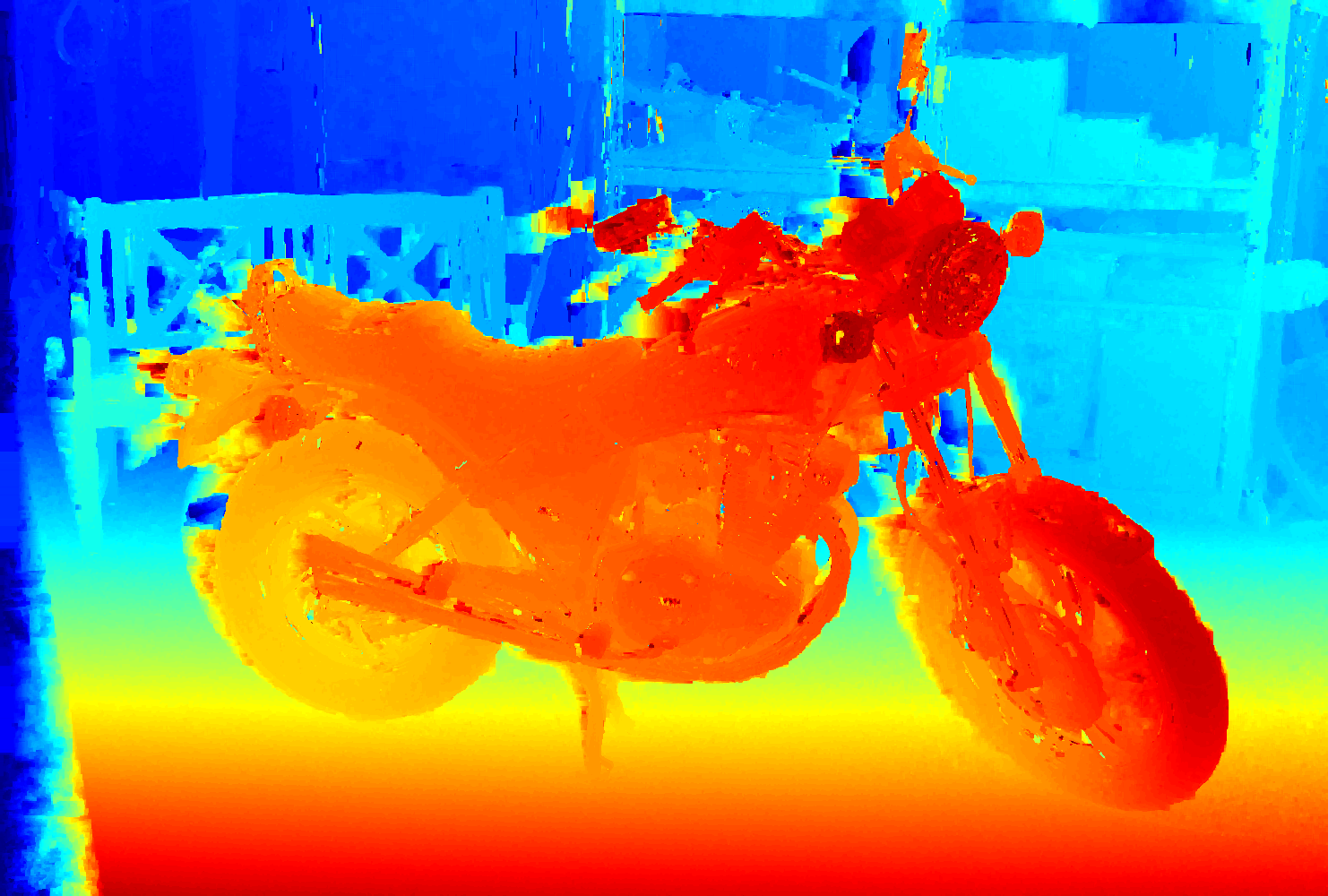}}\\
  \caption{2D Lagrangian decomposition vs. globally optimal solution
    obtained from a 3D lifting~\cite{CP2015}. (a) shows the decrease
    of the primal energy during the iterations of the primal-dual
    algorithm compared to the lower bound obtained from the globally
    optimal solution. (b) is the color coded disparity image from the
    global solution. (c)-(f) are the disparity images $\bar x^k$
    obtained from the proposed Lagrangian decomposition after $k=1$,
    $k=10$, $k=50$, and $k=100$ iterations.}\label{fig:stereo-disp}
\end{figure*}

Fig.~\ref{fig:TV-vs-TTV} finally shows a comparison between convex TV
and nonconvex truncated TV using a truncation value of $C=10$. We also
decreased the strength of the data term by a factor of two in order to
account for the less strong regularization of the TTV. One can see
that the TTV solution yields sharper discontinuities (for example at
the front wheel) and also preserves smaller details (fork tubes). It
is also a bit more sensitive for outliers in the solution, which
however can be removed by some post-processing procedure.

\begin{figure*}[ht!]
  \centering
  \subfigure[TV]{\includegraphics[width=0.49\textwidth]{code/tv-lin/out/TV_disp0100.png}}\hfill
  \subfigure[TTV, $C=10$]{\includegraphics[width=0.49\textwidth]{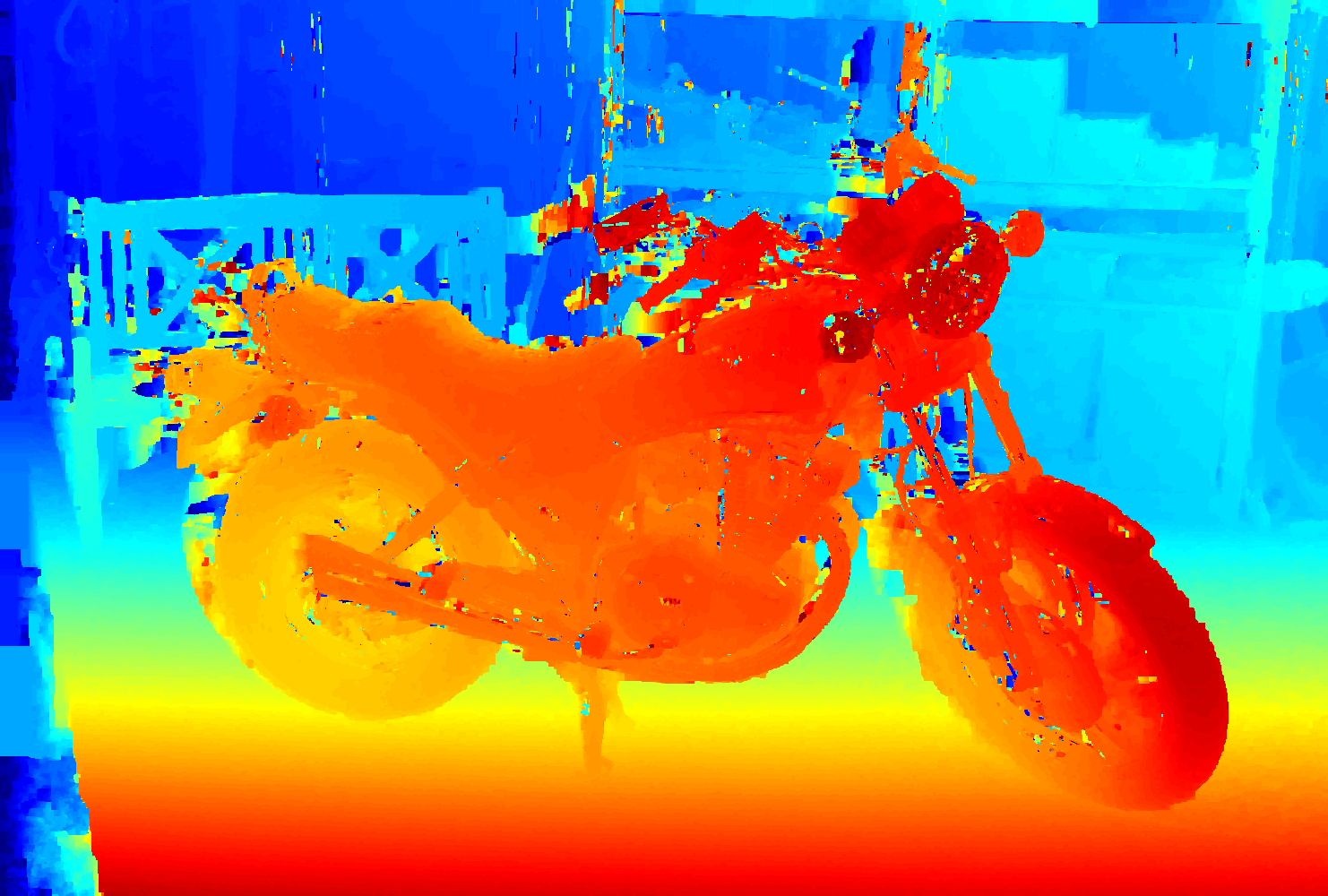}}\\  
  \subfigure[TV-Detail]{\includegraphics[width=0.3\textwidth]{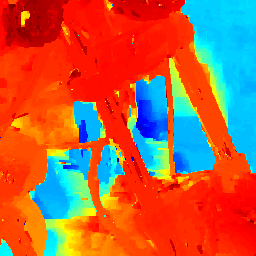}}\hfill
  \subfigure[GT-Detail]{\includegraphics[width=0.3\textwidth]{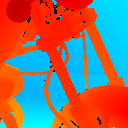}}\hfill
  \subfigure[TTV-Detail]{\includegraphics[width=0.3\textwidth]{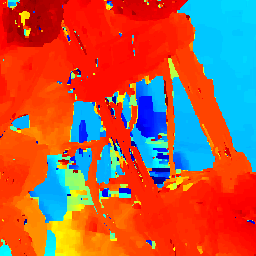}}
  \caption{Qualitative comparions between convex total variation (TV)
    vs. nonconvex truncated total variation (TTV). TTV lieds to
    sharper discontinuities in the solution and better preserves small
    details (see the detail views).}\label{fig:TV-vs-TTV}
\end{figure*}
%%%%%%%%%%%%%%%%%%%%%%%%%%%%%%%%%%%%%%%%%%%%%%%%%%%%%%%%%%%%%%%%%%%%%%%%%%%%%%%%%%%%%%%%%%
%%%%%%%%%%%%%%%%%%%%%%%%%%%%%%%%%%%%%%%%%%%%%%%%%%%%%%%%%%%%%%%%%%%%%%%%%%%%%%%%%%%%%%%%%%
%%%%%%%%%%%%%%%%%%%%%%%%%%%%%%%%%%%%%%%%%%%%%%%%%%%%%%%%%%%%%%%%%%%%%%%%%%%%%%%%%%%%%%%%%%
%%%%%%%%%%%%%%%%%%%%%%%%%%%%%%%%%%%%%%%%%%%%%%%%%%%%%%%%%%%%%%%%%%%%%%%%%%%%%%%%%%%%%%%%%%
%%%%%%%%%%%%%%%%%%%%%%%%%%%%%%%%%%%%%%%%%%%%%%%%%%%%%%%%%%%%%%%%%%%%%%%%%%%%%%%%%%%%%%%%%%
%%%%%%%%%%%%%%%%%%%%%%%%%%%%%%%%%%%%%%%%%%%%%%%%%%%%%%%%%%%%%%%%%%%%%%%%%%%%%%%%%%%%%%%%%%
%%%%%%%%%%%%%%%%%%%%%%%%%%%%%%%%%%%%%%%%%%%%%%%%%%%%%%%%%%%%%%%%%%%%%%%%%%%%%%%%%%%%%%%%%%
%%%%%%%%%%%%%%%%%%%%%%%%%%%%%%%%%%%%%%%%%%%%%%%%%%%%%%%%%%%%%%%%%%%%%%%%%%%%%%%%%%%%%%%%%%
%%%%%%%%%%%%%%%%%%%%%%%%%%%%%%%%%%%%%%%%%%%%%%%%%%%%%%%%%%%%%%%%%%%%%%%%%%%%%%%%%%%%%%%%%%

\section{Conclusion}\label{sec:conclusion}
In this paper we proposed dynamic programming algorithms for
minimizing the total variation subject to different pointwise data
terms on trees. We considered general nonconvex piecewise linear data
terms, convex piecewise linear (and quadratic) data terms and convex
quadratic data terms. 

In case of quadratic data terms, the resulting dynamic programming
algorithm has a linear complexity in the signal length and can be seen
as a generalization of Johnson's method~\cite{Johnson2013} to weighted
total variation. In case of convex piecewise linear data terms, our
dynamic programming algorithm has a worst case complexity of $O(n \log
\log n)$ which improves the currently best performing
algorithm~\cite{DeumbgenKovac:09}. In case of convex piecewise
quadratic unaries we obtain an algorithm with a slightly worse
complexity of $O(n\log n)$ but this algorithm turns out to be useful
for computing proximity operators with respect to 1D TV-$\ell_1$
energies. Finally, in case of nonconvex piecewise linear unaries, we
obtain a worst-case complexity of $O(n^2)$ which turns out to be
useful for approximately solving 2D stereo problems. We evaluated the
dynamic programming algorithms by utilizing them as basic building
blocks in primal-dual block decomposition algorithms for minimizing 2D
total variation models. Our numerical experiments show the efficiency
of the proposed algorithms.

In our block decomposition algorithms all 1D subproblems can be solved
simultaneously, which clearly offers a lot of potential for
parallelization and hence a speedup of the algorithms. We will pursue
this direction in our future work.

%%%%%%%%%%%%%%%%%%%%%%%%%%%%%%%%%%%%%%%%%%%%%%%%%%%%%%%%%%%%%%%%%%%%%%%%%%%%%%%%%%%%%%%%%%
%%%%%%%%%%%%%%%%%%%%%%%%%%%%%%%%%%%%%%%%%%%%%%%%%%%%%%%%%%%%%%%%%%%%%%%%%%%%%%%%%%%%%%%%%%
%%%%%%%%%%%%%%%%%%%%%%%%%%%%%%%%%%%%%%%%%%%%%%%%%%%%%%%%%%%%%%%%%%%%%%%%%%%%%%%%%%%%%%%%%%
%%%%%%%%%%%%%%%%%%%%%%%%%%%%%%%%%%%%%%%%%%%%%%%%%%%%%%%%%%%%%%%%%%%%%%%%%%%%%%%%%%%%%%%%%%
%%%%%%%%%%%%%%%%%%%%%%%%%%%%%%%%%%%%%%%%%%%%%%%%%%%%%%%%%%%%%%%%%%%%%%%%%%%%%%%%%%%%%%%%%%
%%%%%%%%%%%%%%%%%%%%%%%%%%%%%%%%%%%%%%%%%%%%%%%%%%%%%%%%%%%%%%%%%%%%%%%%%%%%%%%%%%%%%%%%%%
%%%%%%%%%%%%%%%%%%%%%%%%%%%%%%%%%%%%%%%%%%%%%%%%%%%%%%%%%%%%%%%%%%%%%%%%%%%%%%%%%%%%%%%%%%
%%%%%%%%%%%%%%%%%%%%%%%%%%%%%%%%%%%%%%%%%%%%%%%%%%%%%%%%%%%%%%%%%%%%%%%%%%%%%%%%%%%%%%%%%%
%%%%%%%%%%%%%%%%%%%%%%%%%%%%%%%%%%%%%%%%%%%%%%%%%%%%%%%%%%%%%%%%%%%%%%%%%%%%%%%%%%%%%%%%%%

\appendix
\section{Proof of Proposition~\ref{prop:ConsecutiveMinConvolution}} \label{sec:ConsecutiveMinConvolution}
It suffices to prove the claim for $m=2$; the general claim will then follow by induction.
Denote $s=h\otimes g$; we need to show that $h^2=s(y)$ for all $y\in \mathbb R$, where $h^1=h\otimes g^1$ and $h^2=h^1\otimes g^2$.

\paragraph{Proof of $h^2(y)\le s(y)$}
First, observe  that $h^2(z)\le h^1(z)\le h(z)$ for any $z$ since $g^1(0)=g^2(0)=0$. For any $x$ we have
\begin{eqnarray*}
h^2(y)&\le& h^1(y) \le h(x) + g^1(y-x) \\
h^2(y)&\le& h^1(x) + g^2(y-x) = h(x) + g^2(y-x)
\end{eqnarray*}
and so
\begin{eqnarray*}
h^2(y)&\le& h(x) + \min_{k\in\{1,2\}} g^k(y-x) = h(x) + g(y-x)
\end{eqnarray*}
Therefore, $h^2(y)\le \min_x [h(x) + g(y-x)] =s(y)$.
%Let us show the other direction. 
\paragraph{Proof of $h^2(y)\ge s(y)$}
For any $x,x'$ we have
\begin{eqnarray*}
s(y)&\!\!\le\!\!& h(x)+g(y-x) \;\le\; h(x)+[g(x'-x)+g(y-x')] \\
    &\!\!\le\!\!& h(x)+g^1(x'-x)+g^2(y-x')
\end{eqnarray*}
where the second inequality holds by the triangle inequality~\eqref{eq:GHAOUSFGHAOUG}.
Therefore, 
\begin{eqnarray*}
s(y)&\!\!\le\!\!& \min_{x,x'} [[h(x)+g^1(x'-x)]+g^2(y-x')] \\
&\!\!=\!\!&\min_{x'} [h^1(x')+g^2(y-x')]\;=\;h^2(y)
\end{eqnarray*}
\paragraph{Mapping $\pi$}
It remains to prove that mapping $\pi=\pi^1\circ\pi^2$ corresponds to min-convolution $h^2=h\otimes g$.
Let $x'=\pi^2(y)$ and $x=\pi^1(x')$, then $h^2(y)=h^1(x')+g^2(y-x')$
\begin{eqnarray*}
h^2(y)&\!\!=\!\!&h^1(x')+g^2(y-x') \\
&\!\!=\!\!&[h(x)+g^1(x'-x)]+g^2(y-x') \\
&\!\!\ge\!\!& h(x)+g(x'-x)+g(y-x') \ge h(x)+g(y-x)
\end{eqnarray*}
(the last inequality is again by~\eqref{eq:GHAOUSFGHAOUG}). The inequality $h(x)+g(y-x)\le h^2(y)$ means that $x\in\argmin_x [h(x)+g(y-x)]$.

\section{Proof of Proposition~\ref{prop:SortingLowerBound}}\label{sec:SortingLowerBound}
Suppose that we are given positive numbers  $b_1,\ldots,b_N$. 
We will construct a chain instance with $n=2N$ nodes
such that values $\lambda^-_{ij}$ will give the input numbers in a sorted
order. This will imply the claim since sorting requires at least $\Omega(n\log n)$ comparisons in the worst case~\cite{Cormen}.

The unary terms for nodes $i=1,\ldots,N$
nodes are given by 
$g_i(z)=\begin{cases}0 & \mbox{if } z\le b_i \\ 1 & \mbox{if } z> b_i \end{cases}$.
The unary terms for nodes $i=N+1,\ldots,n$ are zeros.
The weights for edges $(i,j)\in E$ are set as follows:
$w^-_{ij}=0$ for all edges,
$w^+_{ij}=N+1$ if $i<N$, and
$w^+_{ij}=2N-i-\frac{1}{2}$ if $i\ge N$.

Let us sort numbers $b_1,\ldots,b_N$ in the non-decreasing order, and denote
the resulting sequence as 
 $(c_1,\ldots,c_N)$.
It can be checked that $\lambda^+_{ij}=c_{2N-i}$ for $i=N,\ldots,2N-1$.

\section{Proof of Proposition~\ref{prop:nlogn}}\label{sec:proof:nlogn}

First note that each problem is reduced to solving a finite amount of subproblems of the same type (recall that both $V_0$ and $V_1$ are unions of chains). Let $P_1^0$ stand for the original problem and inductively for $i \geq 1$, let $P^i_1$, \dots, $P^i_{k_i}$ be all the (direct) subproblems of the problems $P^{i-1}_1$, \dots, $P^{i-1}_{k_{i-1}}$. Also, let $\lambda(P)$, $n(P)$ be the number of breakpoints and the number of nodes in subproblem $P$, respectively.
It follows by induction on $i$ that:
\begin{itemize}
\item $\sum_k \lambda(P^i_k) \leq |\Lambda|$ for each $i \geq 0$.
\item $\sum_k n(P^i_k) \leq n$ for each $i \geq 0$.
\item $\lambda(P^i_k) \leq |\Lambda|/2^i$ for each $i \geq 0$ and $k \geq 1$ and hence $i$ is  $O(\log n)$.
\end{itemize}

Since the complexity of dividing problem $P$ into subproblems is $O(\lambda(P)+n(P))$ as described in Section \ref{sec:convex} we now compute the total complexity as

\begin{eqnarray*}
\sum_i \sum_k O(\lambda(P^i_k)+n(P^i_k)) &\!\!\!=\!\!\!& \sum_i O\left(\sum_k \lambda(P^i_k)+n(P^i_k) \right) \\
&\!\!\!=\!\!\!& \sum_i O(n) \subset O(n \log n).
\end{eqnarray*}

\section{Proof of Proposition~\ref{prop:nloglogn}}\label{sec:proof:nloglogn}

We use the same strategy to compute the complexity of the recursive algorithm as in Section \ref{sec:proof:nlogn}. Let $P^0_1$ be the original problem and inductively for $i \geq 1$, let $P^i_1$, \dots, $P^i_{k_i}$ be all the (direct) subproblems of the problems $P^{i-1}_1$, \dots, $P^{i-1}_{k_{i-1}}$. Let $\lambda(P)$, $\calU(P)$, and $\calE(P)$ be the sizes of the sets $\Lambda$, $\calU$, and $\calE$ of problem $P$, respectively.

We claim that:
\begin{itemize}
\item $\sum_k \calU(P^i_k)$ is  $O(|\calU|)$ for each $i \geq 0$.
\item $\sum_k \calE(P^i_k)$ is $O(|\calE|)$ for each $i \geq 0$.
\item $\lambda(P^i_k) \leq |\Lambda|\cdot (3/4)^i$ for each $i \geq 0$ and $k \geq 1$ and hence $i$ is  $O(\log n)$.
\end{itemize}

Since the total number of added vertices (or cut edges) is at most $|\calU|$ (or $|\calE|$), the first two statements follow immediately from induction on $i$. For the third statement recall that the value $\lambda$ was chosen so that no more than $3/4$ breakpoints can end up in any of the sets $\Lambda^+ = \{ \lambda' \in \Lambda: \lambda' > \lambda\}$, $\Lambda^- = \{ \lambda' \in \Lambda: \lambda' < \lambda\}$. Since for each subproblem the set of breakpoints is a subset of one of $\Lambda^+$, $\Lambda^-$ the third statement follows again from induction on $i$.

Since the complexity of dividing problem $P$ into subproblems is $O((\calU(P)+\calE(P))\log m)$, we compute the entire complexity as
%\begin{align*}
%\sum_i \sum_k O((\calU(P^i_k)+\calE(P^i_k))\log m) &= \sum_i O\left(\left(\sum_k \calU(P^i_k)+\calE(P^i_k) \right) \log m\right) = \\
%\sum_i O((|\calU| + |\calE|)\log m)  &\subset O(n \log n \log n).
%\end{align*}
\begin{eqnarray*}
&&\sum_i \sum_k O((\calU(P^i_k)+\calE(P^i_k))\log m) \\
&&\quad = \sum_i O\left(\left(\sum_k \calU(P^i_k)+\calE(P^i_k) \right) \log m\right) \\
&&\quad = \sum_i O((|\calU| + |\calE|)\log m)  \subset O(n \log n \log n).
\end{eqnarray*}

%%%%%%%%%%%%%%%%%%%%%%%%%%%%%%%%%%%%%%%%%%%%%%%%%%%%%%%%%%%%%%%%%%%%%%%%%%%%%%%%%%%%%%%%%%
%%%%%%%%%%%%%%%%%%%%%%%%%%%%%%%%%%%%%%%%%%%%%%%%%%%%%%%%%%%%%%%%%%%%%%%%%%%%%%%%%%%%%%%%%%
%%%%%%%%%%%%%%%%%%%%%%%%%%%%%%%%%%%%%%%%%%%%%%%%%%%%%%%%%%%%%%%%%%%%%%%%%%%%%%%%%%%%%%%%%%
%%%%%%%%%%%%%%%%%%%%%%%%%%%%%%%%%%%%%%%%%%%%%%%%%%%%%%%%%%%%%%%%%%%%%%%%%%%%%%%%%%%%%%%%%%
%%%%%%%%%%%%%%%%%%%%%%%%%%%%%%%%%%%%%%%%%%%%%%%%%%%%%%%%%%%%%%%%%%%%%%%%%%%%%%%%%%%%%%%%%%
%%%%%%%%%%%%%%%%%%%%%%%%%%%%%%%%%%%%%%%%%%%%%%%%%%%%%%%%%%%%%%%%%%%%%%%%%%%%%%%%%%%%%%%%%%
%%%%%%%%%%%%%%%%%%%%%%%%%%%%%%%%%%%%%%%%%%%%%%%%%%%%%%%%%%%%%%%%%%%%%%%%%%%%%%%%%%%%%%%%%%
%%%%%%%%%%%%%%%%%%%%%%%%%%%%%%%%%%%%%%%%%%%%%%%%%%%%%%%%%%%%%%%%%%%%%%%%%%%%%%%%%%%%%%%%%%
%%%%%%%%%%%%%%%%%%%%%%%%%%%%%%%%%%%%%%%%%%%%%%%%%%%%%%%%%%%%%%%%%%%%%%%%%%%%%%%%%%%%%%%%%%

{
\bibliographystyle{plain}
\bibliography{lit}
}

\end{document}